\newtheorem{theorem}{Theorem}
\newtheorem{definition}[theorem]{Definition}
\newtheorem{lemma}[theorem]{Lemma}
\newtheorem{proposition}[theorem]{Proposition}
\newcommand\ignore[1]{}
\pgfplotsset{compat=1.16}
\def\R{\mathds{R}}
\def\N{\mathds{N}}
\DeclareMathOperator{\E}{E}
\DeclareMathOperator*{\argmax}{arg\,max}
\DeclareMathOperator{\poly}{poly}
\newcommand\ea[1]{(1+1)~EA$^{\textsc{j}+\textsc{r}}_{#1}$\xspace}
\def\jar{\mbox{\textsc{JumpAndRepair}}\xspace}
\def\jarvc{\mbox{\textsc{JumpAndRepairVC}}\xspace}
\def\jarfvst{\mbox{\textsc{JumpAndRepairFVST}}\xspace}
\def\jaroct{\mbox{\textsc{JumpAndRepairOCT}}\xspace}
\def\topo{\mbox{\textsc{Sort}}\xspace}
\tikzstyle{vertex}=[circle,minimum size=6mm,fill=white,draw]
\tikzstyle{smallvertex}=[vertex,minimum size=2mm]
\tikzstyle{arc}=[-{Latex[width=1.5mm]}]
\pgfplotsset{
    discard if/.style 2 args={
        x filter/.code={
            \edef\tempa{\thisrow{#1}}
            \edef\tempb{#2}
            \ifx\tempa\tempb
              \def\pgfmathresult{inf}
              \typeout{XXX}
            \fi
        }
    }
}
\title{Focused Jump-and-Repair Constraint Handling for Fixed-Parameter Tractable Graph Problems Closed Under Induced Subgraphs\footnote{A preliminary version
      of this paper appeared in the proceedings of FOGA~2021~\cite{BransonSutton2021jar}.}}
\author{Luke Branson}
\author{Andrew M.\ Sutton}
\affil{Department of Computer Science\\
   University of Minnesota Duluth}
 \date{}
\begin{document}
\maketitle

\begin{abstract}
   Repair operators are often used for constraint handling in constrained combinatorial optimization. We investigate the (1+1)~EA equipped with a tailored jump-and-repair operation that can be used to probabilistically repair infeasible offspring in graph problems. Instead of evolving candidate solutions to the entire graph, we expand the genotype to allow the (1+1)~EA to develop in parallel a feasible solution together with a growing subset of the instance (an induced subgraph). With this approach, we prove that the EA is able to probabilistically simulate an \emph{iterative compression} process used in classical fixed-parameter algorithmics to obtain a randomized FPT performance guarantee on three $\mathsf{NP}$-hard graph problems. For $k$-\textsc{VertexCover}, we prove that the (1+1) EA using focused jump-and-repair can find a $k$-vertex cover (if one exists) in $O(2^k n^2\log n)$ iterations in expectation. This leads to an exponential (in $k$) improvement over the best-known parameterized bound for evolutionary algorithms on \textsc{VertexCover}. For the $k$-\textsc{FeedbackVertexSet} problem in tournaments, we prove that the EA finds a feasible feedback set in $O(2^kk!n^2\log n)$ iterations in expectation, and for \textsc{OddCycleTransversal}, we prove the optimization time for the EA is $O(3^k k m n^2\log n)$. For the latter two problems, this constitutes the first parameterized result for any evolutionary algorithm. We discuss how to generalize the framework to other parameterized graph problems closed under induced subgraphs and report experimental results that illustrate the behavior of the algorithm on a concrete instance class.
\end{abstract}

\section{Introduction}
In many constrained combinatorial optimization problems, infeasible solutions are easy to correct by using a problem-tailored \emph{repair} operator~\cite{Coello2002survey}. Such an operator can be useful in the context of evolutionary computation, as an EA can be free to generate infeasible offspring which can promptly be made feasible again at comparatively low computational cost. Repair operators are one of the main approaches to handling infeasibility in constrained problems~\cite{Coello2002survey,ZydallisLamont2001repair,Salcedo_Sanz_2009}, even though they demand a degree of problem dependence.

In this paper we consider a simple evolutionary algorithm, the (1+1)~EA, equipped with an extra operation to probabilistically repair infeasible offspring. In particular, when mutation produces an infeasible offspring, we attempt to repair it by executing a small, focused ``jump'' in the space by deleting some elements of the solution and then calling a polynomial-time repair mechanism.

We apply this evolutionary algorithm to certain types of $\mathsf{NP}$-hard graph problems
that are closed under induced subgraphs, in particular,
$k$-\textsc{VertexCover} and $k$-\textsc{OddCycle\-Transversal} in undirected graphs and
$k$-\textsc{FeedbackVertexSet} in tournaments. Rather than considering
a candidate solution to the entire graph, we allow the EA to build up
both a solution and an induced subgraph in parallel. Using this
approach, we prove that the EA is able to probabilistically simulate
an \emph{iterative compression} process, which comes from classical
fixed-parameter algorithmics, in order to obtain a randomized FPT
performance guarantee on these three problems. For
$k$-\textsc{VertexCover}, we prove that the (1+1) EA using focused
jump-and-repair can find a $k$-vertex cover (if one exists) in
$O(2^k n^2\log n)$ iterations in expectation. When ignoring polynomial
factors, this leads to an exponential (in $k$) improvement over the
best-known parameterized bound for evolutionary algorithms on
\textsc{VertexCover} given by Kratsch and
Neumann~\cite{KratschNeumann2012}. Our presented EA also does not
require the optimization of multiple objectives. For the
$k$-\textsc{FeedbackVertexSet} problem in tournaments, we prove that the EA
finds a feasible feedback set in $O(2^kk!n^2\log n)$ iterations in
expectation, and for $k$-\textsc{OddCycleTransversal}, we prove the EA requires at most $O(3^k k m n^2 \log n)$ time. To our knowledge, this is the first run time analysis of an evolutionary algorithm on both \textsc{FeedbackVertexSet} and \textsc{OddCycleTransversal}.

\subsection{Background}
Techniques for constraint-handling in evolutionary algorithms include penalty functions, multi-objective optimization in both fitness and constraint space, special operators (such as random keys) and repair mechanisms~\cite{Coello2002survey}. Repair mechanisms take an infeasible solution produced by mutation or crossover and apply some kind of transformation to produce a new feasible solution.

Early work on repair mechanisms in the context of evolutionary computation attempted to reconcile the use of standard bit string representations for more restrictive combinatorial structures. For example, Hamiltonian cycles in a graph can be easily encoded as bit strings, but classical mutation and crossover operators are very unlikely to result in offspring that are legal Hamiltonian cycles. This was an obstacle for traditional genetic algorithms tasked to solve problems over permutations such as the Traveling Salesperson Problem (TSP), and one proposed expedient was to repair infeasible offspring by applying a greedy algorithm~\cite{Lidd1991tsp}. Repair-based crossover operators that work directly on permutation encodings have also been investigated~\cite{goldberg1985alleles,Muehlenbein1992parallel,GeneRepair2003}. For cardinality constraints on bit strings, the \emph{genetic fix} and \emph{restricted search} approaches ensure that offspring always have fixed Hamming weight after crossover~\cite{Salcedo_Sanz_2009}.

Repair techniques for minimum vertex cover were examined empirically for hierarchical Bayesian optimization (hBOA) and the simple genetic algorithm (SGA)~\cite{PelikanKH07}. The repair mechanism employed was a local search algorithm that transformed infeasible offspring into valid vertex covers. The authors found that these approaches (along with simulated annealing) discovered optimal vertex covers on Erd\H{o}s-R\'{e}nyi random graphs significantly faster than complete branch-and-bound search.

One drawback to repair mechanisms (including the one we present in
this paper) is that they are often problem-dependent, and require
domain-specific knowledge~\cite{Coello2002survey}. Of course, it
should be no surprise that extra domain knowledge can positively
influence the efficiency of an evolutionary algorithm. For example, in
the context of \textsc{VertexCover}, He, Yao and Li~\cite{HeYaoLi2005}
empirically demonstrated that allowing the mutation probability for
each bit to depend on the corresponding vertex degree resulted in a
significant performance gain over a pure black-box EA (i.e., access to
the instance only via the fitness function).  High-performance local
search algorithms such as NuMVC~\cite{DBLP:journals/jair/CaiSLS13} and
FastVC~\cite{DBLP:conf/ijcai/Cai15} also take into account this kind
of domain knowledge to iteratively discard vertices from a candidate
cover and probabilistically repair the resulting infeasible solution.

Rather than explicitly repairing infeasible covers, Khuri and B\"{a}ck designed a fitness function for the minimum vertex cover problem that incorporated a penalty term~\cite{Khuri94anevolutionary}. This fitness function guarantees that infeasible covers always have an inferior fitness to the worst feasible covers. Using this approach, the authors found that an evolutionary algorithm significantly outperforms the standard 2-approximation algorithm for vertex cover based on maximal matchings (often called \emph{vercov} in the EC community) both on
Erd\H{o}s-R\'{e}nyi random graphs and a class of structured graphs introduced by Papadimitriou and Steiglitz~\cite{PapadimitriouS82}.

This superior empirical performance of an EA on the minimum vertex cover problem prompted Oliveto, He and Yao to conduct a theoretical investigation~\cite{OlivetoHe2009vc}. They proved that when the maximum degree of the graph is bounded above by two, the (1+1)~EA can find the minimum vertex cover in expected time $O(n^4)$. However, the (1+1)~EA can easily get trapped on relatively simple instances. Friedrich et al.~\cite{FHNHW2007approx} proved that the expected time until the (1+1)~EA can find even a $(1-\epsilon)/\epsilon$ approximation on the complete bipartite graph $K_{\epsilon n, (1-\epsilon)n}$ is exponential. Oliveto, He and Yao~\cite{OlivetoHe2009vc} refined this picture slightly by identifying a tail bound: on $K_{\epsilon n, (1-\epsilon)n}$ (and the above-mentioned Papadimitriou-Steiglitz graphs), the running time is $O(n \log n)$ with asymptotically constant probability. This means that the pathological local optima contained in these instances can be overcome with a simple restart strategy.
However, the authors also prove that on
instances constructed as a chain of several copies of a complete
bipartite graph, the (1+1)~EA, even when equipped with a restart
strategy, cannot find a solution that lies within an approximation
factor slightly less than two. This constructive proof provides a
lower bound on the worst-case polynomial time approximation ratio for
the (1+1)~EA using vertex-based representations.
Minimum vertex-cover is likely hard to approximate below a
$2-\epsilon$ factor~\cite{KhotRegev2008Vertexcovermight}, and there
are currently no corresponding upper bounds on the approximation ratio
for the (1+1)~EA with vertex-based representations. An approximation
ratio of exactly two \emph{can} be guaranteed, however, using
edge-based representations~\cite{DBLP:conf/foga/JansenOZ13}.

Another method of constraint-handling for EAs is to incorporate the objective function and the constraint penalty into a multiobjective optimization problem~\cite{Coello2002survey}. This approach was applied to \textsc{VertexCover} by Friedrich et al.~\cite{FHNHW2007approx} who employed the fitness function $f\colon \{0,1\}^n \to \N^2 = x \mapsto (|x|,u(x))$ where $u(x)$ counts the number of edges in the graph that are not covered by the set selected by $x$. The goal is to simultaneously minimize both objectives. They proved that the multiobjective algorithm Global SEMO can optimize $K_{\epsilon n, (1-\epsilon)n}$ in $O(n^2 \log n)$ expected time.

This multiobjective optimization approach for \textsc{VertexCover} was also employed by Kratsch and Neumann~\cite{KratschNeumann2012} who presented the first fixed-parameter tractable evolutionary algorithm for a combinatorial optimization problem. Using the above biobjective fitness function along with a tailored mutation operator that concentrates mutation probability on endpoints of uncovered edges, they proved that Global SEMO has expected optimization time $O(OPT\cdot n^4 + n\cdot 2^{OPT^2+OPT})$ on any graph $G$ where $OPT$ is the size of the optimal vertex cover of $G$. When the count $u(x)$ of uncovered edges is replaced with the cost of an optimal fractional vertex cover (e.g., obtained via linear programming), the performance is improved to $O(n^2\log n + OPT\cdot n^2 + 4^{OPT}n)$.

\section{Focused Jump-and-Repair}
We consider parameterized graph problems in which we are given a (directed or undirected) graph $G = (V,E)$ and a natural number $k$, and the goal is to find a set $S \subseteq V$ with $|S| \le k$ such that $S$ is in some sense \emph{feasible} with respect to $G$. Examples of this kind of problem are when we insist each edge of $E$ is incident to at least one vertex in $S$ ($k$-\textsc{VertexCover}), that each vertex is adjacent to at least one vertex in $S$ ($k$-\textsc{DominatingSet}), or when the graph obtained by deleting $S$ is cycle-free ($k$-\textsc{FeedbackVertexSet}) or bipartite ($k$-\textsc{OddCycleTransversal}).

Given a subset $V' \subseteq V$, the \emph{induced subgraph} of $G$ with respect to $V'$, denoted $G[V']$, is the graph obtained from $G$ by including only vertices from $V'$ and edges from $E$ with both endpoints in $V'$. Note that feasible solutions of the above listed problems are closed under induced subgraphs, i.e., if $S$ is feasible with respect to $G$, then for any $V' \subseteq V$, $S \cap V'$ is feasible with respect to $G[V']$. The (open) neighborhood of a vertex $v \in V$ in $G$ is defined as the set $N(v) = \{u : (v,u) \in E\}$. The neighborhood of a set $S \subseteq V$ is denoted by $N(S) = \bigcup_{v \in S} N(v)$.

Evolutionary algorithms tasked with solving problems like \textsc{VertexCover} typically operate by representing a solution as a bitstring in $\{0,1\}^n$ where $|V|=n$, which selects elements from $V$ to include in $S$. The task is then to \emph{minimize} the Hamming weight of the bitstring (and hence the cardinality of the chosen set) subject to the constraint that it must be a feasible solution, e.g., a valid vertex cover. The hope is that the algorithm would eventually find a feasible solution of Hamming weight $k$.

In this paper, we will take a different perspective to parameterized graph problems by two new insights
\begin{enumerate}
   \item\label{item:1} We expand the search space to both candidate solutions $S$ and induced subgraphs of $G$.
   \item\label{item:2} We employ a focused jump-and-repair step that has the potential to efficiently repair an offspring made infeasible by mutation.
\end{enumerate}

Thus, instead of searching for $S$-sets in $G$ where $|S| \le k$, we design the EA to search for induced subgraphs of $G$ such that $|S| \le k$ and $S$ is \emph{already a feasible solution} for the subgraph. The fitness of a solution is the size of the induced subgraph. In this way, we systematically build up induced subgraphs together with already-feasible sets until the entire graph is recovered. We argue that this approach is useful on parameterized graph problems that are closed under induced subgraphs, such as the ones we investigate in this paper.

To characterize both a set $S$ and an induced subgraph of $G$, we define the search space over $\{0,1\}^{2n}$ so that a candidate length-$2n$ bit string $x$ has a length-$n$ prefix corresponding to vertices selected for the solution set and a length-$n$ suffix corresponding to vertices selected for the subgraph. It is convenient from an analytical perspective to factor such a string into an ordered pair $x \coloneqq (x_S,x_V)$ where $x_S$ corresponds to a candidate solution set in the induced subgraph $G[x_V]$\footnote{Here, and throughout the paper, we will often abuse notation by directly interpreting length-$n$ bitstrings as sets on $n$ elements and vice-versa.}. Ignoring vertices outside the induced subgraph, given a parameterized graph problem, a candidate string $x = (x_S,x_V)$ has two feasibility constraints:
\begin{description}
   \item[Solution constraint:] $x_S$ must be a valid solution for $G[x_V]$.
   \item[Cardinality constraint:] $|x_S | \le k$.
\end{description}
It is therefore convenient to use the following definitions.
\begin{definition}
  \label{def:feasibility}
   We say a string $x = (x_S,x_V)$ is \emph{solution feasible} for $G$
   when $x_S$ is a valid solution for the induced subgraph $G[x_V]$,
   that is, the vertex set $S = x_S \cap x_V$ is feasible in the
   induced subgraph. We say $x$ is \emph{cardinality feasible} when
   $|x_S| \le k$. We say a $x$ is \emph{infeasible} if at least one of these properties is violated.
\end{definition}

We define a general fitness function for a parameterized graph problems as follows:
\begin{equation}
   \label{eq:fitness}
   f_k(x) = \begin{cases}
      -(|x_S|+|x_V|) & \text{if $x$ is infeasible,} \\
      |x_V|          & \text{otherwise.}
   \end{cases}
\end{equation}
The function $f_k$ is designed to penalize infeasible solutions by attaining a negative value that pressures solutions toward smaller sets and graphs, while feasible solutions are rewarded for their size. We define the (1+1)~EA equipped with an additional \textsc{JumpAndRepair} operation as \ea{k} in Algorithm~\ref{alg:eak}.

\begin{algorithm}
   \SetKwInOut{Input}{input}
   \SetKwIF{Wp}{ElseIf}{Else}{with probability}{do}{}{else}{}
   \SetKwFor{RepeatFor}{repeat}{}{}
   \Input{A graph $G$ on $n$ vertices}
   \BlankLine
   choose $x$ from $\{0,1\}^{2n}$ uniformly at random\;   
   \While{$f_k(x) < n$}{      
      Obtain $y$ from $x$ by flipping each bit with probability $\frac{1}{2n}$\;
      \If{$y$ is solution feasible but not cardinality feasible}{
        $y'\gets \jar(y,G)$\label{li:ea-jar}\;
        $y \gets \argmax_{\{y,y'\}} f_k$\label{li:protect}\;
      }
      \lIf{$f_k(y) \ge f_k(x)$}{$x \gets y$}
   }
   \caption{\label{alg:eak} \protect\ea{k}}
\end{algorithm}

This outlines a rather general framework for applying an evolutionary algorithm to a parameterized graph problem. Apart from the fitness function, there is only one additional module that requires problem-specific knowledge to attempt to repair solutions. Moreover, there is also a just-in-time appeal to this approach, as at any point during the execution of the \ea{k}, a feasible solution is a valid $k$-solution to some induced subgraph.

It remains to define the focused jump-and-repair operation, which is the main innovation of the paper. We start by giving a general framework for the operation in Algorithm~\ref{alg:jump-and-repair}.

\begin{algorithm}
   \Input{A pair of strings $x = (x_S,x_V)$ and a graph $G$}
   \BlankLine
   \tcc{Jump}
   $S' \gets \emptyset$\;
   \For{$i \in \{ 1,\ldots,n : x_S[i] = 1 \}$}{
   \Wprob $1/2$ \Do $S' \gets S' \cup \{i\}$\;\label{li:jump}
   }
   \lIf{$S'$ is solution feasible for $G[x_V]$}{\Return $(x_{S'},x_V)$}
   \tcc{Repair}
   Find a minimal set $T \subseteq x_V \setminus x_S$ s.t. $S' \cup T$ is solution feasible for $G[x_V]$\;\label{li:repair}
   \Return $(x_{S' \cup T},x_V)$\;
   \caption{\label{alg:jump-and-repair} \jar}
\end{algorithm}

The jump-and-repair operation takes a solution-feasible offspring (which is not necessarily cardinality feasible) and executes a focused jump by selecting a random subset of the elements of $x_S$ (in line~\ref{li:jump}) and then calling a problem-specific repair procedure if the set of selected elements is not solution feasible (in line~\ref{li:repair}). A jump-and-repair operation can be successful by kicking out enough elements so that the repaired string is both cardinality and solution feasible as illustrated in Figure~\ref{fig:feasible}.

\begin{figure}
   \centering
   \begin{tikzpicture}[scale=0.85]
      \draw[fill=blue!20,fill opacity=0.2] plot[smooth cycle,tension=0.7]
      coordinates{(-4,2.5) (-3,3) (-2,2.8) (-0.8,2.5) (-0.5,1.5) (0.5,0) (0,-2)(-1.5,-2.5) (-4,-2) (-3.5,-0.5) (-5,1)};
      \draw[fill=red!20,fill opacity=0.2] plot[smooth cycle,tension=0.7]
      coordinates {(-1.2,1) (0,2.5) (0.8,1.7)  (2.5,1.9) (3.5,-1) (3,-2.5) (0,-2.2) (-1.5,-1.5) (-1.6,0)};

      \begin{scope}
         \clip plot[smooth cycle,tension=0.7]
         coordinates{(-4,2.5) (-3,3) (-2,2.8) (-0.8,2.5) (-0.5,1.5) (0.5,0) (0,-2)(-1.5,-2.5) (-4,-2) (-3.5,-0.5) (-5,1)};
         \draw[fill=yellow!20] plot[smooth cycle,tension=0.7]
         coordinates {(-1.2,1) (0,2.5) (0.8,1.7)  (2.5,1.9) (3.5,-1) (3,-2.5) (0,-2.2) (-1.5,-1.5) (-1.6,0)};
      \end{scope}
      \coordinate (solfeas) at (-3.6,1);
      \coordinate (feasible) at (-0.2,-1);
      \coordinate (crdfeas) at (2.2,1.4);

      \node[circle,inner sep=2pt,fill] (feasible-node) at (feasible) {};
      \node[circle,inner sep=2pt,fill,label={below:offspring}] (solfeas-node) at (solfeas) {~};
      \node[circle,inner sep=2pt,fill] (crdfeas-node) at (crdfeas) {~};

      \draw[-{Latex[width=2mm]},very thick,dashed] (solfeas-node) to[out=70,in=120] node[above,midway] {\it focused jump} (crdfeas-node);

      \draw[-{Latex[width=2mm]}] (crdfeas-node) to[out=-90,in=0] node[midway,below right] {\it repair} (feasible-node);

      \node[align=center,anchor=north] at (-2.5,-2.8) {\bf solution-feasible\\ \bf region};
      \node[align=center,anchor=north] at (1.8,-2.8) {\bf cardinality-feasible\\ \bf region};
      \node[align=center] at (-0.5,0) {\bf feasible\\\bf region};

   \end{tikzpicture}
   \caption[]{\label{fig:feasible} A successful jump-and-repair operation transforms a cardinality-infeasible offspring by focusing on a random subset of $x_S$ and applying a repair operation on the result to make it solution feasible again.}
\end{figure}
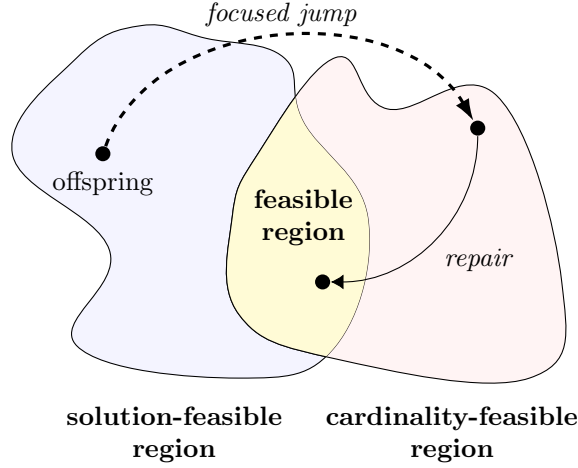

The jump operation is somewhat inspired by the \emph{alternative mutation} operator employed for Global SEMO on the minimum \textsc{VertexCover} problem~\cite{KratschNeumann2012} which flips, with probability $1/2$, all vertices incident to an uncovered edge. The goal is to focus the search on components of the solution that need attention. In the case of the \ea{k}, rather than focusing on elements that have not yet been included in the solution, our focused jump tries to find promising subsets of the solution which can then be repaired.

The actual repair operation must be problem-tailored, and we will later define explicit probabilistic repair procedures for different parameterized graph problems. We stress here that the repair procedure must run in time polynomial in $n$ for a meaningful FPT result (though, strictly speaking, it could more generally run in FPT time).

We will also make use of the following well-known result, stated here for completeness.
\begin{theorem}[Multiplicative Drift~\cite{DoerrJW2012,DoerrGoldberg2010}]
   \label{thm:multidrift}
   Let $(X_t)_{t \ge 0}$ be a sequence of nonnegative random variables with a finite state space $\mathcal{S} \subseteq \R^+_0$ such that $0 \in \mathcal{S}$. Let $s_{\rm min} \coloneqq \min(\mathcal{S}\setminus\{0\})$, let $T \coloneqq \inf\{t \ge 0 \mid X_t = 0\}$. If there exists $\delta > 0$ such that for all $s \in \mathcal{S} \setminus \{0\} \text{~and~} t \in \N$, $\E[X_t - X_{t+1} \mid X_t = s] \ge \delta s$, then
   \[
      \E[T] \le \frac{1 + \E[\ln (X_0/s_{\rm min})]}{\delta}.
   \]
   Moreover, for all $r \ge 0$, if $X_0 = s_0$,
   \[
      \Pr\left(T > \left\lceil \frac{r+\ln(s_0/s_{\rm min})}{\delta}\right\rceil\right) \le e^{-r}.
   \]
\end{theorem}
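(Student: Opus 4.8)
The plan is to reduce this multiplicative statement to an additive one by passing to a logarithmic potential, which is the standard device for handling multiplicative drift, and to prove the two conclusions by separate routes: a direct multiplicative estimate for the tail bound, and an additive-drift argument on the log-potential for the expectation. Writing $Z_t \coloneqq X_t/s_{\rm min}$, the hypothesis $\E[X_t - X_{t+1}\mid X_t = s]\ge \delta s$ is exactly the one-step contraction $\E[Z_{t+1}\mid X_t=s]\le (1-\delta)\,s/s_{\rm min}$ whenever the process is positive. Since $s_{\rm min}$ is the smallest nonzero state, $Z_t\ge 1$ on $\{X_t>0\}$ and $Z_t=0$ on $\{X_t=0\}$, so $T$ is the first time the process vanishes. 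Throughout I would work with the stopped process, treating $0$ as absorbing by replacing $X_t$ with $X_{t\wedge T}$; this is harmless because only the behaviour before the first hit of $0$ matters, and it is what lets me ignore what the chain does after absorption.

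For the tail bound I would argue multiplicatively and directly. The stopped process $\tilde X_t \coloneqq X_{t\wedge T}$ satisfies $\E[\tilde X_{t+1}\mid \mathcal F_t]\le (1-\delta)\tilde X_t$ in both cases (on $\{T>t\}$ the drift hypothesis applies, and on $\{T\le t\}$ both sides are $0$), so by induction $\E[\tilde X_t]\le (1-\delta)^t\,s_0$ when $X_0=s_0$. Markov's inequality then gives $\Pr(T>t)=\Pr(\tilde X_t\ge s_{\rm min})\le (1-\delta)^t s_0/s_{\rm min}\le e^{-\delta t}\,s_0/s_{\rm min}$, using $1-\delta\le e^{-\delta}$. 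Substituting $t=\lceil(r+\ln(s_0/s_{\rm min}))/\delta\rceil$ forces the exponent to be at most $-r$, which is precisely the claimed tail bound.

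For the expectation bound I would set up additive drift on the shifted logarithmic potential $Y_t \coloneqq 1+\ln(X_t/s_{\rm min})$ on $\{X_t>0\}$ and $Y_t\coloneqq 0$ on $\{X_t=0\}$. The shift by $1$ is deliberate: it guarantees $Y_t\ge 1$ whenever $X_t\ge s_{\rm min}$, so that $Y_t=0$ if and only if $X_t=0$ and the hitting time of $0$ for $Y$ coincides with $T$. The crucial step is the pointwise inequality $Y_{t+1}\le Y_t+(X_{t+1}-s)/s$, valid for every value of $X_{t+1}$: for $X_{t+1}\ge s_{\rm min}$ it is the tangent-line (concavity) bound $\ln(u/s_{\rm min})\le \ln(s/s_{\rm min})+(u-s)/s$, and — this is the only delicate case — for $X_{t+1}=0$ it reduces to $0\le Y_t-1=\ln(s/s_{\rm min})$, which holds because $s\ge s_{\rm min}$. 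Taking conditional expectations and inserting $\E[X_{t+1}\mid X_t=s]\le(1-\delta)s$ yields $\E[Y_{t+1}\mid X_t=s]\le Y_t-\delta$, i.e. additive drift at rate $\delta$. From here the additive drift theorem (equivalently, optional stopping applied to the supermartingale $Y_{t\wedge T}+\delta(t\wedge T)$ together with $Y\ge 0$) gives $\E[T]\le \E[Y_0]/\delta=(1+\E[\ln(X_0/s_{\rm min})])/\delta$.

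The main obstacle, and the place where a naive log-transform breaks, is the transition into the absorbing state $0$, where $\ln(X_{t+1}/s_{\rm min})$ diverges to $-\infty$, so a direct application of Jensen's inequality to the unshifted logarithm cannot control this jump. Both halves of the argument are engineered precisely to sidestep this: the stopped process neutralizes it for the tail bound, while the $+1$ shift, together with the observation that the jump-to-zero case of the pointwise inequality is implied by $Y_t\ge 1$, neutralizes it for the expectation bound. Everything else is routine.
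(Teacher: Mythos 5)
The paper does not actually prove this statement: it is quoted verbatim from the cited literature (``stated here for completeness''), so there is no in-paper proof to compare against. Your argument is a correct, self-contained proof, and it follows the standard route of the cited sources: for the tail bound, the exponential decay $\E[X_{t\wedge T}]\le(1-\delta)^t s_0$ plus Markov's inequality at level $s_{\rm min}$; for the expectation, additive drift on the shifted logarithmic potential $Y_t=1+\ln(X_t/s_{\rm min})$, with the $+1$ shift absorbing the jump into the absorbing state. The two delicate points are handled properly: the case $X_{t+1}=0$ of the pointwise inequality reduces to $\ln(s/s_{\rm min})\ge 0$, and the summable tail bound gives $\E[T]<\infty$, which licenses the optional-stopping step. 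The only cosmetic gap is that the theorem as stated conditions only on $X_t=s$ while your supermartingale/induction steps implicitly condition on the full history $\mathcal F_t$; this imprecision is inherited from the standard formulation of the theorem itself and is universally glossed over in the literature, so it does not affect the validity of your proof in context.
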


According to Equation~\eqref{eq:fitness}, any infeasible solution has a negative fitness, whereas every feasible solution has nonnegative fitness. This immediately yields the following lemma.
\begin{lemma}
   \label{lem:infeasible}
   If the underlying parameterized graph problem is closed under
   induced subgraphs, then the \ea{k} produces a feasible solution
   after $O(n \log n)$ steps in expectation. After this point, it does not accept infeasible solutions.
\end{lemma}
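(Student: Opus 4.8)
The plan is to prove the two assertions separately: first the invariance claim (that once a feasible string is found no infeasible string is ever accepted), which follows immediately from the sign structure of $f_k$, and then the $O(n\log n)$ expected hitting-time bound, which I would obtain by applying the multiplicative drift theorem (Theorem~\ref{thm:multidrift}) to the total Hamming weight of the current search point.

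For the invariance claim I would argue directly from Equation~\eqref{eq:fitness}. Every feasible string has fitness $|x_V|\ge 0$, whereas any infeasible string $y$ must carry weight $|y_S|+|y_V|\ge 1$ (a weight-$0$ string is the empty solution on the empty induced subgraph, which is feasible), so $f_k(y)\le -1<0$. Hence once the current point $x$ is feasible, any infeasible candidate $y$ has $f_k(y)<0\le f_k(x)$ and is discarded by the selection step. The only place needing care is the \jar branch: there the candidate passed to selection is $\argmax_{\{y,y'\}} f_k$, whose fitness is at least that of the mutated $y$, so this branch can only raise the candidate's fitness and can never let an infeasible point survive selection against a feasible incumbent.

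For the hitting-time bound I would use the potential $X_t = |x_S|+|x_V|$ (the number of ones in the current $2n$-bit string), adopting the convention $X_t\coloneqq 0$ as soon as a feasible point is reached; the invariance claim just established makes $0$ absorbing, and the minimum positive value is $s_{\min}=1$. The crucial estimate is that while $x$ is infeasible with weight $s$, flipping any single one-bit to zero and leaving the remaining $2n-1$ bits fixed is \emph{always} accepted: the offspring either remains infeasible with weight $s-1$, giving fitness $-(s-1)>-s=f_k(x)$, or becomes feasible, giving nonnegative fitness; in both cases it strictly beats $f_k(x)$. (The conditional \jar step again does not obstruct this, by the $\argmax$ protection.) Each of the $s$ such flips occurs with probability $\tfrac{1}{2n}\bigl(1-\tfrac{1}{2n}\bigr)^{2n-1}\ge \tfrac{1}{2en}$, and these events are pairwise disjoint, so $\E[X_t-X_{t+1}\mid X_t=s]\ge \tfrac{s}{2en}$; that is, the multiplicative drift condition holds with $\delta=\tfrac{1}{2en}$.

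I would conclude by invoking Theorem~\ref{thm:multidrift} with $s_{\min}=1$ and $X_0\le 2n$, which yields $\E[T]\le 2en\,(1+\ln(2n))=O(n\log n)$. I expect the only genuine subtlety—rather than a true obstacle—to be the bookkeeping around the \jar branch: one must check that replacing $y$ by $\argmax_{\{y,y'\}} f_k$ never lowers fitness, so that the elementary single-bit-deletion argument carries through verbatim and \jar can only accelerate the drift toward feasibility.
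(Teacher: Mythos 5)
Your proposal is correct and follows essentially the same route as the paper's proof: the paper also uses the potential $\max\{-f_k(x),0\}$ (which equals the total Hamming weight $|x_S|+|x_V|$ on infeasible points and $0$ on feasible ones), establishes drift $\E[X_t-X_{t+1}\mid X_t=s]\ge s/(2en)$ via single one-bit deletions with the same $\frac{1}{2n}(1-\frac{1}{2n})^{2n-1}\ge\frac{1}{2en}$ estimate, checks that the $\argmax$ in the \jar branch cannot increase the potential, and concludes with Theorem~\ref{thm:multidrift} and the sign argument for non-acceptance of infeasible points. Your explicit remark that an infeasible string must have positive weight (so its fitness is at most $-1$) is a small detail the paper leaves implicit, but otherwise the two arguments coincide.
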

\begin{proof}
  We consider the potential function
  \[
    \phi(x) \coloneqq \max\{-f_k(x),0\}
  \]
  and let $(X_t)_{t \ge 0}$ be the stochastic process corresponding to
  the potential of the solution generated by \ea{k} in the $t$-th
  iteration.
  The elitist nature of the \ea{k} ensures that this potential is
  nonincreasing, and since feasible points always attain nonnegative
  $f_k$-values, $\phi(x) = 0 \iff x \text{~is feasible}$. Thus it
  suffices to bound the drift $\E[X_t - X_{t+1}\mid X_t = s]$ and
  apply Theorem~\ref{thm:multidrift}.

  Assume that the point $x$ in the $t$-th iteration is infeasible. It
  follows that $X_t > 0$, and in fact, $X_t = |x|$ by
  Equation~\eqref{eq:fitness}. We argue that there is a reasonably
  good chance that the \ea{k} can reduce the potential by at least
  one. For any index $i$ such that $x[i] = 1$, the probability of
  producing an offspring $y$ by flipping only the $i$-th bit of $x$ to
  zero in the mutation step, leaving the remaining bits unchanged, is
  at least
    \[
      \frac{1}{2n}\left(1-\frac{1}{2n}\right)^{2n-1} \ge
      \frac{1}{2en}.
    \]
    After this, if the offspring $y$ happens to be
    solution feasible but not cardinality feasible, then \jar is
    called (line~\ref{li:ea-jar} of Algorithm~\ref{alg:eak}) to
    produce an intermediate point $y'$.

    We now argue that $y'$ is copied back to $y$ only if the potential
    of $y'$ is no larger than the potential of $y$.
    Note that $y'$ is guaranteed to be solution feasible, since the
    repair procedure always returns a cover.    
    In the case that $y'$ is still not cardinality feasible, then in
    line~\ref{li:protect}, $y'$ is copied back to $y$ only if
    $f_k(y') \ge f_k(y)$ (and hence $\phi(y') \le \phi(y) =
    X_t-1$).
    
    Otherwise, if $y'$ is also cardinality feasible, then it must be a
    feasible point. Since all feasible points attain nonnegative
    $f_k$-values, we have $f_k(y') > f_k(x)$ and $y'$ becomes the new
    offspring in line~\ref{li:protect}. We would then have
    $X_{t+1} = 0 < X_t$.
    In any case, under this mutation event, we have
    $X_{t+1} \le X_t - 1$, and the mutation event occurs with
    probability at least $1/(2en)$.
    
   Summing over all one-bits in $x$, we have $\E[X_t - X_{t+1} \mid X_t = s] \ge \frac{s}{2en}$, and applying Theorem~\ref{thm:multidrift}, the expected time until a feasible solution $z$ is first generated is $O(n \log n)$. At this point, since $f_k(z) \ge 0$ and every infeasible solution has a negative fitness, no infeasible solution is subsequently accepted.
\end{proof}

\subsection{Fixed-parameter tractable EAs}
The theory of parameterized complexity~\cite{DowneyFellows1999,Flum2006parameterized} allows the analysis of the running time of an algorithm to be decomposed into multiple parameters of the input. The motivation is that many large intractable problems can be solved in practice because real-world problem instances usually exhibit some kind of restriction over their structure. Parameterized complexity aims to distill the source of hardness in a problem class by isolating the superpolynomial contribution to the running time to a parameter independent of the problem size.

Formally, a parameterized problem is a language $L \subseteq \Sigma^{*}\times \N$ for a finite alphabet $\Sigma$. A problem $L$ is \emph{fixed-parameter tractable} if $(x,k) \in L$ can be decided in time $g(k) |x|^{O(1)}$ for some function $g$ that depends only on $k$. The complexity class of fixed-parameter tractable problems is $\mathsf{FPT}$. A problem $L$ is \emph{slice-wise polynomial} if $(x,k) \in L$ can be decided in time $|x|^{g(k)}$. The complexity class of slice-wise polynomial problems is denoted $\mathsf{XP}$. Note that for a problem in $\mathsf{FPT}$, each fixed parameter value determines (via $g$) the size of a \emph{leading constant} of a polynomial running time, whereas a slice-wise polynomial problem also has polynomial running time for each fixed-parameter value, but each fixed parameter value governs the \emph{degree} of the polynomial.

Applying parameterized complexity analysis to the run time analysis of evolutionary algorithms is useful when one would like to gain direct insight into how problem instance structure influences run time on $\mathsf{NP}$-hard problems~\cite{NeumannSutton2020,DBLP:journals/algorithmica/Sutton21}. For a randomized search heuristic, the \emph{optimization time} is characterized as a random variable $T$ that measures the number of fitness function evaluations until an optimal solution is first visited. With a suitable fitness function, it is possible to apply randomized search heuristics for optimization problems to decision problems as follows. An algorithm is a \emph{Monte~Carlo FPT~algorithm} for a parameterized problem $L$ if it accepts $(x,k) \in L$ with probability at least $1/2$ in time $g(k) |x|^{O(1)}$ and accepts $x \not \in L$ with probability zero. Any randomized search heuristic with a bound $\E[T] \leq g(k) |x|^{O(1)}$ on $L$ can be trivially transformed into a Monte~Carlo FPT~algorithm by stopping its execution after $2g(k) |x|^{O(1)}$ iterations. It is therefore convenient to say a randomized search heuristic runs in \emph{randomized FPT time} on a parameterized problem of size $n$ when $\E[T] \le g(k) n^{O(1)}$ (similarly for \emph{randomized XP time}).

\subsection{Iterative compression}
The technique of iterative compression was first presented by Reed, Smith and Vetta for finding odd cycle transversals in a graph~\cite{ReedSmithVetta2004transversals}. The main idea is to employ a \emph{compression routine} that takes as input a problem instance together with a solution $S$ and calculates a smaller solution or proves $S$ is already of minimum size.

The running time of the compression routine depends exponentially on the size $|S|$ of the solution to compress, so a good way to utilize it is to build a graph up one vertex at a time while always keeping the minimum possible solution. In other words, starting with an empty graph $V' = \emptyset$ and $S = \emptyset$, since the problems we consider are closed under induced subgraphs, $S$ would be a valid solution of $G[V']$. We then iterate over the vertex set $V$ adding each vertex $v$ to both $V'$ and $S$ (see Algorithm~\ref{alg:ic}). In each step, since $S$ would again be a solution for $G[V']$, we call the compression routine on $G[V']$ and $S$ to compute a smaller solution for $G[V']$, or certify none exists. Thus if the compression routine runs in $O(g(k)\cdot n^c)$ time for a constant $c > 0$, iterative compression correctly solves the parameterized problem in $O(g(k)\cdot n^{c+1})$ time.

\begin{algorithm}
   \SetKwInOut{Input}{input}
   \Input{A graph $G = (V,E)$}
   \BlankLine
   $V' \gets \emptyset$\;
   $S \gets \emptyset$\;
   \For{$v \in V$}{
      $V' \gets V' \cup \{v\}$\;
      $S \gets S \cup \{v\}$\;
      $S \gets \textsc{Compress}(G[V'],S)$\;
   }
   \Return $S$\;
   \caption{\label{alg:ic}\textsc{IterativeCompression}}
\end{algorithm}

The iterative compression technique motivates the \ea{k} framework presented in this paper. In the remainder of the paper, we will prove that the jump-and-repair approach enables the (1+1)~EA to simulate iterative compression, resulting in the solution of certain $\mathsf{NP}$-hard graph problems in randomized FPT time.

\section{$k$-Vertex Cover}
\label{sec:k-vertex-cover}
Given an undirected graph $G = (V,E)$ with $|V| = n$ vertices and $|E| = m$ edges, and a natural number $k$, the $k$-\textsc{VertexCover} problem is the problem of finding a set $S \subseteq V$ such that $|S| \le k$ and, for each $\{u,v\} \in E$, $\{u,v\} \cap S \ne \emptyset$, i.e., at least one endpoint of every edge in $E$ is in $S$. The set $S$ is called a vertex cover of $G$. Note that vertex covers are closed under induced subgraphs, so our framework is applicable here.

The following lemma shows that there is a compression routine for $k$-\textsc{VertexCover}.
\begin{lemma}[Compression for $k$-\textsc{VertexCover}]
   \label{lem:vc-ic}
   Suppose that $S$ is a vertex cover of a graph $G = (V,E)$ with $|S|
   > k$, and that $G$ has a $k$-vertex cover. Then a cover of size at most $k$ can be found by removing some subset $R \subseteq S$ and repairing each uncovered edge by adding $N(u)$ for each $u \in R$.
\end{lemma}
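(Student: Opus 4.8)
The plan is to exhibit a specific choice of $R$ by contrasting $S$ with a guaranteed optimal cover. Let $C$ be any vertex cover of $G$ with $|C| \le k$; such a cover exists by hypothesis. I would then set $R \coloneqq S \setminus C$, so that the vertices retained from $S$ are exactly $S \setminus R = S \cap C$, and argue that the repaired set
\[
   S' \coloneqq (S \setminus R) \cup \bigcup_{u \in R} N(u)
\]
is a vertex cover of size at most $k$.

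First I would verify that $S'$ is a valid cover for \emph{any} $R \subseteq S$, independent of the choice of $C$. Take any edge $\{a,b\} \in E$. Since $S$ covers $G$, at least one endpoint, say $a$, lies in $S$. If $a \in S \setminus R$ then $a \in S'$ and the edge is covered; otherwise $a \in R$, in which case $b \in N(a) \subseteq S'$, so the edge is again covered. This confirms that removing $R$ and adding back $\bigcup_{u \in R} N(u)$ always repairs every uncovered edge.

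The crux is the size bound, which is where the choice $R = S \setminus C$ is used. The retained part satisfies $S \setminus R = S \cap C \subseteq C$. For the added part, I would invoke the key observation that every vertex outside a vertex cover has its entire neighborhood inside that cover: if $u \in R = S \setminus C$, then $u \notin C$, and since $C$ covers every edge incident to $u$ while $u$ itself is unavailable, every neighbor of $u$ must belong to $C$, i.e.\ $N(u) \subseteq C$. Taking the union over $u \in R$ gives $\bigcup_{u \in R} N(u) \subseteq C$, and therefore $S' \subseteq C$. Consequently $|S'| \le |C| \le k$, as required.

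The main (and essentially only) obstacle is spotting the right certificate $R$ and the observation $N(u) \subseteq C$ for $u \notin C$; once these are in place the verification is routine. It is worth noting that the lemma adds the full neighborhood $N(u)$ rather than only the endpoints of edges left uncovered, but this over-inclusion is harmless precisely because the entire neighborhood already lies within the optimal cover $C$, so the size bound is unaffected.
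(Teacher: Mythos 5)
Your proof is correct and takes essentially the same route as the paper: both choose $R = S \setminus S^{*}$ for an optimal cover $S^{*}$ and observe that the retained vertices and the neighborhoods of the removed vertices all lie inside $S^{*}$. If anything, your write-up is slightly more careful, since the paper asserts the equality $N(R) \cup (S \setminus R) = S^{*}$ where only the inclusion $\subseteq$ holds (and is all that is needed).
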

\begin{proof}
   Assume there is a $k$-vertex cover $S^{*}$ in $G$ and let $R \coloneqq (S \setminus S^{*} )\subseteq S$. Suppose removing the vertices of $R$ leaves a set $F \subseteq E$ edges uncovered. Let $\{u,v\} \in F$ be an arbitrary uncovered edge, and w.l.o.g., assume $u \in R$. It follows that $v \in S^{*}$ since $S^{*}$ is a cover for $G$. Thus, $N(R) \cup (S \setminus R) = S^{*}$, and thus adding the vertices $N(u)$ for each $u \in R$ obtains a $k$-vertex cover for $G$.
\end{proof}

Using the result of Lemma~\ref{lem:vc-ic}, we are able to design a jump-and-repair operator for vertex covers that takes a set selected by $x_S$ and executes a focused jump by flipping each element in $\{i : x_S[i] = 1\}$ with probability $1/2$, hence choosing a random subset $S'$ of the set selected by $x_S$. If $S'$ is already a vertex cover, this is returned. Otherwise, the solution $S'$ is repaired to be solution feasible by covering all of the uncovered edges with the neighbors of the ``removed'' elements from $x_S \setminus S'$. The resulting set is guaranteed to be a vertex cover, which is then returned by the operator. The specific jump-and-repair for vertex cover is listed in Algorithm~\ref{alg:jump-and-repair-vc}.

\begin{algorithm}
   \Input{A pair of strings $x = (x_S,x_V)$ and a graph $G$}
   \BlankLine
   \tcc{Jump}
   $S' \gets \emptyset$\;
   \For{$i \in \{ 1,\ldots,n : x_S[i] = 1 \}$}{
   \Wprob $1/2$ \Do $S' \gets S' \cup \{i\}$\;
   }
   \lIf{$S'$ is a vertex cover for $G[x_V]$}{\Return $(x_{S'},x_V)$}
   \tcc{Repair}
   $T \gets \emptyset$\;
   \For{$v \in x_S \setminus S'$}{$T \gets T \cup N(v)$}
   \Return $(x_{S' \cup T},x_V)$\;
   \caption{\label{alg:jump-and-repair-vc} \jarvc}
\end{algorithm}

\begin{theorem}
   \label{thm:vc} Let $G = (V,E)$ be a graph with a $k$-vertex cover. Then the expected optimization time of the \ea{k} applied to $G$ is bounded by $O(2^k n^2\log n)$.
\end{theorem}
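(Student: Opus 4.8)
The plan is to reduce to the feasible regime via Lemma~\ref{lem:infeasible} and then run a multiplicative drift argument on the ``distance-to-go'' of the induced subgraph. First I would invoke Lemma~\ref{lem:infeasible}: after an expected $O(n\log n)$ steps the \ea{k} reaches a feasible point and never again accepts an infeasible one. From that moment the fitness of the current search point $x=(x_S,x_V)$ equals $|x_V|$, feasibility forces $|x_S|\le k$, and selection accepts a feasible offspring $y$ only when $|y_V|\ge|x_V|$; hence $|x_V|$ is nondecreasing and the optimum ($f_k=n$) is reached exactly when $|x_V|=n$, at which point $x_S$ is a $k$-vertex cover of all of $G$. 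I would therefore track the potential $Y_t \coloneqq n-|x_V|$ and bound its expected hitting time of $0$.

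The core is to exhibit, from any feasible $x$ with $Y_t=s>0$, a ``compression step'' that increases $|x_V|$ by one. Fix a vertex $v\notin x_V$. The productive mutation flips exactly the two bits corresponding to $v$ (adding $v$ to both $x_S$ and $x_V$); since $v$ is incident to every new edge of $G[x_V\cup\{v\}]$, the result is solution feasible. If $|x_S|<k$ this offspring is also cardinality feasible and is accepted directly. The interesting case is $|x_S|=k$, where the offspring $y$ has $|y_S|=k+1$ and is solution- but not cardinality-feasible, so \jarvc is invoked. Because vertex covers are closed under induced subgraphs, $G[y_V]$ has a cover $S^{*}$ of size at most $k$ (the restriction of a global $k$-cover). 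I would condition on the jump keeping exactly $S'=y_S\cap S^{*}$ and dropping $R=y_S\setminus S^{*}$; as each of the $k+1$ elements of $y_S$ is retained or discarded correctly with probability $1/2$, this event has probability $2^{-(k+1)}$. Lemma~\ref{lem:vc-ic} then guarantees the repaired set $S'\cup T$ is a valid cover of size at most $k$; concretely, every neighbor within $y_V$ of a dropped vertex $u\in R$ lies in $S^{*}$, since $u\notin S^{*}$ forces any edge $\{u,z\}$ of $G[y_V]$ to be covered by $z\in S^{*}$, whence $T\subseteq S^{*}$ and $S'\cup T\subseteq S^{*}$. Thus $y'$ is feasible with $|y'_V|=|x_V|+1$, so $f_k(y')=|x_V|+1>f_k(x)$ while the pre-repair $y$ has negative fitness, and both the $\argmax$ in line~\ref{li:protect} and the subsequent selection accept $y'$.

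To turn this into a drift bound I would lower-bound the per-step progress probability. The productive mutation for a fixed $v$ flips the two designated bits and no others with probability $(1/2n)^2(1-1/2n)^{2n-2}=\Omega(1/n^2)$, and it may be applied to any of the $s=n-|x_V|$ vertices outside $x_V$; multiplying by the worst-case $2^{-(k+1)}$ jump-success probability (the easier case $|x_S|<k$ needs no jump and only improves the bound) gives $\Pr[Y_{t+1}\le Y_t-1\mid Y_t=s]\ge s\cdot\Omega(1/(n^2 2^{k}))$. Hence $\E[Y_t-Y_{t+1}\mid Y_t=s]\ge \delta s$ with $\delta=\Omega(1/(n^2 2^{k}))$, and since $Y_t$ is a nonincreasing integer-valued process with $Y_0\le n$ and $s_{\min}=1$, Theorem~\ref{thm:multidrift} yields $\E[T]\le (1+\ln n)/\delta = O(2^{k}n^2\log n)$. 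Adding the $O(n\log n)$ cost of reaching feasibility gives the claimed bound.

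I expect the main obstacle to be the size-control argument rather than the drift calculation: one must argue that although \jarvc blindly adds the \emph{entire} (restricted) neighborhood of each dropped vertex, the repaired cover still never exceeds $k$. This hinges on aligning the random jump with an optimal cover $S^{*}$ of the current induced subgraph and on the observation that the neighborhoods of the dropped vertices are forced inside $S^{*}$ --- precisely the fact that makes the \ea{k} a faithful randomized simulation of a single iterative-compression step. A secondary technical point I would address is the possibility that $x_S\not\subseteq x_V$ (useless prefix bits outside the subgraph); I would note that such bits lie outside $y_V$, are therefore excluded from $S^{*}$ and discarded by the good jump, so the argument goes through unchanged.
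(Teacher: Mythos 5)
Your proposal is correct and follows essentially the same route as the paper's proof: reduce to the feasible phase via Lemma~\ref{lem:infeasible}, track the potential $n-|x_V|$ (which coincides with the paper's $n-f_k(x)$ on feasible points), use the two-bit mutation event with probability $\Omega(1/n^2)$, align the jump with an optimal cover of the induced subgraph to get the $2^{-(k+1)}$ success probability via Lemma~\ref{lem:vc-ic}, and conclude with multiplicative drift. Your inline justification that $T\subseteq S^{*}$ and your remark about prefix bits outside $x_V$ are nice explicit touches, but they do not change the argument.
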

\begin{proof}
  By Lemma~\ref{lem:infeasible}, a feasible solution is generated
  after $O(n \log n)$ iterations in expectation and infeasible
  solutions are not accepted thereafter. It remains to bound the time
  spent on feasible solutions. We consider the potential function
  $\phi(x) = n - f_k(x)$ and consider the stochastic process
  $(X_t)_{t \ge 0}$ on $\{0,\ldots,n\}$ corresponding to the potential
  $\phi$ of the solution generated by the execution of the \ea{k}
  after the first feasible solution is found.

  Let $x = (x_S,x_V)$ be the solution in the $t$-th feasible iteration
  of \ea{k}. Since we assume $x$ is feasible, $|x_S| \le k$ and
  corresponds to a cover on $G[x_V]$. Denote by $\mathcal{E}_i$ the
  event that mutation changes $x_V[i]$ from 0 to 1 and that, after
  mutation, $x_S[i] = 1$. We argue that
  $\Pr(\mathcal{E}_i) \ge (4e n^{2})^{-1}$.
  In particular, we pessimistically assume that $x_S[i] = 0$, and thus
  $x_V[i]$ and $x_S[i]$ both must flip under mutation. Mutation
  changes only these bits with probability $\frac{1}{4n^2}\left(1 - \frac{1}{2n}\right)^{2n-2} \ge
  \frac{1}{4en^2}$, and this is sufficient for event $\mathcal{E}_i$.

  Conditioning on $\mathcal{E}_i$, the intermediate offspring
  corresponds to a new graph $G[x_{V} \cup \{i\}]$ and
  $x_{S} \cup \{i\}$ must be a cover, since only vertex $i$ was
  introduced, and all edges in $G[x_{V} \cup \{i\}]$ incident to $i$
  are covered as $i$ is in the set corresponding to
  $x_{S} \cup \{i\}$. Therefore, this intermediate string is solution
  feasible (but not necessarily cardinality feasible).
  
  If the intermediate offspring is not
  cardinality feasible, this triggers a call to \jarvc in
  line~\ref{li:ea-jar} of Algorithm~\ref{alg:jump-and-repair}.
  If \jarvc also fails to produce a cardinality feasible solution,
  then we may discard the event $\mathcal{E}_i$.  Otherwise, \jarvc
  returns a vertex cover of size at most $k$ of the selected subgraph,
  and this point would be fitter than any cardinality-infeasible
  solution so it is accepted as $y$ in line~\ref{li:protect} of
  Algorithm~\ref{alg:jump-and-repair}.
  
   Let $\mathcal{J}$ denote the event that the subsequent call to
   \jarvc results in an offspring $y$ that is a valid $k$-vertex cover
   of $G[y_{V}]$. We seek to bound the probability of
   $\mathcal{J}$. Since $x$ is feasible, then $|x_S| \le k$, and it
   holds that $|x_{S} \cup \{i\}| \le k+1$. By Lemma~\ref{lem:vc-ic},
   there is a $k$-vertex cover that can be obtained by removing a
   subset of elements in $x_S$ and ensuring that any uncovered edges
   are covered by neighbors of $x_S$. \jarvc selects exactly this
   subset with probability $2^{-|S|}$ and so we have
   $\Pr(\mathcal{J}\mid \mathcal{E}_i) \ge 2^{-|S|} \ge 2^{-(k+1)}$.

   We now argue that the joint occurrence of events $\mathcal{E}_i$ and $\mathcal{J}$ results in a feasible offspring $y$ with a strictly larger fitness value. In particular, $|x_{V} \cup \{i\}| = |x_V| + 1$ and so $f(y) = f(x) + 1$ since $\mathcal{J}$ ensures that $y$ is feasible. Therefore, the drift conditioned on $\mathcal{J} \cap \mathcal{E}_i$ is 1. We can bound the total drift as follows.
   \begin{align*}
      \E[X_{t}-X_{t+1}\mid X_t = s] & \ge \sum_{i : x_V[i] = 0} \Pr(\mathcal{J}\cap\mathcal{E}_i)
      = \sum_{i : x_V[i] = 0} \Pr(\mathcal{J} \mid \mathcal{E}_i)\Pr(\mathcal{E}_i)                                           \\
                                    & \ge \frac{|\{i : x_V[i]=0\}|}{4e\cdot 2^{k+1}n^2}  = \Omega{\left(\frac{s}{2^kn^2}\right)}.
   \end{align*}
   Applying Theorem~\ref{thm:multidrift} completes the proof.
\end{proof}

It follows that the \ea{k} can be characterized as a Monte Carlo FPT algorithm for the $k$-\textsc{VertexCover} problem on graphs. This also allows for one to develop a strategy to finding the \emph{minimum} vertex cover of a graph in FPT time. In particular, consider the restart strategy for the \ea{k} listed in Algorithm~\ref{alg:restart-eak}.

\begin{algorithm}
   \caption{\label{alg:restart-eak} Restart framework}
   \SetKwInOut{Input}{input}
   \Input{A graph $G = (V,E)$}
   \BlankLine
   $k \gets 1$\;
   \While{a vertex cover for $G$ has not been found}{
      Run the \ea{k} on $G$ for $13e^22^kn^2\ln n$ steps\;
      $k \gets k+1$\;
   }
\end{algorithm}

\begin{theorem}
   \label{thm:restart-eak}
   Let $G = (V,E)$ be a graph with an optimal vertex cover of size $OPT$. Then with probability $1 - o(1)$, the restart framework of the \ea{k} in Algorithm~\ref{alg:restart-eak} finds an optimal vertex cover for $G$ within $O(2^{OPT}n^2\log n)$ function calls.
\end{theorem}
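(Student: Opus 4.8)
The plan is to decompose the execution of Algorithm~\ref{alg:restart-eak} into its successive phases, indexed by the parameter value $k = 1, 2, \ldots$, and to argue that the phase $k = OPT$ is the first, and (with high probability) the last, phase that can possibly succeed. First I would observe that for every $k < OPT$ the graph $G$ has no $k$-vertex cover, while the termination criterion of \ea{k} is $f_k(x) = n$, which requires $x_S$ to be a $k$-cover of the \emph{entire} graph $G = G[x_V]$ with $|x_V| = n$. Since no such cover exists, these early phases can never report success: they simply consume their full budget of $13 e^2 2^k n^2 \ln n$ iterations and the restart loop advances. Consequently there is no failure probability to control for $k < OPT$; the only way the framework can fail to return an optimal cover within the claimed budget is if phase $OPT$ itself fails to find an $OPT$-cover within its allotment.

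Next I would account for the deterministic cost. The total number of function calls expended across phases $k = 1, \ldots, OPT$ is the geometric sum $\sum_{k=1}^{OPT} 13 e^2 2^k n^2 \ln n = 13 e^2 n^2 \ln n\,(2^{OPT+1} - 2) = O(2^{OPT} n^2 \log n)$. Thus it suffices to show that, with probability $1 - o(1)$, phase $OPT$ discovers an $OPT$-cover of $G$ within its own budget of $13 e^2 2^{OPT} n^2 \ln n$ steps; combined with the deterministic bound above, this yields the theorem.

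The heart of the argument is therefore a tail bound on the optimization time of a single fresh run of \ea{OPT}, which I would obtain from the tail statement of Theorem~\ref{thm:multidrift} applied to the two stages already analyzed. In the first stage the process reaches feasibility: reusing the potential and drift $\delta_1 = 1/(2en)$ from the proof of Lemma~\ref{lem:infeasible} (with $s_0 \le 2n$ and $s_{\min} = 1$) and taking the slack parameter $r = \ln n$, the tail bound gives feasibility within $O(n \log n)$ steps except with probability at most $1/n$. In the second stage the graph is grown to completion: reusing the potential $\phi(x) = n - f_k(x)$ and the drift $\delta_2 = \Omega(1/(2^{OPT} n^2))$ established in the proof of Theorem~\ref{thm:vc} (with $s_0 \le n$ and $s_{\min} = 1$), the same tail bound with $r = \ln n$ shows the run finishes within roughly $16 e \cdot 2^{OPT} n^2 \ln n$ further steps except with probability at most $1/n$. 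A union bound then caps the total time of phase $OPT$ by $16 e \cdot 2^{OPT} n^2 \ln n + O(n \log n)$ except with probability $2/n = o(1)$.

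Finally I would verify that the budget constant genuinely dominates this high-probability bound: since $16 e < 13 e^2$ and the $O(n \log n)$ term is of strictly lower order than $2^{OPT} n^2 \log n$, for all sufficiently large $n$ the sum $16 e \cdot 2^{OPT} n^2 \ln n + O(n \log n)$ is at most $13 e^2 2^{OPT} n^2 \ln n$, so phase $OPT$ completes within budget with probability $1 - o(1)$. The hard part will be precisely this bookkeeping: pinning down the drift constant from Theorem~\ref{thm:vc} (it is $\delta_2 = 1/(8e\,2^{OPT} n^2)$ after substituting $s = |\{i : x_V[i]=0\}|$ for the potential), choosing $r$ so that both stage failure probabilities are simultaneously $o(1)$ while the resulting running-time bound still fits under the constant $13 e^2$, and confirming that the lower-order feasibility-search cost does not erode that margin.
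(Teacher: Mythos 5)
Your proposal is correct and follows essentially the same route as the paper: only the phase $k=OPT$ matters, the budgets over all phases sum geometrically to $O(2^{OPT}n^2\log n)$, and the success probability of that phase is bounded by applying the tail bound of Theorem~\ref{thm:multidrift} separately to the infeasible stage (drift from Lemma~\ref{lem:infeasible}) and the feasible stage (drift $1/(8e\cdot 2^{OPT}n^2)$ from Theorem~\ref{thm:vc}), followed by a union bound. Your slightly different choice of slack ($r=\ln n$ in both stages, giving failure probability $2/n$ and constant $16e<13e^2$) is a cosmetic variation, and in fact your bookkeeping against the budget $13e^2 2^k n^2\ln n$ is consistent with Algorithm~\ref{alg:restart-eak} as stated, whereas the paper's proof text works with $13e$.
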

\begin{proof}
   Since there are no vertex covers for $G$ of size $k < OPT$, Algorithm~\ref{alg:restart-eak} is successful when the run in which $k=OPT$ finds a vertex cover for $G$. Thus, it suffices to bound the probability of success in this run. Observe that this is equivalent to the probability that an unbounded run would be successful before the cutoff time. Let $T$ denote the random variable that measures the optimization time on an unbounded run of \ea{k} with $k=OPT$, and let $T = T_1 + T_2$ where $T_1$ measures the steps spent on infeasible solutions and $T_2$ the steps spent on feasible solutions.
   We thus seek to bound $\Pr(T_1 + T_2 < 13 e\cdot 2^{OPT} n^2 \ln n)$.

   In the proof of Theorem~\ref{thm:vc}, we have bounded the drift
   factor in the feasible phase from below as $\delta \ge 1/(4
   e\cdot 2^{k+1}n^2) = 1/(8 e\cdot 2^k n^2)$. Thus, setting $r =
   \frac{1}{2}\ln n$ in the tail bound of
   Theorem~\ref{thm:multidrift}, we get
   \[
     \Pr(T_2 > 12e\cdot 2^{OPT}n^2\ln n) < e^{-r} = \frac{1}{\sqrt{n}}.
   \]
   Similarly, the drift factor in the infeasible phase, obtained in
   the proof of Lemma~\ref{lem:infeasible}, is $\delta = \Omega(1/n)$,
   and so
   \[\Pr(T_1 > e \cdot 2^{OPT}n^2\ln n) < e^{-\Omega(n)},
   \]
   and it follows that
   \[\Pr(T_1 + T_2 < 13e\cdot 2^{OPT} n^2 \ln n) = 1 - o(1).
     \]

   If Algorithm~\ref{alg:restart-eak} is successful in its $k$-th run where $k=OPT$, it spends only a constant fraction of time on too-small vertex covers, and the total optimization time is
   \[
      \sum_{k=1}^{OPT} 13 e\cdot 2^kn^2 \ln n = O(2^{OPT}n^2\log n),
   \]
   which completes the proof.
\end{proof}

Ignoring polynomial factors, this constitutes an upper bound that is smaller by an exponential factor (in $OPT$) than the bound presented for Global SEMO solving minimum \textsc{VertexCover}, which requires
$O(n^2\log n + OPT\cdot n^2 + 4^{OPT}n)$ fitness evaluations in expectation~\cite{KratschNeumann2012}.

\section{Feedback Vertex Sets in Tournaments}
\label{sec:feedback-vertex-sets}
A \emph{tournament} is an orientation of a complete graph, that is, a directed graph $G = (V,E)$ such that for every distinct $u,v \in V$, either $(u,v) \in E$ or $(v,u) \in E$. A tournament $G$ is \emph{transitive} when
\[
   (u,v),(v,w) \in E \implies (u,w) \in E.
\]
These structures, illustrated in Figure~\ref{fig:tournament}, have important applications, for example in social choice and voting theory~\cite{mcgarvey1953}. A tournament is transitive if and only if it contains no directed cycles. Moreover, all cycles contain a directed triangle, yielding the following basic proposition.
\begin{proposition}
   \label{prp:triangle}
   A tournament $G$ is transitive if and only if it contains no directed triangles.
\end{proposition}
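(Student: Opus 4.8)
The plan is to prove the logically equivalent statement that $G$ fails to be transitive if and only if it contains a directed triangle, which is exactly the contrapositive of the proposition in both directions at once. This reduces the whole claim to a single local argument on three vertices and sidesteps any appeal to the stronger (and more involved) fact, mentioned in the text, that every directed cycle in a tournament contains a directed triangle.

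First I would show that a directed triangle forces non-transitivity. Suppose $G$ contains a directed triangle on vertices $u, v, w$ with $(u,v), (v,w), (w,u) \in E$. Since $G$ is a tournament, between the distinct vertices $u$ and $w$ exactly one of $(u,w)$, $(w,u)$ is present; as $(w,u) \in E$ we conclude $(u,w) \notin E$. But then $(u,v), (v,w) \in E$ while $(u,w) \notin E$, directly contradicting the defining implication of transitivity. Hence no transitive tournament can contain a directed triangle.

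For the converse I would argue that a failure of transitivity yields a directed triangle. If $G$ is not transitive, negating the defining implication gives vertices $u, v, w$ with $(u,v), (v,w) \in E$ but $(u,w) \notin E$. The tournament structure guarantees these are pairwise distinct: $u \ne v$ and $v \ne w$ because the endpoints of an edge are distinct, and $u \ne w$ because otherwise $(u,v)$ and $(v,w) = (v,u)$ would be antiparallel edges between $u$ and $v$. Since $u \ne w$ and $(u,w) \notin E$, the tournament property forces $(w,u) \in E$, so $u \to v \to w \to u$ is a directed triangle. Combining the two implications gives the biconditional, and hence the proposition.

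I do not expect a genuine obstacle here, as the statement is essentially a one-step unwinding of the definitions. The only point demanding care is the repeated invocation of the defining tournament property—that between any two distinct vertices exactly one directed edge exists—both to infer $(w,u) \in E$ from $(u,w) \notin E$ and to verify that the three vertices arising from a non-transitive triple are genuinely distinct, so that they actually span a triangle rather than degenerating to a multi-edge.
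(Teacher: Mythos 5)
Your proof is correct and follows essentially the same route as the paper's: contrapose both directions, note that a directed triangle violates the transitivity implication, and in the converse use the tournament property to convert a non-transitive triple $(u,v),(v,w)\in E$, $(u,w)\notin E$ into the triangle $u\to v\to w\to u$. The only difference is that you spell out the distinctness of $u,v,w$ and the first direction in more detail, which the paper leaves implicit.
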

\begin{proof}
   If $G$ contains a directed triangle, then it is not transitive. If $G$ is not transitive, it must contain some $u,v,w \in V$ such that $(u,v), (v,w) \in E$ but $(u,w) \not \in E$ (otherwise it is transitive). Since $G$ is a tournament, it follows that $(w,u) \in E$ so it contains the directed triangle $u \to v \to w \to u$.
\end{proof}

\begin{figure}
   \centering
   \begin{tikzpicture}
      \foreach \v in {1,...,8} {
            \node[vertex] (\v) at ({360/8 * (\v+1)}:2cm) {};
         }

      \draw[arc] (1) edge (5);
      \draw[arc,bend right=10] (1) edge (2);
      \draw[arc,bend left=10] (1) edge (3);
      \draw[arc,bend left=10] (1) edge (8);
      \draw[arc,bend right=10] (1) edge (6);
      \draw[arc,bend left=10] (1) edge (4);
      \draw[arc,bend right=10] (1) edge (7);

      \draw[arc,bend right=10] (5) edge (2);
      \draw[arc,bend left=10] (5) edge (3);
      \draw[arc,bend left=10] (5) edge (8);
      \draw[arc,bend right=10] (5) edge (6);
      \draw[arc,bend left=10] (5) edge (4);
      \draw[arc,bend right=10] (5) edge (7);

      \draw[arc,bend right=10] (2) edge (3);
      \draw[arc,bend right=10] (2) edge (8);
      \draw[arc] (2) edge (6);
      \draw[arc,bend left=10] (2) edge (4);
      \draw[arc,bend right=10] (2) edge (7);

      \draw[arc,bend right=10] (3) edge (8);
      \draw[arc,bend left=10] (3) edge (6);
      \draw[arc,bend right=10] (3) edge (4);
      \draw[arc] (3) edge (7);

      \draw[arc,bend right=10] (8) edge (6);
      \draw[arc] (8) edge (4);
      \draw[arc,bend left=10] (8) edge (7);

      \draw[arc,bend right=10] (6) edge (4);
      \draw[arc,bend right=10] (6) edge (7);

      \draw[arc,bend left=10] (4) edge (7);

   \end{tikzpicture}

   \caption[]{\label{fig:tournament} A transitive tournament on 8 vertices.}
\end{figure}
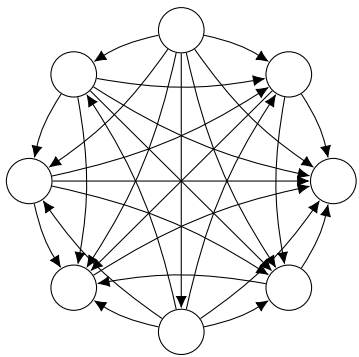

A \emph{feedback vertex set} is a set $S \subseteq V$ that intersects every directed cycle of $G$ (and thus $G - S$ is transitive). Finding a feedback vertex set of minimum size corresponds to the problem of making an antisymmetric relation transitive by eliminating the smallest possible number of elements. The problem is $\mathsf{NP}$-hard~\cite{Speckenmeyer1990feedback,CaiDengZang2001fvst}, however the parameterized version is in $\mathsf{FPT}$~\cite{DomGHNTfptfeedback}. In the parameterized version of the problem, $k$-\textsc{FVST}, we are given a tournament $G$, and the task is to find a feedback vertex set of size at most $k$.

Every transitive tournament has a unique topological sort by starting from the vertex with zero in-degree, removing that vertex and recursively sorting the remainder of the graph. Thus we have the following proposition.
\begin{proposition}
   \label{prp:topo-sort}
   If $G = (V,E)$ is a tournament and $S \subseteq V$ is a feedback vertex set of $G$, then there is a unique sequence on $V\setminus S$
      $\topo(G,S) \coloneqq (v_1,v_2,\ldots,v_{|V\setminus S|})$,
   where $(v_i,v_j) \in E \iff i < j$.
\end{proposition}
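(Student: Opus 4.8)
The plan is to reduce the statement to the elementary fact that a finite strict total order has a unique increasing enumeration. First I would observe that $G[V\setminus S]$ is itself a tournament, since an induced subgraph of a tournament is again a tournament. Because $S$ is a feedback vertex set, it meets every directed cycle of $G$, so $G[V\setminus S]$ contains no directed cycle and in particular no directed triangle; by Proposition~\ref{prp:triangle} this induced tournament is transitive.

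Next I would package the arc directions as an order relation. Define $\prec$ on $V\setminus S$ by $u \prec v \iff (u,v)\in E$ and verify that it is a strict total order. It is irreflexive because the tournament has no loops; for distinct $u,v$ exactly one of $(u,v)\in E$ and $(v,u)\in E$ holds by the tournament property, which gives both totality (any two vertices are comparable) and antisymmetry; and transitivity of $\prec$ is precisely the transitivity of $G[V\setminus S]$ established in the previous step.

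With $\prec$ a strict total order on the finite set $V\setminus S$, I would conclude by the standard fact that such an order admits a unique increasing enumeration $v_1 \prec v_2 \prec \cdots \prec v_m$ with $m=|V\setminus S|$. Setting $\topo(G,S) \coloneqq (v_1,\ldots,v_m)$, we obtain $i<j \iff v_i \prec v_j \iff (v_i,v_j)\in E$, which is exactly the claimed property. Uniqueness of the sequence is inherited from uniqueness of the increasing enumeration: any valid sequence must list the vertices in $\prec$-increasing order, and since $\prec$ compares every pair (there are no incomparable vertices), the relative position of any two vertices is forced by the single arc between them.

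The argument is essentially routine order theory once the setup is in place, so I do not anticipate a serious obstacle. The one point that deserves care is the passage from ``transitive tournament'' to ``strict total order'': transitivity of the tournament is exactly what upgrades the trichotomous comparability relation of a general tournament into a genuine transitive order, so this is where Proposition~\ref{prp:triangle}, and hence the feedback-vertex-set hypothesis, does the real work. An alternative and equivalent route would be to induct on $|V\setminus S|$ by extracting the unique source $v_1$ (the vertex of in-degree zero, which exists and is unique precisely because $\prec$ has a unique minimum) and recursing on $G[V\setminus(S\cup\{v_1\})]$, matching the informal description preceding the proposition; but the order-theoretic framing makes both existence and uniqueness transparent in a single step.
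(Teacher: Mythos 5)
Your proof is correct. The paper offers no formal proof of this proposition, only the one-sentence remark preceding it (recursively extract the unique in-degree-zero vertex of the transitive tournament $G[V\setminus S]$); your order-theoretic formalization---transitive tournament as a strict total order with a unique increasing enumeration---is a faithful and equivalent rendering of that same idea, and you correctly identify Proposition~\ref{prp:triangle} together with the feedback-vertex-set hypothesis as the point where transitivity, and hence totality of the order, is established.
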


Similar to $k$-\textsc{VertexCover}, we can construct the following compression result for $k$-\textsc{FVST}.

\begin{lemma}[Compression for $k$-\textsc{FVST}]
   \label{lem:fvst-ic}
   Let $G = (V,E)$ be a tournament. Suppose that $S \subseteq V$ is a feedback vertex set of $G$ (that is, $G - S$ is transitive), and $|S| > k$. Suppose there exists a feedback vertex set $S^{*}$ of $G$ with $|S^{*}| \le k$. Then a feedback vertex set of size at most $k$ can be found by
   \begin{enumerate}
      \item\label{item:5} removing a set $R$ of vertices from $S$, and
      \item\label{item:6} guessing the correct relative ordering on $R$ in $\topo(G,S^{*})$.
   \end{enumerate}
\end{lemma}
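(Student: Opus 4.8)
The plan is to run the argument in the same spirit as the vertex-cover compression lemma (Lemma~\ref{lem:vc-ic}), but with the neighborhood-covering step replaced by a \emph{topological reinsertion} step that is driven by the guessed ordering. First I would fix a witness $k$-feedback vertex set $S^{*}$ (which exists by assumption) and set $R \coloneqq S \setminus S^{*} \subseteq S$; this is the set whose removal is asked for in item~\ref{item:5}. Because $R \subseteq V \setminus S^{*}$ and $G - S^{*}$ is transitive, the induced subtournament $G[R]$ is itself transitive, so it has a unique topological order, which coincides with the relative order $\sigma$ of $R$ inside $\topo(G,S^{*})$. This is exactly the ordering to be guessed in item~\ref{item:6}, and it shows the guess is well defined.

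Next I would describe how $R$ together with $\sigma$ determines the remainder of the solution. Write $A \coloneqq V \setminus S$, which by hypothesis induces a transitive tournament with its own known order $\alpha$. The idea is to reinsert the vertices of $R$, in the order $\sigma$, back into $\alpha$ while deleting as few vertices of $A$ as possible. Each $a \in A$ has a definite arc to or from every $r \in R$; if the set $\{r \in R : r \to a\}$ is an initial segment of $\sigma$, then $a$ has a unique consistent insertion \emph{slot} relative to $\sigma$, and otherwise $a$ closes a directed triangle with two vertices of $R$ and must be deleted. Among the vertices of $A$ that admit a slot, I would keep a maximum-size subsequence of $\alpha$ whose slot indices are non-decreasing (a longest non-decreasing subsequence, computable in polynomial time) and delete the rest; call the deleted set $D \subseteq A$. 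The returned solution is $(S \setminus R) \cup D$.

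The heart of the proof is to verify that $R \cup (A \setminus D)$, ordered by interleaving $\sigma$ with the surviving vertices of $A$ by slot, is transitive, and that $|D|$ is small enough. Transitivity I would check pairwise: $R$--$R$ pairs are correct because $\sigma$ is a sub-order of the transitive $\topo(G,S^{*})$; $R$--$A$ pairs are correct by the defining property of slots; and $A$--$A$ pairs are correct precisely because the retained slot indices are non-decreasing along $\alpha$, so the forced order among surviving $A$-vertices never contradicts their original transitive order. For the size bound I would use $S^{*}$ as a witness: taking $D^{*} \coloneqq S^{*} \setminus S = S^{*} \cap A$, a direct set computation gives $R \cup (A \setminus D^{*}) = V \setminus S^{*}$, which is transitive with $R$ in order $\sigma$, so every vertex of $A \setminus D^{*}$ admits a slot and these slots are automatically non-decreasing. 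Hence the maximum kept subsequence has size at least $|A| - |D^{*}|$, giving $|D| \le |D^{*}| = |S^{*} \setminus S|$. Consequently $(S \setminus R) \cup D$ is a feedback vertex set of size at most $|S \cap S^{*}| + |S^{*} \setminus S| = |S^{*}| \le k$.

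I expect the main obstacle to be the $A$--$A$ case of the transitivity check, together with the claim that a missing monotone slot genuinely forces deletion: both amount to showing that the slot/non-decreasing-subsequence structure is an \emph{exact} characterization of the transitive reinsertions of $R$ (in the fixed order $\sigma$) into $A$, rather than merely a sufficient condition. Making this precise, and confirming that the witness $S^{*}$ realizes this structure so that the computed minimum deletion set is no larger than $S^{*} \setminus S$, is where the real work lies; the surrounding bookkeeping with $R$, $S \cap S^{*}$, and $S^{*} \setminus S$ is routine.
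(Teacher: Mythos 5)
Your proposal is correct and follows essentially the same route as the paper's proof: take $R = S\setminus S^{*}$, guess its relative order in $\topo(G,S^{*})$, delete the vertices forced out by directed triangles with two endpoints in $R$, and recover a maximum consistent interleaving of the remaining vertices via a longest-subsequence computation, using $S^{*}$ as the witness for the size bound. Your ``slot'' formulation with a longest non-decreasing subsequence is just a repackaging of the paper's labeling $p$ and the longest common subsequence of $\sigma_{S\cup T}$ and $\topo(G,S\cup T)$.
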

\begin{proof}
   Set $S' = S \cap S^{*}$ and $R = S \setminus S'$. Let $T$ denote the set of vertices $v \not\in S$ that appear in a directed triangle $u,v,w$ where $u,w \in R$. It follows that $T \subseteq S^{*}$ because otherwise $G - S^{*}$ would not be cycle-free.

   We seek to find the smallest set that extends $S' \cup T$ but does not overlap with $R$ (note that $S^{*}$ is one such extension).

   Pick an arbitrary $W$ where $W \supseteq (S \cup T)$. Note that $W$ is a feedback vertex set since $G-S$ is transitive and $W \supseteq S$. We construct a sequence $\sigma_{W}$ on $V\setminus W$. Define a labeling $p : V \setminus W \to |\pi|$ as follows. For each $v \in V \setminus W$, $p(v) = \min\Big(\{i : (v,\pi_i) \in E\} \cup\{|\pi|+1\}\Big)$. Then for all $v,w \in V \setminus W$ we set
   \[
      v \prec_{\sigma_{W}} w
      \iff \begin{cases}
         p(v) < p(w) \text{~or,}                   \\
         p(v)=p(w), \topo(G,W)[v] < \topo(G,W)[w]. \\
      \end{cases}
   \]
   We claim that $\sigma_W = \topo(G,W)$ if and only if $W \setminus R$ is a feedback vertex set for $G$.

   First, assume $\sigma_W = \topo(G,W)$. Suppose for contradiction that there is a cycle in $G - (W\setminus R)$. Then by Proposition~\ref{prp:triangle}, there must be a directed triangle $u,v,w$ in $G - (W\setminus R)$. Note that $G - W$ is triangle-free (since $W$ is a feedback vertex set), so there must be at least one vertex of this triangle in $R$. Furthermore, there can be \emph{at most} one vertex of this triangle in $R$ since if there were two vertices appearing in $R$, then the third would be in $S' \cup T$ which do not appear in $G - (W\setminus R)$. Obviously, all three triangle endpoints cannot be in $R$ since $S^{*}$ is a feedback vertex set. Without loss of generality, suppose $u \in R$ and $v,w \not\in R$ with $(v,w) \in E$. Note that $u = \pi_i$ for some $i$ and the triangle is completed by edges $(w,\pi_i) \in E$ and $(\pi_i,v) \in E$. Since $(w,\pi_i) \in E$, it follows that $i \ge p(w)$, and since $(\pi_i,v) \in E$, it follows that $i < p(v)$ and we have $p(w) < p(v)$ and so $w$ appears before $v$ in $\sigma_{W}$. However, $(v,w) \in E$ so $v$ appears before $w$ in $\topo(G,W)$, which contradicts the assumption the $\sigma_W = \topo(G,W)$.

   Now suppose $\sigma_W \ne \topo(G,W)$. Then there must be a pair $v,w \in V \setminus W$ with $(v,w) \in E$ but $p(v) > p(w)$. This means for $i = p(w)$, we have $(w,\pi_i) \in E$ and $(v,\pi_i) \not\in E$. But since $G$ is a tournament, $(v,\pi_i) \not\in E \implies (\pi_i,v) \in E$. Since $\pi_i \in R$, there is a triangle $v,w,\pi_i$ in $G - (W\setminus R)$.

   Thus, given $S'$ and $\pi$, it suffices to find the minimal such $W$ so that $\sigma_W$ is equal to $\topo(G,W)$. This can be done by computing the sequences for $\sigma_{S\cup T}$ and $\topo(G,S\cup T)$ and adding to $S \cup T$ the vertices not in a longest common subsequence. Since we know that there exists at least one feedback vertex set $S^{*}$ disjoint from $R$ with size at most $k$, it follows that this method computes a feedback vertex cover with size at most $k$.
\end{proof}

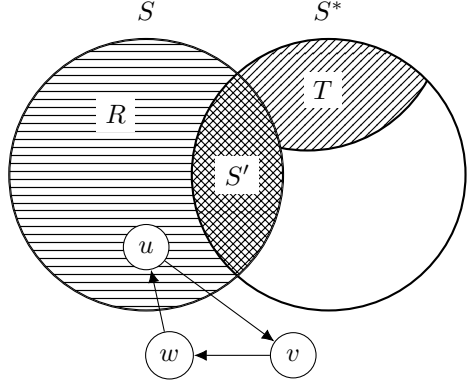
\begin{figure}
   \centering
   \begin{tikzpicture}[scale=1.2]
      \draw[thick] (0,0) circle (1.5cm) node (S) {} (0:2cm) circle (1.5cm) node (Sstar) {};
      \draw[fill=white,draw=none] (0,0) circle (1.5cm);

      \begin{scope}[even odd rule]
         \clip (0:2cm) circle (1.5cm);
         \draw[pattern={north east lines},thick] (45:2.5cm) circle (1.5cm) node[yshift=-10mm,xshift=2mm,fill=white] {$T$};
         \draw[preaction={fill=white},pattern={crosshatch}, draw=none,even odd rule] (0,0) circle (1.5cm) node[xshift=1.2cm,fill=white] {$S'$};
      \end{scope}

      \begin{scope}
         \clip (0,0) circle (1.5cm);
         \draw[pattern={horizontal lines},thick, even odd rule] (0,0) circle (1.5cm) node[xshift=-4mm,yshift=8mm,fill=white]{$R$} (0:2cm) circle (1.5cm);
      \end{scope}

      \node[above=1.8cm of S] {$S$};
      \node[above=1.8cm of Sstar] {$S^{*}$};

      \node[vertex] (u) at (0,-0.8) {$u$};
      \node[vertex,below right=1cm and 1.5cm of u] (v) {$v$};
      \node[vertex,left=1cm of v] (w) {$w$};

      \draw[arc] (u) -- (v);
      \draw[arc] (v) -- (w);
      \draw[arc] (w) -- (u);

   \end{tikzpicture}

   \caption[]{\label{fig:fvst-proof} Illustration of possible directed triangles in the proof of the compression lemma for $k$-\textsc{FVST} (Lemma~\ref{lem:fvst-ic}).}
\end{figure}

Lemma~\ref{lem:fvst-ic} allows us to design a focused jump-and-repair operation for the $k$-\textsc{FVST} problem. We outline this operation is Algorithm~\ref{alg:jump-and-repair-fvst}.

\begin{algorithm}
   \Input{A pair of strings $x = (x_S,x_V)$ and a tournament $G$}
   \BlankLine
   \tcc{Jump}
   $S' \gets \emptyset$\;
   \For{$i \in \{ 1,\ldots,n : x_S[i] = 1\}$}{
   \Wprob $1/2$ \Do $S' \gets S' \cup \{i\}$\;
   }
   $R \gets x_S \setminus S'$\;
   $\pi \gets $ a random permutation of $R$\;\label{li:randperm}
   \lIf{$G[x_V \setminus S']$ is transitive}{\Return $(x_{S'},x_V)$}
   \tcc{Repair}
   $T \gets \emptyset$\;
   \While{$\exists$ a directed triangle $u,v,w$ s.t.\ $u,w \in R$ and $v \not\in T$}
   {$T \gets T \cup \{v\}$}
   \For{$v \in x_V \setminus (x_S \cup T)$}{$p[v] = \min(\{i:(v,\pi_i]) \in E\} \cup \{|\pi|+1\})$}
   $\sigma \gets \topo(G,x_S \cup T)$\;
   $\sigma' \gets $ the sequence obtained by sorting $x_V \setminus (x_S \cup T)$ by $p$ and breaking ties by $\sigma$\;
   $Z \gets $ vertices in longest common subsequence of $\sigma$ and $\sigma'$\;
   \Return $(x_{S' \cup T \cup (x_V \setminus Z)},x_V)$\;
   \caption{\label{alg:jump-and-repair-fvst} \jarfvst}
\end{algorithm}

The focused jump-and-repair operation for $k$-\textsc{FVST} is slightly more complicated than the operation for $k$-\textsc{VertexCover}, since we also guess the correct ordering on the vertices removed from the current feedback set, and it is necessary to find the longest subsequence common the two arrangements of the remaining vertices. The former slows the optimization time by a factor of $k!$. The latter is not directly reflected in the optimization time, but would incur an extra factor of $O(n^2)$ in each repair operation for solving the \textsc{LongestCommonSubsequence} problem using, e.g., dynamic programming~\cite{CLRS}.

\begin{theorem}
   \label{thm:fvst}
   Let $G = (V,E)$ be a tournament with a feedback vertex set of size at most $k$. Then the expected optimization time of the \ea{k} applied to $G$ is bounded by $O(2^kk! n^2 \log n)$.
\end{theorem}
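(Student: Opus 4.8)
The plan is to mirror the structure of the proof of Theorem~\ref{thm:vc} for $k$-\textsc{VertexCover}, adapting the drift argument to account for the extra $k!$ factor arising from guessing the correct relative ordering of the removed vertices. First I would invoke Lemma~\ref{lem:infeasible} to dispense with the infeasible phase: a feasible solution is reached in $O(n\log n)$ expected iterations, and infeasible solutions are never accepted afterward. Then I would set up the same potential function $\phi(x) = n - f_k(x)$ and track the process $(X_t)_{t\ge 0}$ on $\{0,\ldots,n\}$ corresponding to $\phi$ of the solution once the feasible phase begins. Since feasible solutions have $f_k(x) = |x_V|$, driving $\phi$ to zero is exactly equivalent to growing the induced subgraph to all of $G$ while maintaining a feedback vertex set of size at most $k$.

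The core of the argument is to lower-bound the one-step drift. As in the vertex-cover proof, I would define the event $\mathcal{E}_i$ that mutation introduces a new vertex $i$ into both $x_V$ and $x_S$ (flipping the two bits $x_V[i]$ and $x_S[i]$ from $0$ to $1$ and leaving all others fixed), which occurs with probability at least $(4en^2)^{-1}$. Conditioned on $\mathcal{E}_i$, the intermediate string is solution feasible for the enlarged graph $G[x_V\cup\{i\}]$, because the newly added vertex is placed in the solution set, so $x_S\cup\{i\}$ intersects every directed cycle; but it may have cardinality $k+1$, triggering a call to \jarfvst. I would then define $\mathcal{J}$ as the event that this repair call returns a valid feedback vertex set of size at most $k$ for the new subgraph. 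By Lemma~\ref{lem:fvst-ic}, such a set is obtained by removing the correct subset $R$ of $x_S\cup\{i\}$ and using the correct relative ordering of $R$ within $\topo(G,S^{*})$; the jump phase of \jarfvst selects the correct subset with probability $2^{-|x_S\cup\{i\}|} \ge 2^{-(k+1)}$, and line~\ref{li:randperm} draws the correct ordering of $R$ uniformly among the $|R|!\le k!$ permutations, contributing a factor of at least $(k!)^{-1}$. Hence $\Pr(\mathcal{J}\mid\mathcal{E}_i)\ge 2^{-(k+1)}/k!$.

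Combining these, the joint event $\mathcal{E}_i\cap\mathcal{J}$ produces a feasible offspring with $|y_V| = |x_V|+1$ and therefore fitness strictly larger by exactly $1$, so the conditional drift is $1$. Summing over all indices $i$ with $x_V[i]=0$ gives
\begin{align*}
   \E[X_t - X_{t+1}\mid X_t = s]
   &\ge \sum_{i : x_V[i]=0}\Pr(\mathcal{J}\mid\mathcal{E}_i)\Pr(\mathcal{E}_i) \\
   &\ge \frac{|\{i : x_V[i]=0\}|}{4e\cdot 2^{k+1}\,k!\,n^2}
   = \Omega\!\left(\frac{s}{2^k k!\, n^2}\right),
\end{align*}
since $s = \phi(x) = n - |x_V|$ equals the number of zero bits in $x_V$. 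Applying the multiplicative drift theorem (Theorem~\ref{thm:multidrift}) with $\delta = \Omega((2^k k! n^2)^{-1})$ and $X_0\le n$ yields expected time $O(2^k k!\, n^2\log n)$ for the feasible phase, which dominates the $O(n\log n)$ infeasible phase and gives the claimed bound.

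The main obstacle I anticipate is justifying the probability contributed by the random permutation in line~\ref{li:randperm}, namely verifying that drawing the \emph{correct} relative ordering of $R$ really does suffice to make the repair succeed via the longest-common-subsequence construction of Lemma~\ref{lem:fvst-ic}. One must be careful that the compression lemma guarantees a size-$\le k$ feedback vertex set precisely when both the removed set $R = (x_S\cup\{i\})\setminus S^{*}$ and its ordering consistent with $\topo(G,S^{*})$ are guessed correctly, and that the permutation $\pi$ chosen uniformly at random agrees with this target ordering with probability at least $1/|R|! \ge 1/k!$. Provided the correctness of the LCS-based repair from Lemma~\ref{lem:fvst-ic} is taken as given, the remaining steps are a routine transcription of the vertex-cover drift computation with the extra $k!$ factor inserted.
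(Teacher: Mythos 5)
Your proposal follows essentially the same route as the paper's proof: the same reduction to the feasible phase via Lemma~\ref{lem:infeasible}, the same potential $\phi(x)=n-f_k(x)$, the same events $\mathcal{E}_i$ and $\mathcal{J}$ with the jump-subset probability $2^{-(k+1)}$ and the permutation probability $1/|R|!$ justified by Lemma~\ref{lem:fvst-ic}, and the same multiplicative-drift conclusion. The only cosmetic difference is that the paper writes the permutation probability as $1/(|x_S|+1-|S'|)!$ and invokes $|S'|>0$ to bound it by $1/k!$, whereas you bound $|R|!\le k!$ directly; both rest on the same (equally brief) justification, so this is not a substantive deviation.
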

\begin{proof}
   We focus only on the phase of execution that consists of feasible solutions, which occurs after $O(n \log n)$ iterations in expectation by Lemma~\ref{lem:infeasible}.

   As with the proof of Theorem~\ref{thm:vc}, we bound the drift of the potential function $\phi(x) = n - f_k(x)$ modeled by the stochastic process $(X_t)_{t \ge 0}$ which starts after the first feasible iterations, and hits the absorbing state when a feedback vertex set of size at most $k$ on the entire graph $G$ is discovered.

   Again, assume that $x = (x_S,x_V)$ is the candidate solution in the $t$-th feasible iteration of the \ea{k}, and denote $\mathcal{E}_i$ as the event that changes $x_V[i]$ from 0 to 1 and $x_S[i] = 1$ after mutation. The resulting string is solution feasible, but not necessarily cardinality feasible. Let $\mathcal{J}$ be the event conditioned on $\mathcal{E}_i$ that the subsequent call to \jarfvst results in a solution $y$ that is a feedback vertex set of size at most $k$ for the induced subgraph $G[y_V]$.

   Since $G$ has a $k$ feedback vertex set, the induced subgraph $G[x_{V} \cup \{i\}]$ must also have a $k$ feedback vertex set $S^{*} \subseteq x_V \cup \{i\}.$ With probability $2^{-(|x_S|+1)}$ the jump phase of Algorithm~\ref{alg:jump-and-repair-fvst} selects $S' = (x_S \cup \{i\}) \cap S^{*}$, and with probability $\frac{1}{(|x_S| + 1 - |S'|)!}$, it selects the permutation $\pi$ that corresponds to the order of the vertices in $x_S \setminus S'$ in $\topo(G[x_V \cup \{i\},S^{*}]$.

   By Lemma~\ref{lem:fvst-ic}, if this jump operation was successful, the subsequent repair step produces a feedback vertex set for $G[x_v \cup \{i\}]$ with size at most $k$. Since $|x_S| \le k$, and $|S'| > 0$, we have
   \[
      \Pr(\mathcal{J}) \ge \frac{2^{-|x_S|}}{(|x_S| + 1 - |S'|)!}
      \ge \frac{1}{2^{k+1}k!}.
   \]
   As the resulting offspring $y$ is feasible and $|y_V| = |x_V| + 1$, it follows that $f(y) > f(x)$.

   By the same argumentation as in the proof of Theorem~\ref{thm:vc}, $\Pr(\mathcal{E}_i) \ge (2e n)^{-2}$, and we bound the total drift as
   \begin{align*}
      \E[X_{t}-X_{t+1}\mid X_t = s] & \ge \sum_{i : x_V[i] = 0} \Pr(\mathcal{J}\cap\mathcal{E}_i)                                 
                                     = \sum_{i : x_V[i] = 0} \Pr(\mathcal{J} \mid \mathcal{E}_i)\Pr(\mathcal{E}_i)               \\
                                    & \ge \frac{|\{i : x_V[i]=0\}|}{4e^22^{k+1}k!n^2}  = \Omega{\left(\frac{s}{2^kk!n^2}\right)},
   \end{align*}
   and the claim is proved by applying Theorem~\ref{thm:multidrift}.
\end{proof}

\section{Odd Cycle Transversals}
\label{sec:odd-cycle-transv}

Let $G = (V,E)$ be an undirected graph with $n$ vertices and $m$ edges. An \emph{odd cycle transversal} of $G$ is a set $S \subseteq V$ such that $G[V \setminus S]$ is bipartite. Finding an odd cycle transversal is a so-called \emph{node-deletion problem} for a nontrivial hereditary property, and thus is NP-complete~\cite{LewisYannakakis1980NodeDeletionProblem}. Finding a transversal of size at most $k$, however, is also fixed-parameter tractable, and was the first problem on which the iterative compression technique was developed in the seminal paper of Reed, Smith and Vetta~\cite{ReedSmithVetta2004transversals}. In that paper, the authors presented an $O(4^kkmn)$ iterative compression algorithm for solving $k$-odd cycle transversal, and this analysis has been improved to $O(3^kkmn)$~\cite{Hueffner2009AlgorithmEngineeringOptimal} as well as further simplified~\cite{LokshtanovEtAl2009SimplerParameterizedAlgorithm}. The current fastest parameterized algorithm for the problem is $O(2.3146^k \cdot\poly(n))$ and is due to Lokshtanov et al.~\cite{LokshtanovEtAl2014FasterParameterizedAlgorithms}.

In order to derive a repair procedure, we will take advantage of an earlier result of Lokshtanov et al.~\cite{LokshtanovEtAl2009SimplerParameterizedAlgorithm} that relates two odd cycle transversals in a graph to vertex cut separating certain sets. Given a solution feasible odd cycle transversal that is not cardinality feasible, the jump procedure randomly removes some vertices from the transversal, and then guesses a bipartition of these removed vertices. The repair routine, outlined in Algorithm~\ref{alg:oct-repair}, attempts to judiciously remove any odd cycles exposed by calculating a vertex cut in the remainder of the graph. The following lemma demonstrates the repair performed by Algorithm~\ref{alg:oct-repair} is successful if the jump is successful. The proof is similar to the proofs of Lemmas 3.2 and 3.3 of Lokshtanov et al.~\cite{LokshtanovEtAl2009SimplerParameterizedAlgorithm}.

\begin{algorithm}
   \Input{A graph $G = (V,E)$, an odd cycle transversal $S$ of $G$,
a set $R \subseteq S$ of vertices to remove from $S$ together with a partition $A,B$ of $R$}
   \BlankLine

   Let $C$ and $D$ be the bipartition of $G[V \setminus S]$\;
   Find a minimal vertex cut $T$ in $G[V \setminus S]$ separating $(C\cap N(A)) \cup (D \cap N(B))$ and $(C\cap N(B)) \cup (D \cap N(A))$\;
   \Return $T$\;
   \caption{\label{alg:oct-repair} Repair odd cycles that are exposed by removing $R$ from $S$}
\end{algorithm}

 \begin{figure}
   \centering
   \begin{tikzpicture}[scale=1.2]
      \draw[thick] (0,0) circle (1.5cm) node (S) {} (0:2cm) circle (1.5cm) node (Sstar) {};
      \draw[fill=white,draw=none] (0,0) circle (1.5cm);
      \begin{scope}
         \clip (0,0) circle (1.5cm);
         \draw[thick, even odd rule] (0,0) circle (1.5cm)  (0:2cm) circle (1.5cm);
         \draw[pattern={north east lines}] (-2,0) rectangle node [fill=white] (A) {$A$}(2,2);
         \draw[pattern={dots}] (-2,0) rectangle node[fill=white] (B) {$B$} (2,-2);
      \end{scope}

      \begin{scope}[even odd rule]
         \clip (0:2cm) circle (1.5cm);
         \draw[preaction={fill=white},pattern={crosshatch}, draw=none,even odd rule] (0,0) circle (1.5cm) node[xshift=1.2cm,fill=white] {$S'$};
      \end{scope}

      \draw[thick,rounded corners] (-5,2.5) rectangle node (C) {}(-3,0.5);
      \begin{scope}
         \clip[rounded corners] (-5,2.5) rectangle (-3,0.5);
         \draw[draw=none,pattern={dots}] (-5,1.5) rectangle (-1,2.5);
         \draw[draw=none,pattern={north east lines}] (-5,1.5) rectangle (-1,-1);
      \end{scope}

      \draw[thick,dashed] (-5,1.5) -- node[fill=white,above=3mm] (CA) {$C \cap N(A)$} node[fill=white,below=3mm] (CB) {$C \cap N(B)$}(-3,1.5);

      \draw[thick,rounded corners] (-5,-2.5) rectangle node (D) {}(-3,-0.5);
      \begin{scope}
         \clip[rounded corners] (-5,-2.5) rectangle (-3,-0.5);
         \draw[draw=none,pattern={dots}] (-5,-1.5) rectangle (-2,0);
         \draw[draw=none,pattern={north east lines}] (-5,-1.5) rectangle (-2,-2.5);

      \end{scope}
      \draw[thick,dashed] (-5,-1.5) --node[fill=white,above=3mm] (DA) {$D \cap N(A)$}  node[fill=white,below=3mm] (DB) {$D \cap N(B)$}(-3,-1.5);

      \node[above=1.8cm of S] {$S$};
      \node[above=1.8cm of Sstar] {$S^{*}$};
      \node[right=2mm of Sstar] {$T$};
      \node[above=1cm of C] {$C$};
      \node[above=1cm of D] {$D$};

      \draw[shorten <= 0.2cm, shorten >= 0.5cm] (CA) -- (A);
      \draw[shorten <= 0.2cm, shorten >= 0.2cm,transform canvas={yshift=2mm}] (CA) -- (A);

      \draw[shorten <= 0.2cm, shorten >= 0.5cm] (CB) -- (B);
      \draw[shorten <= 0.5cm, shorten >= 0.5cm,transform canvas={yshift=2mm}] (CB) -- (B);

      \draw[shorten <= 0.2cm, shorten >= 0.5cm] (DA) -- (A);
      \draw[shorten <= 0.5cm, shorten >= 0.5cm,transform canvas={yshift=-2mm}] (DA) -- (A);

      \draw[shorten <= 0.2cm, shorten >= 0.5cm] (DB) -- (B);
      \draw[shorten <= 0.2cm, shorten >= 0.2cm,transform canvas={yshift=-2mm}] (DB) -- (B);
   \end{tikzpicture}
   \caption[]{\label{fig:oct-proof}
      Illustration of partitions of $G = (V,E)$ in the proof of the compression lemma for $k$-\textsc{OddCycleTransversal} (Lemma~\ref{lem:oct-ic}).

   }
\end{figure}
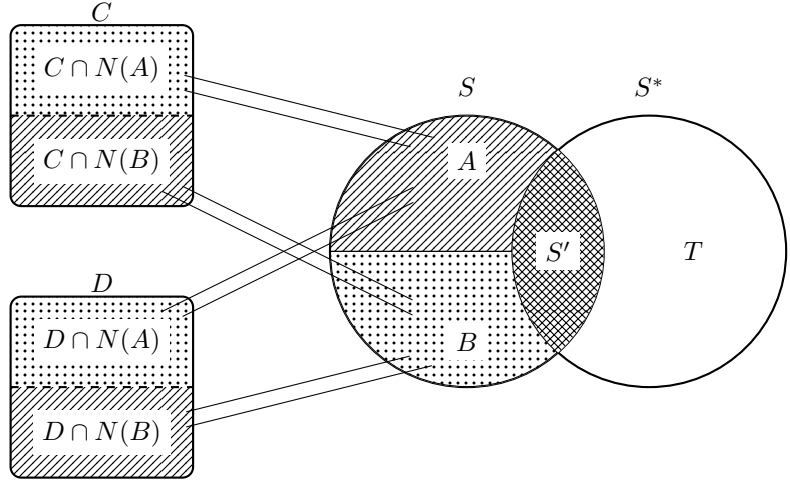

\begin{lemma}[Compression for $k$-\textsc{OddCycleTransversal}]
   \label{lem:oct-ic}
   Let $S$ be an odd cycle transversal of a graph $G = (V,E)$ with $|S| > k$, and suppose there exists an odd cycle transversal $S^{*}$ of $G$ with $|S^{*}| < k$. Then an odd cycle transversal of size at most $k$ can be found by
   \begin{enumerate}
      \item\label{item:3} removing a set $R$ of vertices from $S$,
      \item\label{item:4} guessing the correct partition $A \uplus B = R$ that splits $R$ correctly into the bipartition of $G$ induced by $S^{*}$, and
      \item\label{item:7} running Algorithm~\ref{alg:oct-repair} on these sets to repair any odd cycles exposed by removing $R$.
   \end{enumerate}
 \end{lemma}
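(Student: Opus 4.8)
The plan is to fix the ``correct'' guess suggested by the statement and reduce the entire repair to a single minimum vertex-cut computation in the bipartite graph $H \coloneqq G[V\setminus S]$, exactly as in Algorithm~\ref{alg:oct-repair}. Concretely, I set $R \coloneqq S\setminus S^{*}$, so that $S\setminus R = S\cap S^{*}$ is the part of the old transversal we keep, and $R\subseteq V\setminus S^{*}$ lies entirely in the bipartite graph $G-S^{*}$. Reading off the bipartition of $G-S^{*}$ then gives the partition $A\uplus B=R$ of step~(2), with $A$ on one side and $B$ on the other; in particular $A$ and $B$ are independent sets. Writing $(C,D)$ for the bipartition of $H$ and $X\coloneqq (C\cap N(A))\cup(D\cap N(B))$, $Y\coloneqq (C\cap N(B))\cup(D\cap N(A))$ for the two terminal sets appearing in Algorithm~\ref{alg:oct-repair}, the goal becomes to show that $(S\setminus R)\cup T$ is an odd cycle transversal of size at most $k$, where $T$ is a minimum vertex cut of $H$ separating $X$ from $Y$.

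The crux of the argument is a \emph{flip} characterization. After deleting $S\setminus R$ and $T$, the surviving graph is $H-T$ together with all of $R$, and I want to properly $2$-color it while forcing $A$ onto the white side and $B$ onto the black side. Since $H$ is bipartite, any proper $2$-coloring of a connected component of $H-T$ is either its natural $(C,D)$-coloring or the global swap of it, so within a component either \emph{every} vertex is flipped or none is. The constraints imposed by $R$ are local: a surviving neighbor of $A$ must be black and a surviving neighbor of $B$ must be white. Translating this through the natural coloring shows precisely that the vertices of $X$ must be flipped and those of $Y$ must not; consequently no component of $H-T$ may contain vertices of both $X$ and $Y$. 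This is exactly the statement that $T$ separates $X$ from $Y$ in $H$ (and, since a vertex in $N(A)\cap N(B)$ lands in $X\cap Y$, such doubly constrained vertices are forced into any separator, so they are correctly deleted).

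With this equivalence in hand I would argue soundness and the size bound separately. For \emph{soundness}, given any $X$--$Y$ separator $T$ I construct an explicit proper $2$-coloring of $G-((S\setminus R)\cup T)$: color $A$ white and $B$ black, flip each component of $H-T$ that meets $X$, leave unflipped each component that meets $Y$, and color components meeting neither arbitrarily. The independence of $A$ and $B$ handles edges inside $R$, edges between $A$ and $B$ are bichromatic, and the flip rule together with the separation property makes every $R$-to-$(C\cup D)$ edge bichromatic; hence $(S\setminus R)\cup T$ is an odd cycle transversal. For the \emph{size bound}, I apply the same flip analysis to $S^{*}$ itself: the set $T^{*}\coloneqq S^{*}\setminus S$ satisfies $(S\setminus R)\cup T^{*}=S^{*}$, and since the bipartition of $G-S^{*}$ restricted to $H-T^{*}$ flips exactly the vertices of $X$ and fixes those of $Y$, the set $T^{*}$ is itself an $X$--$Y$ separator. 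A minimum separator therefore has $|T|\le |T^{*}|$, so $|(S\setminus R)\cup T| = |S\cap S^{*}| + |T| \le |S\cap S^{*}| + |T^{*}| = |S^{*}| \le k$, using that $S\setminus R$ and $T\subseteq V\setminus S$ are disjoint.

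The main obstacle is establishing the flip characterization cleanly --- in particular tracking, case by case over membership in $C$ versus $D$ and adjacency to $A$ versus $B$, which vertices are forced to flip, and arguing that these forcing constraints are globally satisfiable on a component exactly when $X$ and $Y$ are separated. The remaining work is routine: verifying that the constructed coloring is proper edge by edge, noting that a minimum $X$--$Y$ vertex cut (with $X\cap Y$ forced into the cut) is computable in polynomial time by a standard max-flow reduction so that the repair runs in polynomial time, and checking the disjointness bookkeeping behind the cardinality estimate. I would also point to Figure~\ref{fig:oct-proof} to orient the reader through the intersections $C\cap N(A)$, $D\cap N(B)$, and the rest.
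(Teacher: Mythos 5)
Your proposal is correct, and it follows the paper's overall skeleton --- set $R = S\setminus S^{*}$, read $A,B$ off the bipartition of $G-S^{*}$, show that $T^{*} = S^{*}\setminus S$ separates the two terminal sets $X$ and $Y$ in $G[V\setminus S]$ so that the cut returned by Algorithm~\ref{alg:oct-repair} satisfies $|T|\le|T^{*}|$, and show that \emph{any} such separator together with $S\cap S^{*}$ is an odd cycle transversal --- but you prove the two core sub-claims by a genuinely different technique. The paper argues by parity: for the separator property it supposes a surviving $X$--$Y$ path exists, extends it by one vertex of $A$ or $B$ at each end, and derives a parity contradiction with the bipartition of $G-S^{*}$ (four endpoint cases); for soundness it takes an arbitrary cycle in the residual graph, decomposes it into internal and external subpaths relative to $A\cup B$, and counts the odd-length pieces. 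You instead use the component-flip characterization: each component of the bipartite graph $H-T$ has exactly two proper $2$-colorings, the constraints imposed by coloring $A$ white and $B$ black force surviving $X$-vertices to flip and surviving $Y$-vertices to stay, and separation is precisely global satisfiability of these constraints. This buys both directions from one lemma --- soundness becomes an explicit proper $2$-coloring rather than a cycle-parity count, and the separator property of $T^{*}$ is read off the known coloring of $G-S^{*}$ --- and it also makes explicit why vertices of $X\cap Y$ must land in the cut, a point the paper leaves implicit in its notion of vertex cut. The cost is the bookkeeping you already flag: flip status is only well defined per component, and components meeting neither $X$ nor $Y$ must be checked to have no neighbors in $R$. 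One small alignment note: the comparison $|T|\le|T^{*}|$ requires a \emph{minimum-cardinality} cut (not merely an inclusion-minimal one), which your max-flow formulation provides and which is what the paper intends despite its wording.
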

\begin{proof}
   Let $S^{*}$ be an odd cycle transversal of size at most $k$ in $G$. Then $G[V\setminus S^{*}]$ is bipartite with bipartition $V_A,V_B$. We prove that if we remove the vertex set $R \coloneqq S\setminus S^{*}$, then calling Algorithm~\ref{alg:oct-repair} with sets $R$,
   $A \coloneqq R \cap V_A$ and $B \coloneqq R \cap V_B$ results in an odd cycle transversal of size at most $k$.
   Let $S' = S \cap S^{*}$ and $C,D \subseteq V \setminus S$ be the bipartition formed by $S$ (see Figure~\ref{fig:oct-proof} for an illustration).

   We first argue that $T \coloneqq S^{*} \setminus S$ is a vertex cut that separates $(C \cap N(A)) \cup (D \cap N(B))$ and $(C \cap N(B)) \cup (D \cap N(A))$ in the graph $G[V \setminus S]$.

   Let $P$ be a path in $G[V \setminus (S \cup T)]$. We show that $P$ cannot connect $(C \cap N(A)) \cup (D \cap N(B))$ and $(C \cap N(B)) \cup (D \cap N(A))$ in $G[V \setminus S]$ by ruling out the possible endpoints of $P$. Let $(u,\ldots,v)$ be the sequence of vertices associated with $P$.
   \begin{description}
      \item[Case 1:] Suppose $u \in C \cap N(A)$ and $v \in C \cap N(B)$. Since $u,v \in C$ and $C$ is an independent set in $G[V \setminus (S \cup T)]$, $P$ must contain an odd number of vertices. Furthermore, $u$ has a neighbor $a \in A$ and $v$ has a neighbor $b \in B$. Thus replacing the vertices in $S \setminus S^{*}$ would allow us to construct the path $P' = (a,u,\ldots,v,b)$ in $G[V \setminus S^{*}]$, also with an odd number of vertices. Since $P'$ also has an odd vertex count, there is no proper 2-coloring of $P'$ that places its endpoints in opposite color classes. But $a \in A \subseteq V_A$ and $b \in B \subseteq V_B$, so this contradicts the fact that $G[V \setminus S^{*}]$ is bipartite with bipartition $V_A$ and $V_B$.
      \item[Case 2:] Suppose $u \in C \cap N(A)$, $v \in D \cap N(A)$. Since $u \in C$ and $v \in D$, then $P$ has an even count of vertices. Note that both $u$ and $v$ have neighbors $a,a' \in A$ (respectively), so replacing the vertices in $S \setminus S^{*}$ would allow us to construct an even-length path $P' = (a,u,\ldots,v,a')$ in $G[V \setminus S^{*}]$. Since $P'$ has an even vertex count, there is no proper 2-coloring of $P'$ that places its endpoints in the same color class. Again, this contradicts the fact that $S^{*}$ is an odd-cycle transversal as described above.

      \item[Case 3:] If $u \in D \cap N(B)$, $v \in C \cap N(B)$, then $P$ cannot exist by a symmetric argument to Case 2, swapping $C$ with $D$ and $A$ with $B$.

      \item[Case 4:] If $u \in D \cap N(B)$, $v \in D \cap N(A)$, then $P$ cannot exist by a symmetric argument to Case 1, swapping $C$ with $D$ and $A$ with $B$.
   \end{description}
   Thus, no path exists in $G[V \setminus (S \cup T)]$ between the sets $(C \cap N(A)) \cup (D \cap N(B))$ and $(C \cap N(B)) \cup (D \cap N(A))$ and therefore $T$ is a vertex cut separating these two sets in $G[V \setminus S]$.

   Finally, we argue that if $T'$ is \emph{any} vertex cut separating
   $(C \cap N(A)) \cup (D \cap N(B))$ and $(C \cap N(B)) \cup (D \cap N(A))$ in $G[V \setminus S]$, then $T' \cup S'$ is an odd cycle transversal in $G$.

   Let $Q$ be an arbitrary cycle in $G[V \setminus (T' \cup S')]$ and denote as $(e_1,e_2,\ldots,e_q)$ the sequence of edges in the order they appear in $Q$. It suffices to show that $q$ must be even.
   If $Q$ has no vertices in $A \cup B$, then $q$ must be even since $S = A \cup B \cup S'$ is an odd cycle transversal. On the other hand, suppose $Q$ has at least one point in $A \cup B$.

   We call an edge $e_i$ \emph{internal} if $e_i \cap (A\cup B) = e_i$,
   and \emph{external} if $e_i \cap (A\cup B) = \emptyset$. Similarly, we
   call a path an \emph{internal path} when it is comprised only of
   internal edges. An \emph{external path} is an edge sequence
   $(e_i, \ldots, e_j)$ where
   $|e_i \cap (A\cup B)| = |e_j \cap (A\cup B)| = 1$, and the remaining
   edges are external. Note that $Q$ can always be decomposed into a
   sequence of internal and external paths where $|e_1 \cap e_q| = 1$.

   Since $G[A]$ and $G[B]$ are independent sets, every internal path with both endpoints in $A$ or both endpoints in $B$ has an even number of edges, and every internal path with one endpoint in $A$ and one endpoint in $B$ has an odd number of edges.

   Now consider an external path $P$ in $G[V \setminus (S \cup T')]$ with endpoints $u$ and $v$. If $u,v \in A$ then the vertex sequence for $P$ is $(u,w,\ldots,y,v)$ for some $w,y \in N(A)$. But $T'$ is a vertex cut separating $C \cap N(A)$ and $D \cap N(A)$, so there are no paths between these sets, so either $w,y \in C$ or $w,y \in D$. In either case, the subpath from $w$ to $y$ has an even count of edges in $G[V \setminus (S \cup T')]$ since $C$ and $D$ is a bipartition induced by the odd cycle transversal $S$. A similar argument shows that external paths with both endpoints in $B$ must have even length, and one endpoint in $A$ and one in $B$ must have odd length.

   Now let $Q = (P_1,P_2,\ldots,P_{\ell})$ be a decomposition of $Q$ into internal and external paths. Each path $P_i$ with endpoints in $A$ and $B$ must have a return path $P_i$ with endpoints in $B$ and $A$. Therefore, there must be an even number of paths $P_i$ with an odd count of edges, and so $Q$ must contain an even count of edges.

   To complete the proof, we see that Algorithm~\ref{alg:oct-repair} finds a minimal vertex cut $T'$ that separates $(C \cap N(A)) \cup (D \cap N(B))$ and $(C \cap N(B)) \cup (D \cap N(A))$, and $|T'| \le |T|$ since $T$ must also be such a vertex cut. As $|S' \cup T| \le k$, it follows that $S' \cup T'$ is an odd cycle transversal of $G$ with size at most $k$.
\end{proof}

\begin{algorithm}
   \Input{A pair of strings $x = (x_S,x_V)$ and a graph $G$}
   \BlankLine
   \tcc{Jump}
   $S' \gets \emptyset$\;
   \For{$i \in \{ 1,\ldots,n : x_S[i] = 1 \}$}{
   \Wprob $1/3$ \Do $S' \gets S' \cup \{i\}$\label{li:jump-prob}\;
   }
   \lIf{$S'$ is an odd cycle transversal for $G[x_V]$}{\Return $(x_{S'},x_V)$}
   \tcc{Repair}
   $R \gets x_S \setminus S'$\;
   Select a bipartition $A,B$ of $R$ uniformly at random\;
   Calculate $T$ using repair Algorithm~\ref{alg:oct-repair} called with inputs $G[x_V]$, $S$,$R$, $A$, $B$\;
   \Return $(x_{S' \cup T},x_V)$\;
   \caption{\label{alg:jump-and-repair-oct} \jaroct}
\end{algorithm}

The repair procedure of Algorithm~\ref{alg:oct-repair} finds a minimum vertex separator between two vertex sets, which can be done in $O(nm)$ time by computing the maximum flow in the appropriate network, e.g., by the Ford-Fulkerson method~\cite{CLRS}. If we insist on a vertex separator on size at most $k$, the running time bound can be improved to $O(km)$, as the existence of a separator of this size can be decided, and subsequently constructed, within this bound. In the case that a vertex separator of size at most $k$ cannot be found, the repair procedure can return an arbitrary set (e.g., $R$), as the repair would have failed.

We also point out a small deviation from the general \jar procedure listed in Algorithm~\ref{alg:jump-and-repair}. In line~\ref{li:jump-prob} of Algorithm~\ref{alg:jump-and-repair-oct}, we only select a vertex for $S'$ with probability $1/3$ rather than $1/2$, which ultimately improves the bound by a $(1.33)^k$ factor.

\begin{theorem}
   \label{thm:oct} Let $G = (V,E)$ be a graph where $|V| = n$, $|E| = m$ and $G$ contains an odd cycle transversal of size $k$. Then the expected optimization time of the \ea{k} applied to $G$ is bounded by $O(3^k k m n^2\log n)$.
\end{theorem}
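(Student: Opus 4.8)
The plan is to reuse verbatim the drift template already carried out for Theorem~\ref{thm:vc} and Theorem~\ref{thm:fvst}. First I would apply Lemma~\ref{lem:infeasible} to dispose of the infeasible phase: after $O(n\log n)$ expected iterations the \ea{k} reaches a feasible point and never again accepts an infeasible one, so it remains only to bound the time spent on feasible solutions. I would then set up the potential $\phi(x) = n - f_k(x)$ and follow the process $(X_t)_{t\ge 0}$ on $\{0,\ldots,n\}$ recording $\phi$ from the first feasible iteration onward; this hits $0$ exactly when a valid $k$-transversal of the whole graph is found. The goal is to establish a multiplicative drift of order $s/(3^k n^2)$ and invoke Theorem~\ref{thm:multidrift}.

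For the drift, fix a feasible $x = (x_S,x_V)$ with $|x_S|\le k$ and, as before, let $\mathcal{E}_i$ be the event that mutation flips $x_V[i]$ from $0$ to $1$ while leaving $x_S[i]=1$; pessimistically two specific bits must flip, giving $\Pr(\mathcal{E}_i)\ge (2en)^{-2}$. Conditioned on $\mathcal{E}_i$ the intermediate offspring is solution feasible, because $x_S\cup\{i\}$ is an odd cycle transversal of $G[x_V\cup\{i\}]$ (deleting it leaves the already bipartite graph $G[x_V\setminus x_S]$), though it need not be cardinality feasible, which is exactly the trigger for \jaroct. I would let $\mathcal{J}$ denote the event that, conditioned on $\mathcal{E}_i$, this call returns a valid odd cycle transversal of $G[y_V]$ of size at most $k$.

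The heart of the argument, and the step I expect to be most delicate, is the estimate of $\Pr(\mathcal{J}\mid\mathcal{E}_i)$, where the jump probability $1/3$ does the real work. Since odd cycle transversals are closed under induced subgraphs, $G[x_V\cup\{i\}]$ has a transversal $S^{*}$ of size at most $k$. Writing $K \coloneqq (x_S\cup\{i\})\cap S^{*}$ for the vertices that should be kept and $R \coloneqq (x_S\cup\{i\})\setminus S^{*}$ for those that should be dropped, the jump succeeds when every vertex of $K$ is retained and every vertex of $R$ is removed, which occurs with probability $(1/3)^{|K|}(2/3)^{|R|}$, and the subsequent uniform guess of the bipartition of $R$ matching the bipartition of $G[V\setminus S^{*}]$ contributes a further factor $2^{-|R|}$. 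The key observation is that these factors telescope: $(1/3)^{|K|}(2/3)^{|R|}2^{-|R|} = (1/3)^{|K|+|R|} = 3^{-|x_S\cup\{i\}|} \ge 3^{-(k+1)}$, so the awkward dependence on $|R|$ cancels and $\Pr(\mathcal{J}\mid\mathcal{E}_i)=\Omega(3^{-k})$. (Had the jump used probability $1/2$, the $2^{-|R|}$ bipartition factor would survive and force a $4^{-k}$ bound, which is precisely the $(4/3)^k$ loss the $1/3$ choice avoids.) I would then quote Lemma~\ref{lem:oct-ic} to certify that, once the jump guesses $R$ and its bipartition correctly, the minimum vertex cut computed by Algorithm~\ref{alg:oct-repair} repairs every exposed odd cycle and returns a transversal of size at most $k$.

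Finally I would assemble the drift. Conditioned on $\mathcal{J}\cap\mathcal{E}_i$ the offspring is feasible with $|y_V|=|x_V|+1$, so its fitness rises by one; summing over the $s$ indices $i$ with $x_V[i]=0$ yields $\E[X_t-X_{t+1}\mid X_t=s]\ge s\,(2en)^{-2}\,3^{-(k+1)}=\Omega(s/(3^k n^2))$. Theorem~\ref{thm:multidrift} then bounds the number of feasible iterations by $O(3^k n^2\log n)$. Since each such iteration may invoke Algorithm~\ref{alg:oct-repair}, whose bounded minimum vertex cut of size at most $k$ can be constructed in $O(km)$ time, multiplying the iteration count by this per-iteration cost gives the claimed total of $O(3^k k m n^2\log n)$.
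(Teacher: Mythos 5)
Your proposal is correct and follows essentially the same route as the paper's proof: the same potential $\phi(x)=n-f_k(x)$, the same events $\mathcal{E}_i$ and the jump/bipartition success events (which the paper splits into $\mathcal{J}_1$ and $\mathcal{J}_2$ but computes identically, with $(1/3)^{|S'|}(2/3)^{|R|}\cdot 2^{-|R|}=3^{-(|S'|+|R|)}\ge 3^{-(k+1)}$), the appeal to Lemma~\ref{lem:oct-ic} for the repair, and the final multiplication of the $O(3^k n^2\log n)$ iteration bound by the $O(km)$ cost of the vertex-cut computation. Your aside explaining that the $1/3$ jump probability is chosen precisely so the $2^{-|R|}$ bipartition factor telescopes away matches the paper's own remark about avoiding a $(4/3)^k$ loss.
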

\begin{proof}
   Again by Lemma~\ref{lem:infeasible}, after $O(n \log n)$ iterations in expectation, all subsequent solutions are feasible. Let $x = (x_S,x_V)$ be a feasible solution, that is, $|x_S| \le k$ and corresponds to an odd cycle transversal of $G[x_V]$. Let $i \in \{j : x_V[j] = 0\}$ and let $\mathcal{E}_i$ denote the event that after mutation $x_S[i] = x_V[i] = 1$ and no other bits have changed.

   Clearly, $x_S \cup \{i\}$ is also an odd cycle transversal of $G[x_V \cup \{i\}]$ so the resulting offspring conditioned on $\mathcal{E}_i$ must be solution feasible, but not necessarily cardinality feasible. In the latter case, note that we have assumed that $G$ admits an odd cycle transversal $S^{*} \subseteq V$ where $|S^{*}| \le k$. Since odd cycle transversals are closed under induced subgraphs, $(x_V \cup \{i\}) \cap S^{*}$ is an odd cycle transversal of $G[x_V \cup \{i\}]$.

   Let $\mathcal{J}_1$ be the event that in the jump phase of Algorithm~\ref{alg:jump-and-repair-oct}, the process chooses to flip to zero the set of bits $R = (x_S \cup\{i\}) \setminus {S^{*}}$, hence keeping the set $S' = ((x_V \cup \{i\}) \cap S^{*}) \cap (x_S \cup \{i\})$ set to $1$. Let $\mathcal{J}_2$ be the event that $R$ is partitioned exactly into the sets $A,B$ which are each respectively a subset of a bipartition induced by $(x_V \cup \{i\}) \cap S^{*}$. Then $\Pr(\mathcal{J}_1) = (2/3)^{|R|}(1/3)^{|S'|}$ and $\Pr(\mathcal{J}_2 \mid \mathcal{J}_1) = (1/2)^{|R|}$, so we have $\Pr(\mathcal{J}_1 \cap \mathcal{J}_2) = (1/3)^{|R|+|S'|} \ge 3^{-(k+1)}$. Under this joint event, by Lemma~\ref{lem:oct-ic}, Algorithm~\ref{alg:oct-repair} must return an odd cycle transversal for $x_V \cup \{i\}$ of size at most $k$.

   Arguing in the same way as with the proofs of Theorems~\ref{thm:vc} and \ref{thm:fvst}, we may set up a potential function with multiplicative drift
   \[
      \E[X_{t}-X_{t+1}\mid X_t = s] \ge \sum_{i : x_V[i] = 0} \Pr(\mathcal{J}_1\cap\mathcal{J}_2\cap\mathcal{E}_i)
      \ge \frac{|\{i : x_V[i]=0\}|}{4e^23^{k+1}n^2}  = \Omega{\left(\frac{s}{3^kn^2}\right)}.
   \]
   Applying Theorem~\ref{thm:multidrift}, the expected number of iterations of the \ea{k} until an odd cycle transversal is generated for the entire graph $G$ is $O(3^k n^2 \log n)$. Since the repair procedure of Algorithm~\ref{alg:oct-repair} costs $O(k m)$, we obtain the claimed result.
\end{proof}

\section{Experiments}
\label{sec:experiments}
To interpret the concrete running time of \ea{k} on
$k$-\textsc{VertexCover} instances as a function of both $n$ and $k$, we performed a
number of experiments on different instances of
$k$-\textsc{VertexCover}. In order to maintain experimental control
over both $n$ and $k$, we created three graph classes: random planted,
clique/anticlique and biclique. In the random planted class, instances
are randomly generated by drawing each graph from a planted version of
the standard Erd\H{o}s-R\'{e}nyi random graph model in which a
$k$-vertex cover is ``planted'' into the graph and edges are selected for
inclusion with fixed probability $p$ subject to having an end point in
the planted cover.
In particular, each random graph on $n$ vertices with a planted $k$-vertex cover was generated by first randomly choosing $k$ vertices for the vertex cover and then iterating over each vertex pair $v_i, v_j$, such that $i \in \{1, \dots , k \} $ and $j \in \{1, \dots , n \} \setminus{\{i\}}$, and adding the edge $(v_i, v_j)$ with probability $p$.

In the limiting case of $p=1$ we obtain a (nonrandom) clique/anticlique instance comprised of a $k$-clique fully connected to an $n-k$ anticlique, that is, every vertex in the $k$-clique is connected to every other vertex in the graph, and the remaining vertices are connected to every vertex in the $k$-clique (but not each other). Finally, we also investigate bicliques $K_{k,n-k}$, that is,  complete bipartite graphs with $k$ vertices in one of the partitions.

We generated graph instances using values of $n \in \{20,30,\ldots,100\}$ and $k \in \{3,4,\ldots,8\}$. For the nonrandom graphs (clique/anticlique and biclique) we generated an instance for each $n$ and $k$, resulting in 54 instances each of the two classes.
For the random graphs, we also controlled for edge density using $p \in \{0.1,0.25,0.5,0.75\}$, and for each value of $n$, $k$ and $p$, we generated 10 separate random graph instances, resulting in 2160 total random graph instances.

On each instance, we measured the mean (and standard deviation) of the iterations required for the \ea{k} to find a $k$-vertex cover for the entire graph over 100 runs per instance. In Figure~\ref{fig:random-by-n}, we plot the mean running time as a function of $n$ for the random planted instances grouped by edge density $p$. The shaded bands represent the standard deviation from the mean. The runs of the \ea{k} on the nonrandom graphs are displayed in Figure~\ref{fig:nonrandom-by-n}.

\pgfplotsset{%
  cycle list={%
    {blue,mark=*}, {red,mark=square*}, {brown,mark=o}, {black,mark=x},
    {blue,dashed,mark=diamond*}, {red,dashed,mark=+},
    {brown,dashed,mark=diamond} },
  width=3.15in}

\begin{figure}
   \begin{subfigure}[b]{0.5\textwidth}
      \begin{tikzpicture}
         \begin{axis}[%
               legend pos=north west,
               xlabel=$n$,
               ylabel={mean run time},
               ymax=80000,
               enlargelimits=false]

            \pgfplotstableread{data/eajar.random_planted.edge_prob0.1-by-n-k8.dat}{\data}

            \addplot+[mark size=1.5pt] table[x=n,y=mean] {\data};
            \addplot[name path=upper,draw=none,forget plot] table[x=n,y   expr=\thisrow{mean}+\thisrow{sd}] {\data};
            \addplot[name path=lower,draw=none,forget plot] table[x=n,y expr=\thisrow{mean}-\thisrow{sd}] {\data};
            \pgfplotsset{cycle list shift=-1}
            \addplot+[fill opacity=0.1,forget plot] fill between[of=upper and lower];
            \pgfplotsset{cycle list shift=0}

            \pgfplotstableread{data/eajar.random_planted.edge_prob0.1-by-n-k6.dat}{\data}
            \addplot+[mark size=1.5pt] table[x=n,y=mean] {\data};
            \addplot[name path=upper,draw=none,forget plot] table[x=n,y   expr=\thisrow{mean}+\thisrow{sd}] {\data};
            \addplot[name path=lower,draw=none,forget plot] table[x=n,y expr=\thisrow{mean}-\thisrow{sd}] {\data};
            \pgfplotsset{cycle list shift=-1}
            \addplot+[fill opacity=0.1,forget plot] fill between[of=upper and lower];
            \pgfplotsset{cycle list shift=0}

            \pgfplotstableread{data/eajar.random_planted.edge_prob0.1-by-n-k4.dat}{\data}
            \addplot+[mark size=1.5pt] table[x=n,y=mean] {\data};
            \addplot[name path=upper,draw=none,forget plot] table[x=n,y   expr=\thisrow{mean}+\thisrow{sd}] {\data};
            \addplot[name path=lower,draw=none,forget plot] table[x=n,y expr=\thisrow{mean}-\thisrow{sd}] {\data};
            \pgfplotsset{cycle list shift=-1}
            \addplot+[fill opacity=0.1,forget plot] fill between[of=upper and lower];
            \pgfplotsset{cycle list shift=0}

            \legend{$k=8$,$k=6$,$k=4$}
         \end{axis}
      \end{tikzpicture}%
      \caption{$p=0.1$}
   \end{subfigure}
   \hfill
   \begin{subfigure}[b]{0.5\textwidth}
      \begin{tikzpicture}
         \begin{axis}[%
               legend pos=north west,
               xlabel=$n$,
               ylabel={mean run time},
               ymax=80000,
               enlargelimits=false]

            \pgfplotstableread{data/eajar.random_planted.edge_prob0.25-by-n-k8.dat}{\data}

            \addplot+[mark size=1.5pt] table[x=n,y=mean] {\data};
            \addplot[name path=upper,draw=none,forget plot] table[x=n,y   expr=\thisrow{mean}+\thisrow{sd}] {\data};
            \addplot[name path=lower,draw=none,forget plot] table[x=n,y expr=\thisrow{mean}-\thisrow{sd}] {\data};
            \pgfplotsset{cycle list shift=-1}
            \addplot+[fill opacity=0.1,forget plot] fill between[of=upper and lower];
            \pgfplotsset{cycle list shift=0}

            \pgfplotstableread{data/eajar.random_planted.edge_prob0.25-by-n-k6.dat}{\data}
            \addplot+[mark size=1.5pt] table[x=n,y=mean] {\data};
            \addplot[name path=upper,draw=none,forget plot] table[x=n,y   expr=\thisrow{mean}+\thisrow{sd}] {\data};
            \addplot[name path=lower,draw=none,forget plot] table[x=n,y expr=\thisrow{mean}-\thisrow{sd}] {\data};
            \pgfplotsset{cycle list shift=-1}
            \addplot+[fill opacity=0.1,forget plot] fill between[of=upper and lower];
            \pgfplotsset{cycle list shift=0}

            \pgfplotstableread{data/eajar.random_planted.edge_prob0.25-by-n-k4.dat}{\data}
            \addplot+[mark size=1.5pt] table[x=n,y=mean] {\data};
            \addplot[name path=upper,draw=none,forget plot] table[x=n,y   expr=\thisrow{mean}+\thisrow{sd}] {\data};
            \addplot[name path=lower,draw=none,forget plot] table[x=n,y expr=\thisrow{mean}-\thisrow{sd}] {\data};
            \pgfplotsset{cycle list shift=-1}
            \addplot+[fill opacity=0.1,forget plot] fill between[of=upper and lower];
            \pgfplotsset{cycle list shift=0}

            \legend{$k=8$,$k=6$,$k=4$}
         \end{axis}
      \end{tikzpicture}%
      \caption{$p=0.25$}
   \end{subfigure}\\
   \begin{subfigure}[b]{0.5\textwidth}
      \begin{tikzpicture}
         \begin{axis}[%
               legend pos=north west,
               xlabel=$n$,
               ylabel={mean run time},
               ymax=80000,
               enlargelimits=false]

            \pgfplotstableread{data/eajar.random_planted-by-n-k8.dat}{\data}

            \addplot+[mark size=1.5pt] table[x=n,y=mean] {\data};
            \addplot[name path=upper,draw=none,forget plot] table[x=n,y   expr=\thisrow{mean}+\thisrow{sd}] {\data};
            \addplot[name path=lower,draw=none,forget plot] table[x=n,y expr=\thisrow{mean}-\thisrow{sd}] {\data};
            \pgfplotsset{cycle list shift=-1}
            \addplot+[fill opacity=0.1,forget plot] fill between[of=upper and lower];
            \pgfplotsset{cycle list shift=0}

            \pgfplotstableread{data/eajar.random_planted-by-n-k6.dat}{\data}
            \addplot+[mark size=1.5pt] table[x=n,y=mean] {\data};
            \addplot[name path=upper,draw=none,forget plot] table[x=n,y   expr=\thisrow{mean}+\thisrow{sd}] {\data};
            \addplot[name path=lower,draw=none,forget plot] table[x=n,y expr=\thisrow{mean}-\thisrow{sd}] {\data};
            \pgfplotsset{cycle list shift=-1}
            \addplot+[fill opacity=0.1,forget plot] fill between[of=upper and lower];
            \pgfplotsset{cycle list shift=0}

            \pgfplotstableread{data/eajar.random_planted-by-n-k4.dat}{\data}
            \addplot+[mark size=1.5pt] table[x=n,y=mean] {\data};
            \addplot[name path=upper,draw=none,forget plot] table[x=n,y   expr=\thisrow{mean}+\thisrow{sd}] {\data};
            \addplot[name path=lower,draw=none,forget plot] table[x=n,y expr=\thisrow{mean}-\thisrow{sd}] {\data};
            \pgfplotsset{cycle list shift=-1}
            \addplot+[fill opacity=0.1,forget plot] fill between[of=upper and lower];
            \pgfplotsset{cycle list shift=0}

            \legend{$k=8$,$k=6$,$k=4$}
         \end{axis}
      \end{tikzpicture}%
      \caption{$p=0.5$}
   \end{subfigure}
   \hfill
   \begin{subfigure}[b]{0.5\textwidth}
      \begin{tikzpicture}
         \begin{axis}[%
               legend pos=north west,
               xlabel=$n$,
               ylabel={mean run time},
               ymax=80000,
               enlargelimits=false]
            \pgfplotstableread{data/eajar.random_planted.edge_prob0.75-by-n-k8.dat}{\data}

            \addplot+[mark size=1.5pt] table[x=n,y=mean] {\data};
            \addplot[name path=upper,draw=none,forget plot] table[x=n,y   expr=\thisrow{mean}+\thisrow{sd}] {\data};
            \addplot[name path=lower,draw=none,forget plot] table[x=n,y expr=\thisrow{mean}-\thisrow{sd}] {\data};
            \pgfplotsset{cycle list shift=-1}
            \addplot+[fill opacity=0.1,forget plot] fill between[of=upper and lower];
            \pgfplotsset{cycle list shift=0}

            \pgfplotstableread{data/eajar.random_planted.edge_prob0.75-by-n-k6.dat}{\data}
            \addplot+[mark size=1.5pt] table[x=n,y=mean] {\data};
            \addplot[name path=upper,draw=none,forget plot] table[x=n,y   expr=\thisrow{mean}+\thisrow{sd}] {\data};
            \addplot[name path=lower,draw=none,forget plot] table[x=n,y expr=\thisrow{mean}-\thisrow{sd}] {\data};
            \pgfplotsset{cycle list shift=-1}
            \addplot+[fill opacity=0.1,forget plot] fill between[of=upper and lower];
            \pgfplotsset{cycle list shift=0}

            \pgfplotstableread{data/eajar.random_planted.edge_prob0.75-by-n-k4.dat}{\data}
            \addplot+[mark size=1.5pt] table[x=n,y=mean] {\data};
            \addplot[name path=upper,draw=none,forget plot] table[x=n,y   expr=\thisrow{mean}+\thisrow{sd}] {\data};
            \addplot[name path=lower,draw=none,forget plot] table[x=n,y expr=\thisrow{mean}-\thisrow{sd}] {\data};
            \pgfplotsset{cycle list shift=-1}
            \addplot+[fill opacity=0.1,forget plot] fill between[of=upper and lower];
            \pgfplotsset{cycle list shift=0}

            \legend{$k=8$,$k=6$,$k=4$}
         \end{axis}
      \end{tikzpicture}%
      \caption{$p=0.75$}
   \end{subfigure}
   \caption{\label{fig:random-by-n} Mean running time of \ea{k} on random planted vertex cover instances for various edge probabilities $p$. Shaded bands represent standard deviation.}
\end{figure}

\begin{figure}
   \begin{subfigure}[b]{0.5\textwidth}
      \begin{tikzpicture}
         \begin{axis}[%
               legend pos=north west,
               xlabel=$n$,
               ylabel={mean run time},
               ymax=80000,
               enlargelimits=false]

            \pgfplotstableread{data/eajar.clique-anticlique-by-n-k8.dat}{\data}

            \addplot+[mark size=1.5pt] table[x=n,y=mean] {\data};
            \addplot[name path=upper,draw=none,forget plot] table[x=n,y   expr=\thisrow{mean}+\thisrow{sd}] {\data};
            \addplot[name path=lower,draw=none,forget plot] table[x=n,y expr=\thisrow{mean}-\thisrow{sd}] {\data};
            \pgfplotsset{cycle list shift=-1}
            \addplot+[fill opacity=0.1,forget plot] fill between[of=upper and lower];
            \pgfplotsset{cycle list shift=0}

            \pgfplotstableread{data/eajar.clique-anticlique-by-n-k6.dat}{\data}
            \addplot+[mark size=1.5pt] table[x=n,y=mean] {\data};
            \addplot[name path=upper,draw=none,forget plot] table[x=n,y   expr=\thisrow{mean}+\thisrow{sd}] {\data};
            \addplot[name path=lower,draw=none,forget plot] table[x=n,y expr=\thisrow{mean}-\thisrow{sd}] {\data};
            \pgfplotsset{cycle list shift=-1}
            \addplot+[fill opacity=0.1,forget plot] fill between[of=upper and lower];
            \pgfplotsset{cycle list shift=0}

            \pgfplotstableread{data/eajar.clique-anticlique-by-n-k4.dat}{\data}
            \addplot+[mark size=1.5pt] table[x=n,y=mean] {\data};
            \addplot[name path=upper,draw=none,forget plot] table[x=n,y   expr=\thisrow{mean}+\thisrow{sd}] {\data};
            \addplot[name path=lower,draw=none,forget plot] table[x=n,y expr=\thisrow{mean}-\thisrow{sd}] {\data};
            \pgfplotsset{cycle list shift=-1}
            \addplot+[fill opacity=0.1,forget plot] fill between[of=upper and lower];
            \pgfplotsset{cycle list shift=0}

            \legend{$k=8$,$k=6$,$k=4$}
         \end{axis}
      \end{tikzpicture}%

      \caption[]{clique/anticlique}
   \end{subfigure}
   \hfill
   \begin{subfigure}[b]{0.5\textwidth}
      \begin{tikzpicture}
         \begin{axis}[%
               legend pos=north west,
               xlabel=$n$,
               ylabel={mean run time},
               ymax=80000,
               enlargelimits=false]

            \pgfplotstableread{data/eajar.biclique-by-n-k8.dat}{\data}

            \addplot+[mark size=1.5pt] table[x=n,y=mean] {\data};
            \addplot[name path=upper,draw=none,forget plot] table[x=n,y   expr=\thisrow{mean}+\thisrow{sd}] {\data};
            \addplot[name path=lower,draw=none,forget plot] table[x=n,y expr=\thisrow{mean}-\thisrow{sd}] {\data};
            \pgfplotsset{cycle list shift=-1}
            \addplot+[fill opacity=0.1,forget plot] fill between[of=upper and lower];
            \pgfplotsset{cycle list shift=0}

            \pgfplotstableread{data/eajar.biclique-by-n-k6.dat}{\data}
            \addplot+[mark size=1.5pt] table[x=n,y=mean] {\data};
            \addplot[name path=upper,draw=none,forget plot] table[x=n,y   expr=\thisrow{mean}+\thisrow{sd}] {\data};
            \addplot[name path=lower,draw=none,forget plot] table[x=n,y expr=\thisrow{mean}-\thisrow{sd}] {\data};
            \pgfplotsset{cycle list shift=-1}
            \addplot+[fill opacity=0.1,forget plot] fill between[of=upper and lower];
            \pgfplotsset{cycle list shift=0}

            \pgfplotstableread{data/eajar.biclique-by-n-k4.dat}{\data}
            \addplot+[mark size=1.5pt] table[x=n,y=mean] {\data};
            \addplot[name path=upper,draw=none,forget plot] table[x=n,y   expr=\thisrow{mean}+\thisrow{sd}] {\data};
            \addplot[name path=lower,draw=none,forget plot] table[x=n,y expr=\thisrow{mean}-\thisrow{sd}] {\data};
            \pgfplotsset{cycle list shift=-1}
            \addplot+[fill opacity=0.1,forget plot] fill between[of=upper and lower];
            \pgfplotsset{cycle list shift=0}

            \legend{$k=8$,$k=6$,$k=4$}
         \end{axis}
      \end{tikzpicture}%
      \caption[]{biclique}
   \end{subfigure}
   \caption{\label{fig:nonrandom-by-n} Mean running time of \ea{k} on nonrandom vertex cover instances.
      Shaded bands represent standard deviation.}
\end{figure}
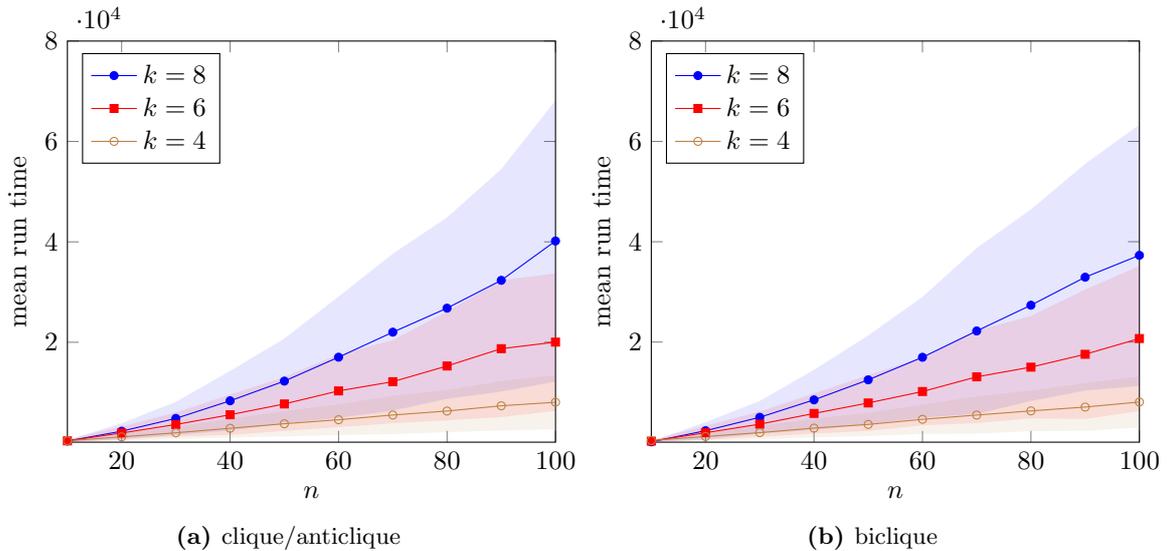


\def\norm{(\thisrow{n}*\thisrow{n}*ln(\thisrow{n})}

\begin{figure}
   \begin{subfigure}{0.5\textwidth}
      \begin{tikzpicture}
         \begin{axis}[%
           legend pos=north east,
           legend columns=2,
           legend style={
           /tikz/column 2/.style={
                column sep=10pt,
              },
              },
               xlabel=$n$,
               ylabel near ticks,
               ylabel={mean run time$/(n^2 \ln n)$},
               ymax=3,
               enlargelimits=false]
            \pgfplotstableread{data/eajar.random_planted.edge_prob0.1-by-n-k8.dat}{\data}
            \addplot+[mark size=1.5pt] table[x=n,y expr=\thisrow{mean}/\norm] {\data};

            \pgfplotstableread{data/eajar.random_planted.edge_prob0.1-by-n-k7.dat}{\data}
            \addplot+[mark size=1.5pt] table[x=n,y expr=\thisrow{mean}/\norm] {\data};

            \pgfplotstableread{data/eajar.random_planted.edge_prob0.1-by-n-k6.dat}{\data}
            \addplot+[mark size=1.5pt] table[x=n,y expr=\thisrow{mean}/\norm] {\data};

            \pgfplotstableread{data/eajar.random_planted.edge_prob0.1-by-n-k5.dat}{\data}
            \addplot+[mark size=1.5pt] table[x=n,y expr=\thisrow{mean}/\norm] {\data};

            \pgfplotstableread{data/eajar.random_planted.edge_prob0.1-by-n-k4.dat}{\data}
            \addplot+[mark size=1.5pt] table[x=n,y expr=\thisrow{mean}/\norm] {\data};

            \pgfplotstableread{data/eajar.random_planted.edge_prob0.1-by-n-k3.dat}{\data}
            \addplot+[mark size=1.5pt] table[x=n,y expr=\thisrow{mean}/\norm] {\data};
            \legend{$k=8$,$k=7$,$k=6$,$k=5$,$k=4$,$k=3$}
         \end{axis}
      \end{tikzpicture}%
      \caption[]{$p=0.1$}
    \end{subfigure}
    \hspace*{2mm}%
    \hspace*{\fill}%
    \begin{subfigure}{0.5\linewidth}
      \begin{tikzpicture}
         \begin{axis}[%
           legend pos=north east,
                      legend columns=2,
           legend style={
           /tikz/column 2/.style={
                column sep=10pt,
              },
              },
               xlabel=$n$,
               ymax=3,
               enlargelimits=false]
            \pgfplotstableread{data/eajar.random_planted.edge_prob0.25-by-n-k8.dat}{\data}
            \addplot+[mark size=1.5pt] table[x=n,y expr=\thisrow{mean}/\norm] {\data};

            \pgfplotstableread{data/eajar.random_planted.edge_prob0.25-by-n-k7.dat}{\data}
            \addplot+[mark size=1.5pt] table[x=n,y expr=\thisrow{mean}/\norm] {\data};

            \pgfplotstableread{data/eajar.random_planted.edge_prob0.25-by-n-k6.dat}{\data}
            \addplot+[mark size=1.5pt] table[x=n,y expr=\thisrow{mean}/\norm] {\data};

            \pgfplotstableread{data/eajar.random_planted.edge_prob0.25-by-n-k5.dat}{\data}
            \addplot+[mark size=1.5pt] table[x=n,y expr=\thisrow{mean}/\norm] {\data};

            \pgfplotstableread{data/eajar.random_planted.edge_prob0.25-by-n-k4.dat}{\data}
            \addplot+[mark size=1.5pt] table[x=n,y expr=\thisrow{mean}/\norm] {\data};

            \pgfplotstableread{data/eajar.random_planted.edge_prob0.25-by-n-k3.dat}{\data}
            \addplot+[mark size=1.5pt] table[x=n,y expr=\thisrow{mean}/\norm] {\data};
            \legend{$k=8$,$k=7$,$k=6$,$k=5$,$k=4$,$k=3$}
         \end{axis}
      \end{tikzpicture}%
      \caption[]{$p=0.25$}
   \end{subfigure}\\
   \begin{subfigure}{0.5\textwidth}
      \begin{tikzpicture}
         \begin{axis}[%
               legend pos=north east,
               xlabel=$n$,
                          legend columns=2,
           legend style={
           /tikz/column 2/.style={
                column sep=10pt,
              },
              },
               ylabel={mean run time$/(n^2 \ln n)$},
               ymax=3,
               enlargelimits=false]
            \pgfplotstableread{data/eajar.random_planted-by-n-k8.dat}{\data}
            \addplot+[mark size=1.5pt] table[x=n,y expr=\thisrow{mean}/\norm] {\data};

            \pgfplotstableread{data/eajar.random_planted-by-n-k7.dat}{\data}
            \addplot+[mark size=1.5pt] table[x=n,y expr=\thisrow{mean}/\norm] {\data};

            \pgfplotstableread{data/eajar.random_planted-by-n-k6.dat}{\data}
            \addplot+[mark size=1.5pt] table[x=n,y expr=\thisrow{mean}/\norm] {\data};

            \pgfplotstableread{data/eajar.random_planted-by-n-k5.dat}{\data}
            \addplot+[mark size=1.5pt] table[x=n,y expr=\thisrow{mean}/\norm] {\data};

            \pgfplotstableread{data/eajar.random_planted-by-n-k4.dat}{\data}
            \addplot+[mark size=1.5pt] table[x=n,y expr=\thisrow{mean}/\norm] {\data};

            \pgfplotstableread{data/eajar.random_planted-by-n-k3.dat}{\data}
            \addplot+[mark size=1.5pt] table[x=n,y expr=\thisrow{mean}/\norm] {\data};

            \legend{$k=8$,$k=7$,$k=6$,$k=5$,$k=4$,$k=3$}
         \end{axis}
      \end{tikzpicture}%
      \caption[]{$p=0.5$}
   \end{subfigure}
   \hspace*{2mm}%
   \begin{subfigure}{0.5\textwidth}
      \begin{tikzpicture}
         \begin{axis}[%
               legend pos=north east,
               xlabel=$n$,
                          legend columns=2,
           legend style={
           /tikz/column 2/.style={
                column sep=10pt,
              },
              },
              ymax=3,
              enlargelimits=false]
            \pgfplotstableread{data/eajar.random_planted.edge_prob0.75-by-n-k8.dat}{\data}
            \addplot+[mark size=1.5pt] table[x=n,y expr=\thisrow{mean}/\norm] {\data};

            \pgfplotstableread{data/eajar.random_planted.edge_prob0.75-by-n-k7.dat}{\data}
            \addplot+[mark size=1.5pt] table[x=n,y expr=\thisrow{mean}/\norm] {\data};

            \pgfplotstableread{data/eajar.random_planted.edge_prob0.75-by-n-k6.dat}{\data}
            \addplot+[mark size=1.5pt] table[x=n,y expr=\thisrow{mean}/\norm] {\data};

            \pgfplotstableread{data/eajar.random_planted.edge_prob0.75-by-n-k5.dat}{\data}
            \addplot+[mark size=1.5pt] table[x=n,y expr=\thisrow{mean}/\norm] {\data};

            \pgfplotstableread{data/eajar.random_planted.edge_prob0.75-by-n-k4.dat}{\data}
            \addplot+[mark size=1.5pt] table[x=n,y expr=\thisrow{mean}/\norm] {\data};

            \pgfplotstableread{data/eajar.random_planted.edge_prob0.75-by-n-k3.dat}{\data}
            \addplot+[mark size=1.5pt] table[x=n,y expr=\thisrow{mean}/\norm] {\data};

            \legend{$k=8$,$k=7$,$k=6$,$k=5$,$k=4$,$k=3$}
         \end{axis}
      \end{tikzpicture}%
      \caption[]{$p=0.75$}
   \end{subfigure}
   \caption{\label{fig:random-by-n-normalized} Mean running time of \ea{k} divided by $n^2 \ln n$ on random planted vertex cover instances for various edge probabilities $p$.}
\end{figure}

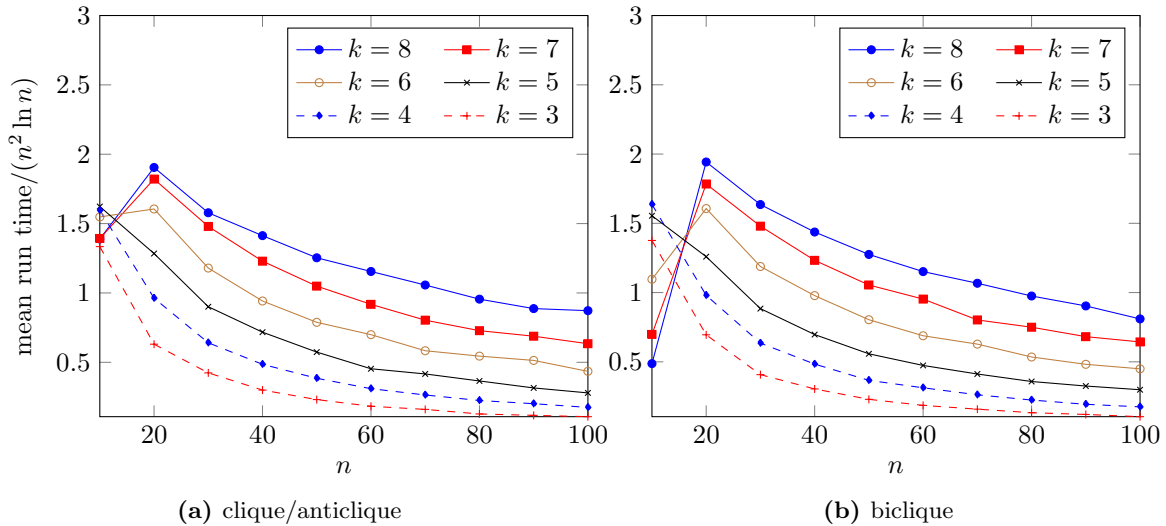
\begin{figure}
   \begin{subfigure}{0.5\linewidth}
      \begin{tikzpicture}
         \begin{axis}[%
               legend pos=north east,
               xlabel=$n$,
                          legend columns=2,
           legend style={
           /tikz/column 2/.style={
                column sep=10pt,
              },
              },
               ylabel={mean run time$/(n^2 \ln n)$},
               ymax=3,
               enlargelimits=false]

            \pgfplotstableread{data/eajar.clique-anticlique-by-n-k8.dat}{\data}
            \addplot+[mark size=1.5pt] table[x=n,y expr=\thisrow{mean}/\norm] {\data};

            \pgfplotstableread{data/eajar.clique-anticlique-by-n-k7.dat}{\data}
            \addplot+[mark size=1.5pt] table[x=n,y expr=\thisrow{mean}/\norm] {\data};

            \pgfplotstableread{data/eajar.clique-anticlique-by-n-k6.dat}{\data}
            \addplot+[mark size=1.5pt] table[x=n,y expr=\thisrow{mean}/\norm] {\data};

            \pgfplotstableread{data/eajar.clique-anticlique-by-n-k5.dat}{\data}
            \addplot+[mark size=1.5pt] table[x=n,y expr=\thisrow{mean}/\norm] {\data};

            \pgfplotstableread{data/eajar.clique-anticlique-by-n-k4.dat}{\data}
            \addplot+[mark size=1.5pt] table[x=n,y expr=\thisrow{mean}/\norm] {\data};

            \pgfplotstableread{data/eajar.clique-anticlique-by-n-k3.dat}{\data}
            \addplot+[mark size=1.5pt] table[x=n,y expr=\thisrow{mean}/\norm] {\data};

            \legend{$k=8$,$k=7$,$k=6$,$k=5$,$k=4$,$k=3$}
         \end{axis}
      \end{tikzpicture}%
      \caption[]{clique/anticlique}
   \end{subfigure}
   \hspace*{2mm}%
   \begin{subfigure}{0.5\linewidth}
      \begin{tikzpicture}
         \begin{axis}[%
               legend pos=north east,
               xlabel=$n$,
                          legend columns=2,
           legend style={
           /tikz/column 2/.style={
                column sep=10pt,
              },
              },
              ymax=3,
              enlargelimits=false]

            \pgfplotstableread{data/eajar.biclique-by-n-k8.dat}{\data}
            \addplot+[mark size=1.5pt] table[x=n,y expr=\thisrow{mean}/\norm] {\data};

            \pgfplotstableread{data/eajar.biclique-by-n-k7.dat}{\data}
            \addplot+[mark size=1.5pt] table[x=n,y expr=\thisrow{mean}/\norm] {\data};

            \pgfplotstableread{data/eajar.biclique-by-n-k6.dat}{\data}
            \addplot+[mark size=1.5pt] table[x=n,y expr=\thisrow{mean}/\norm] {\data};

            \pgfplotstableread{data/eajar.biclique-by-n-k5.dat}{\data}
            \addplot+[mark size=1.5pt] table[x=n,y expr=\thisrow{mean}/\norm] {\data};

            \pgfplotstableread{data/eajar.biclique-by-n-k4.dat}{\data}
            \addplot+[mark size=1.5pt] table[x=n,y expr=\thisrow{mean}/\norm] {\data};

            \pgfplotstableread{data/eajar.biclique-by-n-k3.dat}{\data}
            \addplot+[mark size=1.5pt] table[x=n,y expr=\thisrow{mean}/\norm] {\data};

            \legend{$k=8$,$k=7$,$k=6$,$k=5$,$k=4$,$k=3$}
         \end{axis}
      \end{tikzpicture}%
      \caption[]{biclique}

   \end{subfigure}
   \caption[]{\label{fig:nonrandom-by-n-normalized} Mean run time divided by $n^2 \ln n$ of \ea{k} on nonrandom vertex cover instances.}

\end{figure}

To compare the empirical running time to the bound proved in Theorem~\ref{thm:vc} and assess the magnitude of hidden constants, we plot the mean running time normalized by $n^2 \ln n$ in Figures~\ref{fig:random-by-n-normalized} and~\ref{fig:nonrandom-by-n-normalized}. The results here suggest that not only is the normalized running time bounded above by a fixed value that depends only on $k$, as predicted, but also that the bound in Theorem~\ref{thm:vc}
seems to be too large for random planted instances, clique/anticlique and biclique. While this may suggest that the upper bound could be further tightened, we point out that Theorem~\ref{thm:vc} establishes a worst-case running time bound, and the studied instance classes, while easier to manage and control, are also likely to be simpler to solve.

In order to observe the dependence of the running time on $k$ for fixed $n$, we plot the mean running time (again normalized by $n^2 \ln n$) as a function of $k$ in Figures~\ref{fig:random-by-k} and~\ref{fig:nonrandom-by-k}. We compare these results with a plot of $2^k/100$ and observe that the growth with $k$ appears to be even subexponential. The constant $1/100$ is used as a scaling factor for the comparison.

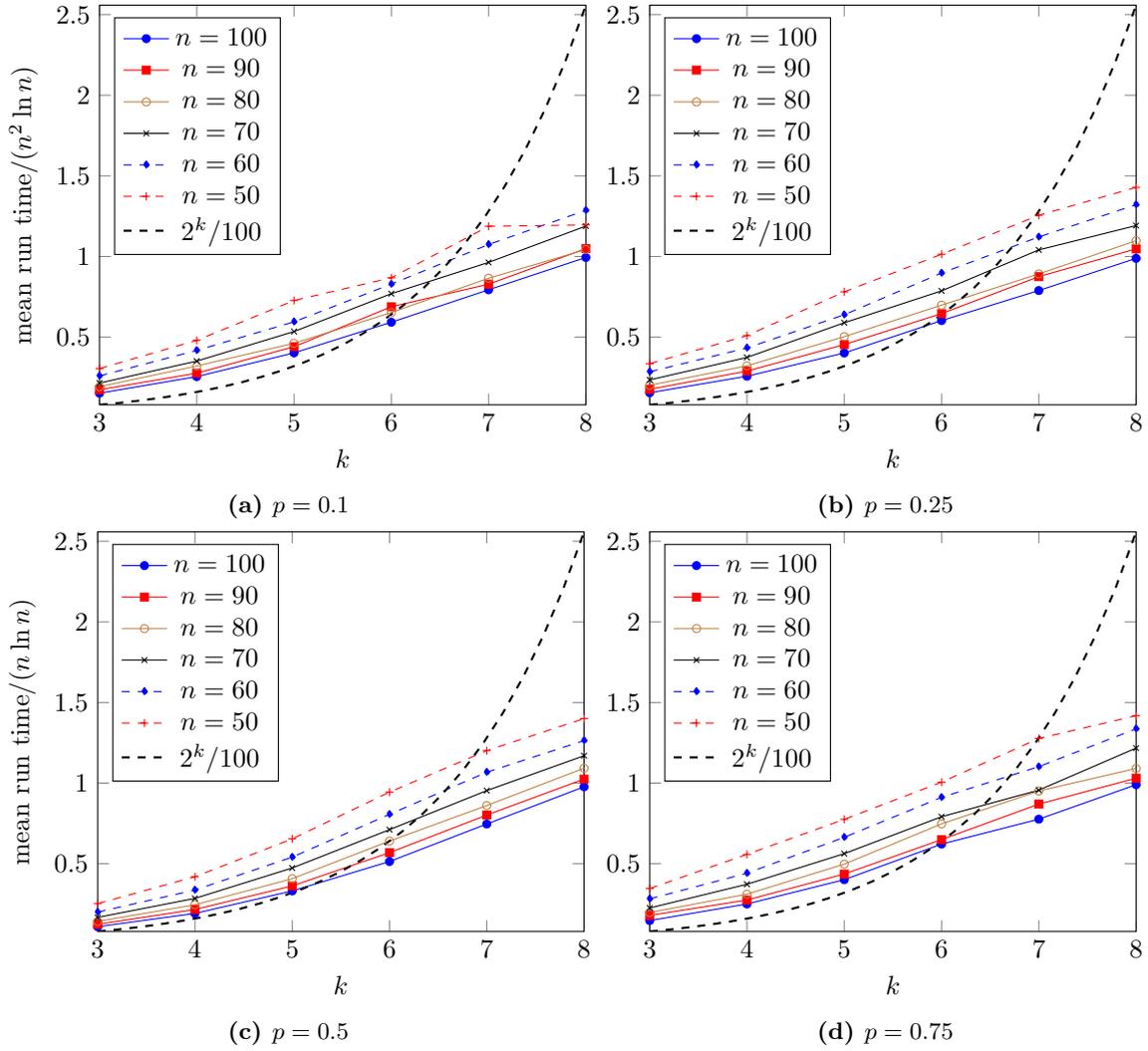
\begin{figure}
   \begin{subfigure}[b]{0.5\linewidth}
      \begin{tikzpicture}
         \begin{axis}[%
               legend pos=north west,
               xlabel=$k$,
               ylabel={mean run time$/(n^2 \ln n)$},
               enlargelimits=false]
            \pgfplotstableread{data/eajar.random_planted.edge_prob0.1-by-k-n100.dat}{\data}
            \addplot+[mark size=1.5pt] table[x=k,y expr=\thisrow{mean}/\norm] {\data};

            \pgfplotstableread{data/eajar.random_planted.edge_prob0.1-by-k-n90.dat}{\data}
            \addplot+[mark size=1.5pt] table[x=k,y expr=\thisrow{mean}/\norm] {\data};

            \pgfplotstableread{data/eajar.random_planted.edge_prob0.1-by-k-n80.dat}{\data}
            \addplot+[mark size=1.5pt] table[x=k,y expr=\thisrow{mean}/\norm] {\data};

            \pgfplotstableread{data/eajar.random_planted.edge_prob0.1-by-k-n70.dat}{\data}
            \addplot+[mark size=1.5pt] table[x=k,y expr=\thisrow{mean}/\norm] {\data};

            \pgfplotstableread{data/eajar.random_planted.edge_prob0.1-by-k-n60.dat}{\data}
            \addplot+[mark size=1.5pt] table[x=k,y expr=\thisrow{mean}/\norm] {\data};

            \pgfplotstableread{data/eajar.random_planted.edge_prob0.1-by-k-n50.dat}{\data}
            \addplot+[mark size=1.5pt] table[x=k,y expr=\thisrow{mean}/\norm] {\data};

            \addplot[thick,dashed,black,domain=3:8] {2^x/100};

            \legend{$n=100$,$n=90$,$n=80$,$n=70$,$n=60$,$n=50$,$2^k/100$}
         \end{axis}
      \end{tikzpicture}
      \caption{$p=0.1$}
   \end{subfigure}
   \hspace*{2mm}%
   \begin{subfigure}[b]{0.5\linewidth}
      \begin{tikzpicture}
         \begin{axis}[%
               legend pos=north west,
               xlabel=$k$,
               enlargelimits=false]
            \pgfplotstableread{data/eajar.random_planted.edge_prob0.25-by-k-n100.dat}{\data}
            \addplot+[mark size=1.5pt] table[x=k,y expr=\thisrow{mean}/\norm] {\data};

            \pgfplotstableread{data/eajar.random_planted.edge_prob0.25-by-k-n90.dat}{\data}
            \addplot+[mark size=1.5pt] table[x=k,y expr=\thisrow{mean}/\norm] {\data};

            \pgfplotstableread{data/eajar.random_planted.edge_prob0.25-by-k-n80.dat}{\data}
            \addplot+[mark size=1.5pt] table[x=k,y expr=\thisrow{mean}/\norm] {\data};

            \pgfplotstableread{data/eajar.random_planted.edge_prob0.25-by-k-n70.dat}{\data}
            \addplot+[mark size=1.5pt] table[x=k,y expr=\thisrow{mean}/\norm] {\data};

            \pgfplotstableread{data/eajar.random_planted.edge_prob0.25-by-k-n60.dat}{\data}
            \addplot+[mark size=1.5pt] table[x=k,y expr=\thisrow{mean}/\norm] {\data};

            \pgfplotstableread{data/eajar.random_planted.edge_prob0.25-by-k-n50.dat}{\data}
            \addplot+[mark size=1.5pt] table[x=k,y expr=\thisrow{mean}/\norm] {\data};

            \addplot[thick,dashed,black,domain=3:8] {2^x/100};

            \legend{$n=100$,$n=90$,$n=80$,$n=70$,$n=60$,$n=50$,$2^k/100$}
         \end{axis}
      \end{tikzpicture}
      \caption{$p=0.25$}
   \end{subfigure}\\
   \begin{subfigure}[b]{0.5\linewidth}
      \begin{tikzpicture}
         \begin{axis}[%
               legend pos=north west,
               xlabel=$k$,
               ylabel={mean run time$/(n \ln n)$},
               enlargelimits=false]
            \pgfplotstableread{data/eajar.random_planted-by-k-n100.dat}{\data}
            \addplot+[mark size=1.5pt] table[x=k,y expr=\thisrow{mean}/\norm] {\data};
            \pgfplotstableread{data/eajar.random_planted-by-k-n90.dat}{\data}
            \addplot+[mark size=1.5pt] table[x=k,y expr=\thisrow{mean}/\norm] {\data};
            \pgfplotstableread{data/eajar.random_planted-by-k-n80.dat}{\data}
            \addplot+[mark size=1.5pt] table[x=k,y expr=\thisrow{mean}/\norm] {\data};
            \pgfplotstableread{data/eajar.random_planted-by-k-n70.dat}{\data}
            \addplot+[mark size=1.5pt] table[x=k,y expr=\thisrow{mean}/\norm] {\data};
            \pgfplotstableread{data/eajar.random_planted-by-k-n60.dat}{\data}
            \addplot+[mark size=1.5pt] table[x=k,y expr=\thisrow{mean}/\norm] {\data};
            \pgfplotstableread{data/eajar.random_planted-by-k-n50.dat}{\data}
            \addplot+[mark size=1.5pt] table[x=k,y expr=\thisrow{mean}/\norm] {\data};

            \addplot[thick,dashed,black,domain=3:8] {2^x/100};

            \legend{$n=100$,$n=90$,$n=80$,$n=70$,$n=60$,$n=50$,$2^k/100$}
         \end{axis}
      \end{tikzpicture}
      \caption{$p=0.5$}
   \end{subfigure}
   \hspace*{2mm}%
   \begin{subfigure}[b]{0.5\linewidth}
      \begin{tikzpicture}
         \begin{axis}[%
               legend pos=north west,
               xlabel=$k$,
               enlargelimits=false]
            \pgfplotstableread{data/eajar.random_planted.edge_prob0.75-by-k-n100.dat}{\data}
            \addplot+[mark size=1.5pt] table[x=k,y expr=\thisrow{mean}/\norm] {\data};

            \pgfplotstableread{data/eajar.random_planted.edge_prob0.75-by-k-n90.dat}{\data}
            \addplot+[mark size=1.5pt] table[x=k,y expr=\thisrow{mean}/\norm] {\data};

            \pgfplotstableread{data/eajar.random_planted.edge_prob0.75-by-k-n80.dat}{\data}
            \addplot+[mark size=1.5pt] table[x=k,y expr=\thisrow{mean}/\norm] {\data};

            \pgfplotstableread{data/eajar.random_planted.edge_prob0.75-by-k-n70.dat}{\data}
            \addplot+[mark size=1.5pt] table[x=k,y expr=\thisrow{mean}/\norm] {\data};

            \pgfplotstableread{data/eajar.random_planted.edge_prob0.75-by-k-n60.dat}{\data}
            \addplot+[mark size=1.5pt] table[x=k,y expr=\thisrow{mean}/\norm] {\data};

            \pgfplotstableread{data/eajar.random_planted.edge_prob0.75-by-k-n50.dat}{\data}
            \addplot+[mark size=1.5pt] table[x=k,y expr=\thisrow{mean}/\norm] {\data};

            \addplot[thick,dashed,black,domain=3:8] {2^x/100};

            \legend{$n=100$,$n=90$,$n=80$,$n=70$,$n=60$,$n=50$,$2^k/100$}
         \end{axis}
      \end{tikzpicture}
      \caption{$p=0.75$}
   \end{subfigure}

   \caption[]{\label{fig:random-by-k} Mean run time of \ea{k} divided by $n^2 \ln n$ on random planted vertex cover instances as a function of $k$ for various edge probabilities $p$.}
\end{figure}

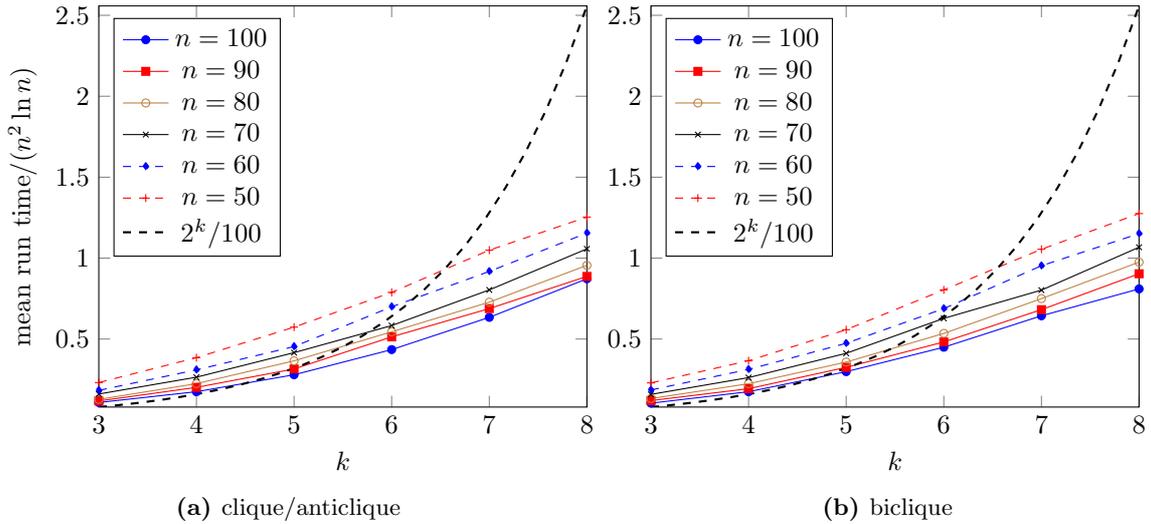
\begin{figure}
   \begin{subfigure}[b]{0.5\linewidth}
      \begin{tikzpicture}
         \begin{axis}[%
               legend pos=north west,
               xlabel=$k$,
               ylabel={mean run time$/(n^2 \ln n)$},
               enlargelimits=false]
            \pgfplotstableread{data/eajar.clique-anticlique-by-k-n100.dat}{\data}
            \addplot+[mark size=1.5pt] table[x=k,y expr=\thisrow{mean}/\norm] {\data};

            \pgfplotstableread{data/eajar.clique-anticlique-by-k-n90.dat}{\data}
            \addplot+[mark size=1.5pt] table[x=k,y expr=\thisrow{mean}/\norm] {\data};

            \pgfplotstableread{data/eajar.clique-anticlique-by-k-n80.dat}{\data}
            \addplot+[mark size=1.5pt] table[x=k,y expr=\thisrow{mean}/\norm] {\data};

            \pgfplotstableread{data/eajar.clique-anticlique-by-k-n70.dat}{\data}
            \addplot+[mark size=1.5pt] table[x=k,y expr=\thisrow{mean}/\norm] {\data};

            \pgfplotstableread{data/eajar.clique-anticlique-by-k-n60.dat}{\data}
            \addplot+[mark size=1.5pt] table[x=k,y expr=\thisrow{mean}/\norm] {\data};

            \pgfplotstableread{data/eajar.clique-anticlique-by-k-n50.dat}{\data}
            \addplot+[mark size=1.5pt] table[x=k,y expr=\thisrow{mean}/\norm] {\data};

            \addplot[thick,dashed,black,domain=3:8] {2^x/100};

            \legend{$n=100$,$n=90$,$n=80$,$n=70$,$n=60$,$n=50$,$2^k/100$}
         \end{axis}
      \end{tikzpicture}
      \caption{clique/anticlique}
   \end{subfigure}
   \hspace*{2mm}%
   \begin{subfigure}[b]{0.5\linewidth}
      \begin{tikzpicture}
         \begin{axis}[%
               legend pos=north west,
               xlabel=$k$,
               enlargelimits=false]
            \pgfplotstableread{data/eajar.biclique-by-k-n100.dat}{\data}
            \addplot+[mark size=1.5pt] table[x=k,y expr=\thisrow{mean}/\norm] {\data};

            \pgfplotstableread{data/eajar.biclique-by-k-n90.dat}{\data}
            \addplot+[mark size=1.5pt] table[x=k,y expr=\thisrow{mean}/\norm] {\data};

            \pgfplotstableread{data/eajar.biclique-by-k-n80.dat}{\data}
            \addplot+[mark size=1.5pt] table[x=k,y expr=\thisrow{mean}/\norm] {\data};

            \pgfplotstableread{data/eajar.biclique-by-k-n70.dat}{\data}
            \addplot+[mark size=1.5pt] table[x=k,y expr=\thisrow{mean}/\norm] {\data};

            \pgfplotstableread{data/eajar.biclique-by-k-n60.dat}{\data}
            \addplot+[mark size=1.5pt] table[x=k,y expr=\thisrow{mean}/\norm] {\data};

            \pgfplotstableread{data/eajar.biclique-by-k-n50.dat}{\data}
            \addplot+[mark size=1.5pt] table[x=k,y expr=\thisrow{mean}/\norm] {\data};

            \addplot[thick,dashed,black,domain=3:8] {2^x/100};

            \legend{$n=100$,$n=90$,$n=80$,$n=70$,$n=60$,$n=50$,$2^k/100$}
         \end{axis}
      \end{tikzpicture}
      \caption{biclique}
   \end{subfigure}

   \caption[]{\label{fig:nonrandom-by-k} Mean run time of \ea{k} divided by $n^2 \ln n$ on nonrandom vertex cover instances as a function of $k$.}
\end{figure}

\subsection{Comparison to Standard (1+1) EA}
\label{sec:comparison}
In this section we present experiments that compare the empirical running time of the \ea{k} to the standard (1+1)~EA with no constraint repair mechanism.
For the standard (1+1)~EA, we use the typical fitness function for
this problem that penalizes infeasible
covers~\cite{Khuri94anevolutionary,HeYaoLi2005,OlivetoHe2009vc}:
\begin{equation}
  \label{eq:fitness-standard}
  g(x) = |x| + n\cdot\Big\lvert \Big\{ (u,v) \in E \colon x[u] = x[v] = 0 \Big\}\Big\rvert.
\end{equation}
In contrast to the \ea{k} fitness function $f_k$ introduced in Equation~\eqref{eq:fitness} (which is designed to be maximized), the objective of the standard (1+1)~EA is to \emph{minimize} $g$, and a minimum feasible vertex cover of a graph $G = (V,E)$ encoded as a binary string of length $n$ would correspond to a global minimum of $g$.

Our choice of random planted, clique/anticlique and biclique graphs in the experiments reported above was motivated by the need to maintain fine-grained control over the parameters in order to understand how algorithm behavior scales with both $n$ and $k$. However, this control comes at a cost: the resulting graphs tend to be particularly easy to optimize. For instance, in the case of the nonrandom graphs, a simple greedy degree heuristic (i.e., choose vertices of the largest degree until a cover is obtained) would easily find the solution. This is because solution vertices have degree $n-1$ (in clique/anticlique graphs) or $n-k$ (in bicliques), and the remaining vertices not in the optimal cover have degree only $k$.  The penalty term in the standard fitness function stated in Equation~\eqref{eq:fitness-standard} in some sense provides a fitness signal that correlates to this greedy approach.
This can be roughly conceptualized as follows. Denote an optimal $k$-cover as $S \subseteq V$. From a uniform random
bitstring $x$, each vertex $v \in S$ where $x[v]=0$ is incident on at
least $(n-k)/2$ uncovered edges on average, whereas each non-optimal
vertex $u \in V \setminus S$ where $x[u]=0$ is incident on $k/2$
uncovered edges on average. In this situation, a mutation that results
in adding $v$ to the cover would
correspond to a much larger fitness improvement than one which adds $u$, and
would also obscure any losses obtained by simultaneously removing from
the cover some
non-optimal vertex $u' \in V \setminus S$ where $x[u']=1$. In fact, we would expect this to hold for many ``typical'' strings encountered by the (1+1)~EA and the process would tend to have a significant bias toward adding and keeping those vertices that the greedy heuristic would add. 

In the case of the \ea{k}, however, we would expect to pay a significant overhead as it first must find a feasible subgraph, and then subsequently rely on the jump-and-repair operation to iteratively move to new feasible supergraphs. We conjecture that the standard (1+1)~EA would have a significant advantage on clique/anticlique graphs and bicliques, and this may carry over to random planted instances in which the edges are chosen uniformly from an underlying clique/anticlique.

We repeat the experiments with the (1+1)~EA using the same graphs from the previous experiments.
The comparison of runtime as a function of $n$ on graphs with cover size $k=8$ in Figure~\ref{fig:eajar-ea-random-by-n} (for the random planted instances) and Figure~\ref{fig:eajar-ea-nonrandom-by-n} (for the nonrandom instances).
These results confirm our suspicions that the standard (1+1)~EA has an
advantage on these relatively simple graphs. Moreover, the runtime of
the standard EA tends to be more tightly concentrated than that of the \ea{k}.

\begin{figure}
   \begin{subfigure}[b]{0.5\textwidth}
      \begin{tikzpicture}
         \begin{axis}[%
               legend pos=north west,
               xlabel=$n$, 
               ylabel={mean run time},
               ymax=100000,
               enlargelimits=false,
               legend cell align={left}]
            \pgfplotstableread{data/eajar.random_planted.edge_prob0.1-by-n-k8.dat}{\data}

            \addplot+[mark size=1.5pt] table[x=n,y=mean] {\data};

            \addplot[name path=upper,draw=none,forget plot] table[x=n,y   expr=\thisrow{mean}+\thisrow{sd}] {\data};
            \addplot[name path=lower,draw=none,forget plot] table[x=n,y expr=\thisrow{mean}-\thisrow{sd}] {\data};
            \pgfplotsset{cycle list shift=-1}
            \addplot+[fill opacity=0.1,forget plot] fill between[of=upper and lower];
            \pgfplotsset{cycle list shift=0}

            \pgfplotstableread{data/ea.random_planted.edge_prob0.1-by-n-k8.dat}{\data}

            \addplot+[mark size=1.5pt] table[x=n,y=mean] {\data};

            \addplot[name path=upper,draw=none,forget plot] table[x=n,y expr=\thisrow{mean}+\thisrow{sd}] {\data};
            \addplot[name path=lower,draw=none,forget plot]
            table[x=n,y expr=\thisrow{mean}-\thisrow{sd}] {\data};
            \pgfplotsset{cycle list shift=-1}
            \addplot+[fill opacity=0.1,forget plot] fill between[of=upper and lower];
            \pgfplotsset{cycle list shift=0}
            
            \legend{\ea{k},(1+1)~EA}
         \end{axis}
      \end{tikzpicture}%
      \caption{$p=0.1$}
   \end{subfigure}
   \hspace*{2mm}%
   \begin{subfigure}[b]{0.5\textwidth}
      \begin{tikzpicture}
         \begin{axis}[%
               legend pos=north west,
               xlabel=$n$,
               ymax=100000,               
               enlargelimits=false,
               legend cell align={left}]

            \pgfplotstableread{data/eajar.random_planted.edge_prob0.25-by-n-k8.dat}{\data}
            \addplot+[mark size=1.5pt] table[x=n,y=mean] {\data};
            \addplot[name path=upper,draw=none,forget plot] table[x=n,y   expr=\thisrow{mean}+\thisrow{sd}] {\data};
            \addplot[name path=lower,draw=none,forget plot] table[x=n,y expr=\thisrow{mean}-\thisrow{sd}] {\data};
            \pgfplotsset{cycle list shift=-1}
            \addplot+[fill opacity=0.1,forget plot] fill between[of=upper and lower];
            \pgfplotsset{cycle list shift=0}

            \pgfplotstableread{data/ea.random_planted.edge_prob0.25-by-n-k8.dat}{\data}
            \addplot+[mark size=1.5pt] table[x=n,y=mean] {\data};
            
            \addplot[name path=upper,draw=none,forget plot]
            table[x=n,y   expr=\thisrow{mean}+\thisrow{sd}] {\data};

            \addplot[y filter/.code={\pgfmathparse{max(\pgfmathresult,0)}\pgfmathresult},
            name path=lower,draw=none,forget plot]
            table[x=n,y   expr=\thisrow{mean}-\thisrow{sd}] {\data};

            \pgfplotsset{cycle list shift=-1}
            \addplot+[fill opacity=0.1,forget plot] fill between[of=upper and lower];
             \pgfplotsset{cycle list shift=0}

            \legend{\ea{k},(1+1)~EA}
         \end{axis}
      \end{tikzpicture}%
      \caption{\label{fig:eajar-ea-random-by-n-p0.25}$p=0.25$}
   \end{subfigure}\\
   \begin{subfigure}[b]{0.5\textwidth}
      \begin{tikzpicture}
         \begin{axis}[%
               legend pos=north west,
               xlabel=$n$,
               ylabel={mean run time},
               ymax=100000,
               enlargelimits=false,
               legend cell align={left}]

            \pgfplotstableread{data/eajar.random_planted-by-n-k8.dat}{\data}

            \addplot+[mark size=1.5pt] table[x=n,y=mean] {\data};
            \addplot[name path=upper,draw=none,forget plot] table[x=n,y   expr=\thisrow{mean}+\thisrow{sd}] {\data};
            \addplot[name path=lower,draw=none,forget plot] table[x=n,y expr=\thisrow{mean}-\thisrow{sd}] {\data};
            \pgfplotsset{cycle list shift=-1}
            \addplot+[fill opacity=0.1,forget plot] fill between[of=upper and lower];
            \pgfplotsset{cycle list shift=0}

            \pgfplotstableread{data/ea.random_planted.edge_prob0.5-by-n-k8.dat}{\data}
            \addplot+[mark size=1.5pt] table[x=n,y=mean] {\data};
            \addplot[name path=upper,draw=none,forget plot] table[x=n,y   expr=\thisrow{mean}+\thisrow{sd}] {\data};
            \addplot[name path=lower,draw=none,forget plot] table[x=n,y expr=\thisrow{mean}-\thisrow{sd}] {\data};
            \pgfplotsset{cycle list shift=-1}
            \addplot+[fill opacity=0.1,forget plot] fill between[of=upper and lower];
            \pgfplotsset{cycle list shift=0}
            
            \legend{\ea{k},(1+1)~EA}
         \end{axis}
      \end{tikzpicture}%
      \caption{$p=0.5$}
   \end{subfigure}
   \hspace*{2mm}%
   \begin{subfigure}[b]{0.5\textwidth}
      \begin{tikzpicture}
         \begin{axis}[%
               legend pos=north west,
               xlabel=$n$,
               ymax=100000,               
               enlargelimits=false,
               legend cell align={left}]
            \pgfplotstableread{data/eajar.random_planted.edge_prob0.75-by-n-k8.dat}{\data}

            \addplot+[mark size=1.5pt] table[x=n,y=mean] {\data};
            \addplot[name path=upper,draw=none,forget plot] table[x=n,y   expr=\thisrow{mean}+\thisrow{sd}] {\data};
            \addplot[name path=lower,draw=none,forget plot] table[x=n,y expr=\thisrow{mean}-\thisrow{sd}] {\data};
            \pgfplotsset{cycle list shift=-1}
            \addplot+[fill opacity=0.1,forget plot] fill between[of=upper and lower];
            \pgfplotsset{cycle list shift=0}

            \pgfplotstableread{data/ea.random_planted.edge_prob0.75-by-n-k8.dat}{\data}
            \addplot+[mark size=1.5pt] table[x=n,y=mean] {\data};
            \addplot[name path=upper,draw=none,forget plot] table[x=n,y   expr=\thisrow{mean}+\thisrow{sd}] {\data};
            \addplot[name path=lower,draw=none,forget plot] table[x=n,y expr=\thisrow{mean}-\thisrow{sd}] {\data};
            \pgfplotsset{cycle list shift=-1}
            \addplot+[fill opacity=0.1,forget plot] fill between[of=upper and lower];
            \pgfplotsset{cycle list shift=0}

            \legend{\ea{k},(1+1)~EA}
         \end{axis}
      \end{tikzpicture}%
      \caption{$p=0.75$}
   \end{subfigure}
   \caption{\label{fig:eajar-ea-random-by-n} Comparison of mean running time of
     \ea{k} and (1+1)~EA on random planted vertex cover instances for
     various edge probabilities $p$ and $k=8$. Shaded bands represent standard deviation.}
\end{figure}

We also uncovered somewhat unexpected behavior on one random planted
instance with $20$ vertices and density $p=1/4$. This can be observed
in Figure~\ref{fig:eajar-ea-random-by-n-p0.25} where 15 out of the 100
runs of the (1+1)~EA on this graph required over $10^6$ fitness
function calls to find an optimal cover. The reason for this behavior
is that the EA can become quickly trapped in a locally optimal cover,
which is presumably difficult to escape.  This deceptive graph is
shown in Figure~\ref{fig:hard-cover-instance} where we indicate both
an optimal cover and a cover in which the (1+1)~EA became trapped
during a run. The reason for the emergence of this structure at these
particular parameters is not known, and could be a suitable course
for future work.

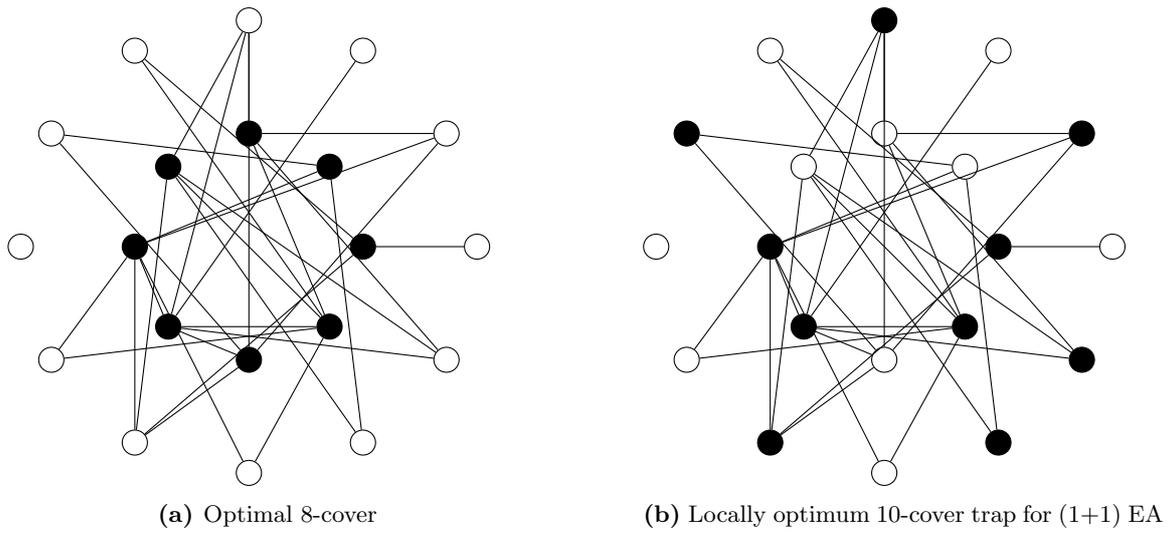
\begin{figure}
  \centering
  \begin{subfigure}{0.45\linewidth}
\begin{tikzpicture}[scale=1.5]
  
  \foreach \i in {1,...,8}
  {
  \coordinate (inner\i) at ({360/8 * (\i - 1)}:1);  
  } 
  \foreach \i in {1,...,12}
  {
  \coordinate (outer\i) at ({360/12 * (\i - 1)}:2);  
  }

  \node[smallvertex,fill=black] (0) at (inner1) {};
  \node[smallvertex,fill=black] (1) at (inner2) {};
  \node[smallvertex,fill=black] (4) at (inner3) {};
  \node[smallvertex,fill=black] (7) at (inner4) {};
  \node[smallvertex,fill=black] (10) at (inner5) {};
  \node[smallvertex,fill=black] (11) at (inner6) {};
  \node[smallvertex,fill=black] (14) at (inner7) {};
  \node[smallvertex,fill=black] (17) at (inner8) {};
  \node[smallvertex] (2) at (outer1) {};
  \node[smallvertex] (3) at (outer2) {};
  \node[smallvertex] (5) at (outer3) {};
  \node[smallvertex] (6) at (outer4) {};
  \node[smallvertex] (8) at (outer5) {};
  \node[smallvertex] (9) at (outer6) {};
  \node[smallvertex] (12) at (outer7) {};
  \node[smallvertex] (13) at (outer8) {};
  \node[smallvertex] (15) at (outer9) {};
  \node[smallvertex] (16) at (outer10) {};
  \node[smallvertex] (18) at (outer11) {};
  \node[smallvertex] (19) at (outer12) {};

  \draw (0) edge (2);
  \draw (0) edge (8);
  \draw (0) edge (15);
  \draw (1) edge (9);
  \draw (1) edge (10);
  \draw (1) edge (18);
  \draw (3) edge (4);
  \draw (3) edge (10);
  \draw (3) edge (14);
  \draw (4) edge (6);
  \draw (4) edge (17);
  \draw (4) edge (19);
  \draw (5) edge (11);
  \draw (6) edge (7);
  \draw (6) edge (11);
  \draw (6) edge (14);
  \draw (7) edge (15);
  \draw (7) edge (17);
  \draw (7) edge (18);
  \draw (7) edge (19);
  \draw (8) edge (17);
  \draw (9) edge (14);
  \draw (10) edge (11);
  \draw (10) edge (13);
  \draw (10) edge (15);
  \draw (10) edge (16);
  \draw (11) edge (14);
  \draw (11) edge (17);
  \draw (11) edge (19);
  \draw (13) edge (17);
  \draw (14) edge (15);
  \draw (16) edge (17);

\end{tikzpicture}

    \caption{Optimal 8-cover}
  \end{subfigure}\hfill
  \begin{subfigure}{0.45\linewidth}
\begin{tikzpicture}[scale=1.5]
 
  \foreach \i in {1,...,8}
  {
  \coordinate (inner\i) at ({360/8 * (\i-1)}:1);  
  } 
  \foreach \i in {1,...,12}
  {
  \coordinate (outer\i) at ({360/12 * (\i-1)}:2) {};  
  }

  \node[smallvertex,fill=black] (0) at (inner1) {};
  \node[smallvertex] (1) at (inner2) {};
  \node[smallvertex] (4) at (inner3) {};
  \node[smallvertex] (7) at (inner4) {};
  \node[smallvertex,fill=black] (10) at (inner5) {};
  \node[smallvertex,fill=black] (11) at (inner6) {};
  \node[smallvertex] (14) at (inner7) {};
  \node[smallvertex,fill=black] (17) at (inner8) {};
  \node[smallvertex] (2) at (outer1) {};
  \node[smallvertex,fill=black] (3) at (outer2) {};
  \node[smallvertex] (5) at (outer3) {};
  \node[smallvertex,fill=black] (6) at (outer4) {};
  \node[smallvertex] (8) at (outer5) {};
  \node[smallvertex,fill=black] (9) at (outer6) {};
  \node[smallvertex] (12) at (outer7) {};
  \node[smallvertex] (13) at (outer8) {};
  \node[smallvertex,fill=black] (15) at (outer9) {};
  \node[smallvertex] (16) at (outer10) {};
  \node[smallvertex,fill=black] (18) at (outer11) {};
  \node[smallvertex,fill=black] (19) at (outer12) {};

  \draw (0) edge (2);
  \draw (0) edge (8);
  \draw (0) edge (15);
  \draw (1) edge (9);
  \draw (1) edge (10);
  \draw (1) edge (18);
  \draw (3) edge (4);
  \draw (3) edge (10);
  \draw (3) edge (14);
  \draw (4) edge (6);
  \draw (4) edge (17);
  \draw (4) edge (19);
  \draw (5) edge (11);
  \draw (6) edge (7);
  \draw (6) edge (11);
  \draw (6) edge (14);
  \draw (7) edge (15);
  \draw (7) edge (17);
  \draw (7) edge (18);
  \draw (7) edge (19);
  \draw (8) edge (17);
  \draw (9) edge (14);
  \draw (10) edge (11);
  \draw (10) edge (13);
  \draw (10) edge (15);
  \draw (10) edge (16);
  \draw (11) edge (14);
  \draw (11) edge (17);
  \draw (11) edge (19);
  \draw (13) edge (17);
  \draw (14) edge (15);
  \draw (16) edge (17);

\end{tikzpicture}

    \caption{Locally optimum 10-cover trap for (1+1)~EA}
  \end{subfigure}
\caption{\label{fig:hard-cover-instance} Random planted instance with
  $n=20$, $k=8$ and $p=1/4$.}
\end{figure}

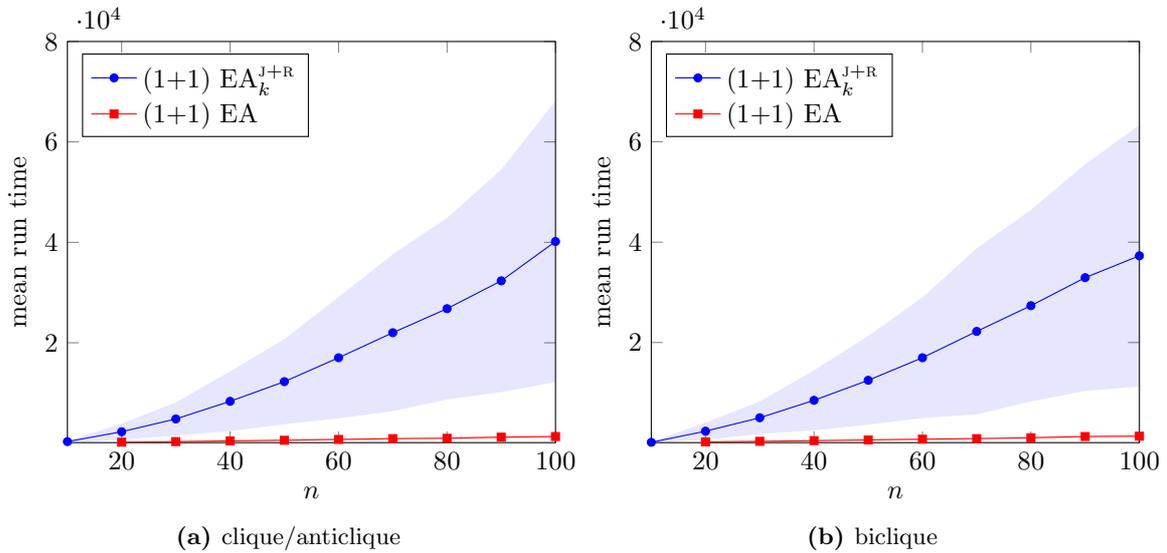
\begin{figure}
   \begin{subfigure}[b]{0.5\textwidth}
      \begin{tikzpicture}
         \begin{axis}[%
               legend pos=north west,
               xlabel=$n$,
               ylabel={mean run time},
               ymax=80000,
               enlargelimits=false,
               legend cell align={left}]

            \pgfplotstableread{data/eajar.clique-anticlique-by-n-k8.dat}{\data}

            \addplot+[mark size=1.5pt] table[x=n,y=mean] {\data};
            \addplot[name path=upper,draw=none,forget plot] table[x=n,y   expr=\thisrow{mean}+\thisrow{sd}] {\data};
            \addplot[name path=lower,draw=none,forget plot] table[x=n,y expr=\thisrow{mean}-\thisrow{sd}] {\data};
            \pgfplotsset{cycle list shift=-1}
            \addplot+[fill opacity=0.1,forget plot] fill between[of=upper and lower];
            \pgfplotsset{cycle list shift=0}

            \pgfplotstableread{data/ea.clique-anticlique-by-n-k8.dat}{\data}
            \addplot+[mark size=1.5pt] table[x=n,y=mean] {\data};
            \addplot[name path=upper,draw=none,forget plot] table[x=n,y   expr=\thisrow{mean}+\thisrow{sd}] {\data};
            \addplot[name path=lower,draw=none,forget plot] table[x=n,y expr=\thisrow{mean}-\thisrow{sd}] {\data};
            \pgfplotsset{cycle list shift=-1}
            \addplot+[fill opacity=0.1,forget plot] fill between[of=upper and lower];
            \pgfplotsset{cycle list shift=0}
            
            \legend{\ea{k},(1+1)~EA}
         \end{axis}
      \end{tikzpicture}%

      \caption[]{clique/anticlique}
   \end{subfigure}
   \hfill
   \begin{subfigure}[b]{0.5\textwidth}
      \begin{tikzpicture}
         \begin{axis}[%
               legend pos=north west,
               xlabel=$n$,
               ylabel={mean run time},
               ymax=80000,
               enlargelimits=false,
               legend cell align={left}]

            \pgfplotstableread{data/eajar.biclique-by-n-k8.dat}{\data}

            \addplot+[mark size=1.5pt] table[x=n,y=mean] {\data};
            \addplot[name path=upper,draw=none,forget plot] table[x=n,y   expr=\thisrow{mean}+\thisrow{sd}] {\data};
            \addplot[name path=lower,draw=none,forget plot] table[x=n,y expr=\thisrow{mean}-\thisrow{sd}] {\data};
            \pgfplotsset{cycle list shift=-1}
            \addplot+[fill opacity=0.1,forget plot] fill between[of=upper and lower];
            \pgfplotsset{cycle list shift=0}

            \pgfplotstableread{data/ea.biclique-by-n-k8.dat}{\data}
            \addplot+[mark size=1.5pt] table[x=n,y=mean] {\data};
            \addplot[name path=upper,draw=none,forget plot] table[x=n,y   expr=\thisrow{mean}+\thisrow{sd}] {\data};
            \addplot[name path=lower,draw=none,forget plot] table[x=n,y expr=\thisrow{mean}-\thisrow{sd}] {\data};
            \pgfplotsset{cycle list shift=-1}
            \addplot+[fill opacity=0.1,forget plot] fill between[of=upper and lower];
            \pgfplotsset{cycle list shift=0}
            
            \legend{\ea{k},(1+1)~EA}
         \end{axis}
      \end{tikzpicture}%
      \caption[]{biclique}
   \end{subfigure}
   \caption{\label{fig:eajar-ea-nonrandom-by-n} Comparison of mean running time of
     \ea{k} and (1+1)~EA nonrandom instances with $k=8$. Shaded bands represent standard deviation.}
\end{figure}

For the most part, the standard (1+1)~EA performs exceptionally well
as long as there are no pathologies in the search space that cause it
to become locally trapped. To broaden our perspective, we also
consider graphs with more intricate structure.
In particular, we compare the performance of the two algorithms on so-called Papadimitriou-Steiglitz graphs~\cite{PapadimitriouS82}, which have been the subject of both empirical and theoretical investigations of evolutionary algorithms on vertex cover~\cite{Khuri94anevolutionary,OlivetoHe2009vc}. This class of graphs was originally defined to demonstrate the failure of simple greedy degree-heuristics to approximate minimum vertex cover and consists of a complete bipartite graph $K_{\ell,\ell+2}$ to which each vertex in the size $\ell+2$ partition is attached a pendant vertex (see Figure~\ref{fig:psgraph}).

\begin{figure}
  \centering
  \begin{tikzpicture}[node distance=5mm]
    \node[smallvertex] (v1) {};
    \node[smallvertex,right=of v1] (v2) {};
    \node[smallvertex,right=of v2] (v3) {};
    \node[smallvertex,right=of v3] (v4) {};
    \node[smallvertex,right=of v4] (v5) {};

    \node[smallvertex,fill=black,below=1cm of v1] (u1) {};
    \node[smallvertex,fill=black,right=of u1] (u2) {};
    \node[smallvertex,fill=black,right=of u2] (u3) {};
    \node[smallvertex,fill=black,right=of u3] (u4) {};
    \node[smallvertex,fill=black,right=of u4] (u5) {};
    \node[smallvertex,fill=black,right=of u5] (u6) {};
    \node[smallvertex,fill=black,right=of u6] (u7) {};

    \node[smallvertex,below=of u1] (w1) {};
    \node[smallvertex,right=of w1] (w2) {};
    \node[smallvertex,right=of w2] (w3) {};
    \node[smallvertex,right=of w3] (w4) {};
    \node[smallvertex,right=of w4] (w5) {};
    \node[smallvertex,right=of w5] (w6) {};
    \node[smallvertex,right=of w6] (w7) {};

    \draw (v1) edge (u1);
    \draw (v1) edge (u2);
    \draw (v1) edge (u3);
    \draw (v1) edge (u4);
    \draw (v1) edge (u5);
    \draw (v1) edge (u6);
    \draw (v1) edge (u7);

    \draw (v2) edge (u1);
    \draw (v2) edge (u2);
    \draw (v2) edge (u3);
    \draw (v2) edge (u4);
    \draw (v2) edge (u5);
    \draw (v2) edge (u6);
    \draw (v2) edge (u7);

    \draw (v3) edge (u1);
    \draw (v3) edge (u2);
    \draw (v3) edge (u3);
    \draw (v3) edge (u4);
    \draw (v3) edge (u5);
    \draw (v3) edge (u6);
    \draw (v3) edge (u7);

    \draw (v4) edge (u1);
    \draw (v4) edge (u2);
    \draw (v4) edge (u3);
    \draw (v4) edge (u4);
    \draw (v4) edge (u5);
    \draw (v4) edge (u6);
    \draw (v4) edge (u7);

    \draw (v5) edge (u1);
    \draw (v5) edge (u2);
    \draw (v5) edge (u3);
    \draw (v5) edge (u4);
    \draw (v5) edge (u5);
    \draw (v5) edge (u6);
    \draw (v5) edge (u7);

    \draw (u1) edge (w1);
    \draw (u2) edge (w2);
    \draw (u3) edge (w3);
    \draw (u4) edge (w4);
    \draw (u5) edge (w5);
    \draw (u6) edge (w6);
    \draw (u7) edge (w7);

  \end{tikzpicture}
  \caption{\label{fig:psgraph} Papadimitriou-Steiglitz graph~\cite{PapadimitriouS82} of order $\ell = 5$ on $3\ell + 4$ vertices. Optimal size $\ell+2$ cover is colored in black.}
\end{figure}
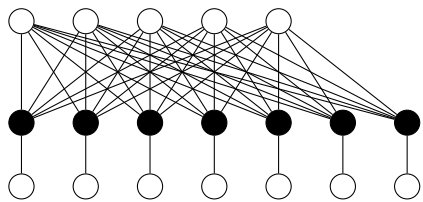

Figure~\ref{fig:ps-ecdf} compares the running times of the (1+1)~EA
and \ea{k} by plotting the empirical cumulative distribution
functions of the running time on a representative set of
Papadimitriou-Steiglitz graphs. For each graph, the EAs were run 100
times, and the number of fitness evaluations required to find the
optimal vertex cover was recorded for each run, resulting in the
distribution plots.

Clearly, for smaller graphs, the performance of the (1+1)~EA dominates that of the \ea{k}. This is not surprising, as the latter must always rebuild $G$ from induced subgraphs.  However, the \ea{k} remains relatively impervious to the effect of the suboptimal local optimum that impedes the success probability of the (1+1)~EA as the graph size increases. For example, in \ref{fig:ps-largest}, the \ea{k} is successful in all the runs by 12730889 fitness evaluations whereas the (1+1)~EA has only solved $< 50\%$ of its runs by 20711816 fitness evaluations. As $n$ increases, the (1+1)~EA exhibits polynomial runtime only with probability approaching $1/2$. This can be overcome with restarts. In fact, Oliveto, He and Yao~\cite[Corollary 2]{OlivetoHe2009vc} proved that the (1+1)~EA with multiple runs would require only $O(n \log n)$ steps to solve this class of graphs.

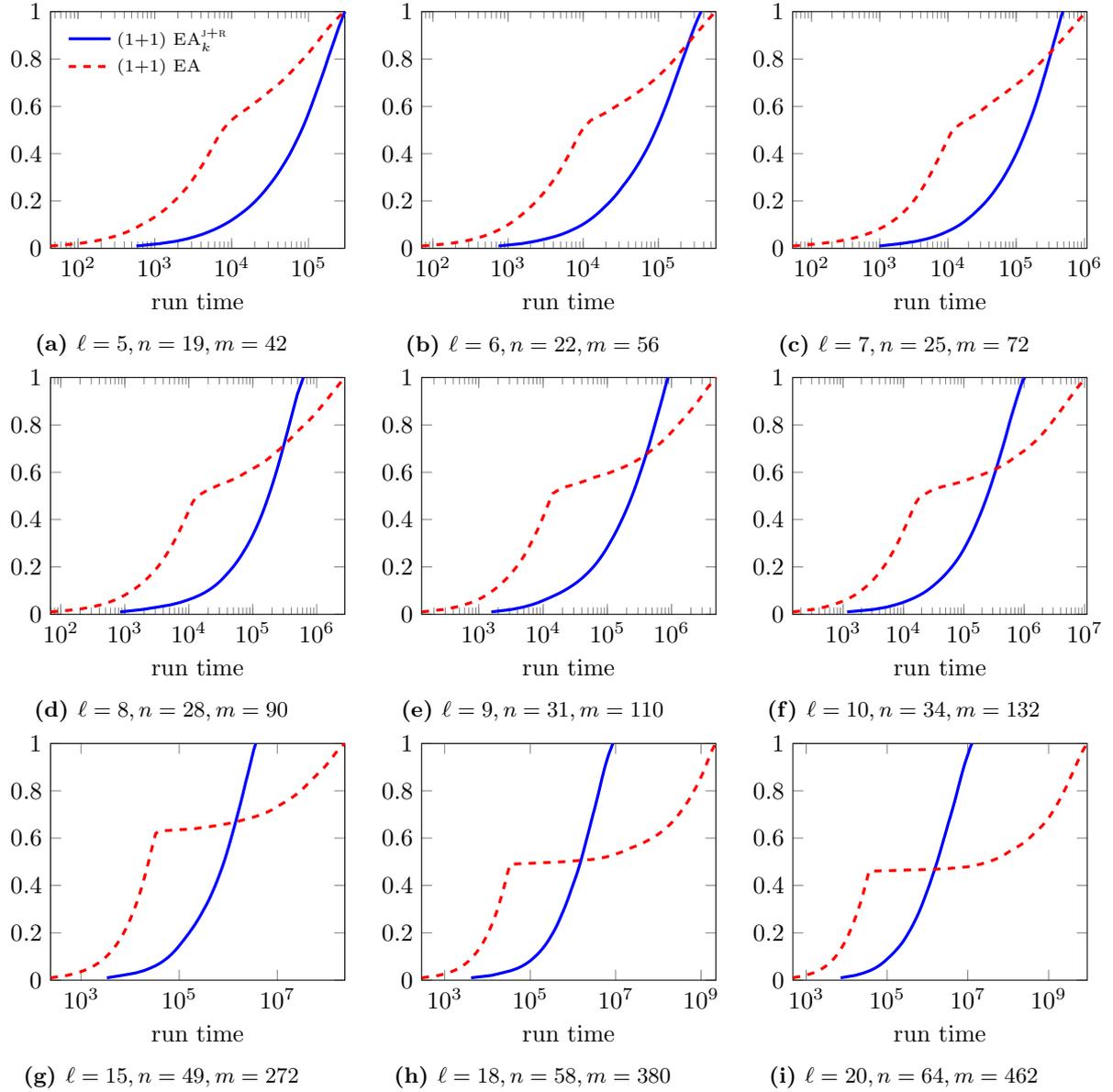
\begin{figure}
  \begin{subfigure}[b]{0.3\linewidth}
    \begin{tikzpicture}
    \begin{semilogxaxis}[%
      clip=false,
      xlabel near ticks,
      xlabel={run time},      
      legend pos=north west,
      legend cell align={left},
      legend style={draw=none,font=\scriptsize},
    enlargelimits=false,
    width=5.8cm,    
    ymin=0,ymax=1,    
    every axis plot/.style={very thick,no marks}]   
      \addplot table[x=runtime,y=freq]
      {data/eajar.papadimitriou-steiglitz-5-ecdf.dat};
      \addplot[dashed,red] table[x=runtime,y=freq]
      {data/ea.papadimitriou-steiglitz-5-ecdf.dat};
      \legend{\ea{k},(1+1)~EA}
\end{semilogxaxis}
\end{tikzpicture}
\caption{$\ell=5, n=19, m=42$}
\end{subfigure}\hfill
  \begin{subfigure}[b]{0.3\linewidth}
    \begin{tikzpicture}
    \begin{semilogxaxis}[%
      clip=false,
      xlabel near ticks,
      xlabel={run time},
    enlargelimits=false,
    width=5.8cm,    
    ymin=0,ymax=1,    
    every axis plot/.style={very thick,no marks}]   
      \addplot table[x=runtime,y=freq]
      {data/eajar.papadimitriou-steiglitz-6-ecdf.dat};
      \addplot[dashed,red] table[x=runtime,y=freq]
      {data/ea.papadimitriou-steiglitz-6-ecdf.dat};      
\end{semilogxaxis}
\end{tikzpicture}
\caption{$\ell=6, n=22, m=56$}
\end{subfigure}\hfill
  \begin{subfigure}[b]{0.3\linewidth}
    \begin{tikzpicture}
    \begin{semilogxaxis}[%
      clip=false,
      xlabel near ticks,
      xlabel={run time},      
    enlargelimits=false,
    width=5.8cm,    
    ymin=0,ymax=1,    
    every axis plot/.style={very thick,no marks}]   
      \addplot table[x=runtime,y=freq]
      {data/eajar.papadimitriou-steiglitz-7-ecdf.dat};
      \addplot[dashed,red] table[x=runtime,y=freq]
      {data/ea.papadimitriou-steiglitz-7-ecdf.dat};      
\end{semilogxaxis}
\end{tikzpicture}
\caption{$\ell=7, n=25, m=72$}
\end{subfigure}

  \begin{subfigure}[b]{0.3\linewidth}
    \begin{tikzpicture}
    \begin{semilogxaxis}[%
      clip=false,
      xlabel near ticks,
      xlabel={run time},
          enlargelimits=false,
    width=5.8cm,    
    ymin=0,ymax=1,    
    every axis plot/.style={very thick,no marks}]   
      \addplot table[x=runtime,y=freq]
      {data/eajar.papadimitriou-steiglitz-8-ecdf.dat};
      \addplot[dashed,red] table[x=runtime,y=freq]
      {data/ea.papadimitriou-steiglitz-8-ecdf.dat};      
\end{semilogxaxis}
\end{tikzpicture}
\caption{$\ell=8, n=28, m=90$}
\end{subfigure}\hfill
  \begin{subfigure}[b]{0.3\linewidth}
    \begin{tikzpicture}
    \begin{semilogxaxis}[%
      clip=false,
      xlabel near ticks,
      xlabel={run time},      
    enlargelimits=false,
    width=5.8cm,    
    ymin=0,ymax=1,    
    every axis plot/.style={very thick,no marks}]   
      \addplot table[x=runtime,y=freq]
      {data/eajar.papadimitriou-steiglitz-9-ecdf.dat};
      \addplot[dashed,red] table[x=runtime,y=freq]
      {data/ea.papadimitriou-steiglitz-9-ecdf.dat};     
\end{semilogxaxis}
\end{tikzpicture}
\caption{$\ell=9, n=31, m=110$}
\end{subfigure}\hfill
  \begin{subfigure}[b]{0.3\linewidth}
    \begin{tikzpicture}
    \begin{semilogxaxis}[%
      clip=false,
      xlabel near ticks,
      xlabel={run time},      
    enlargelimits=false,
    width=5.8cm,    
    ymin=0,ymax=1,    
    every axis plot/.style={very thick,no marks}]   
      \addplot table[x=runtime,y=freq]
      {data/eajar.papadimitriou-steiglitz-10-ecdf.dat};
      \addplot[dashed,red] table[x=runtime,y=freq]
      {data/ea.papadimitriou-steiglitz-10-ecdf.dat};      
\end{semilogxaxis}
\end{tikzpicture}
\caption{$\ell=10, n=34, m=132$}
\end{subfigure}

  \begin{subfigure}[b]{0.3\linewidth}
    \begin{tikzpicture}
    \begin{semilogxaxis}[%
      clip=false,
      xlabel near ticks,
      xlabel={run time},      
    enlargelimits=false,
    width=5.8cm,    
    ymin=0,ymax=1,    
    every axis plot/.style={very thick,no marks}]   
      \addplot table[x=runtime,y=freq]
      {data/eajar.papadimitriou-steiglitz-15-ecdf.dat};
      \addplot[dashed,red] table[x=runtime,y=freq]
      {data/ea.papadimitriou-steiglitz-15-ecdf.dat};      
\end{semilogxaxis}
\end{tikzpicture}
\caption{$\ell=15, n=49, m=272$}
\end{subfigure}\hfill
  \begin{subfigure}[b]{0.3\linewidth}
    \begin{tikzpicture}
    \begin{semilogxaxis}[%
      clip=false,
      xlabel near ticks,
      xlabel={run time},      
    enlargelimits=false,
    width=5.8cm,    
    ymin=0,ymax=1,    
    every axis plot/.style={very thick,no marks}]   
      \addplot table[x=runtime,y=freq]
      {data/eajar.papadimitriou-steiglitz-18-ecdf.dat};
      \addplot[dashed,red] table[x=runtime,y=freq]
      {data/ea.papadimitriou-steiglitz-18-ecdf.dat};      
\end{semilogxaxis}
\end{tikzpicture}
\caption{$\ell=18, n=58, m=380$}
\end{subfigure}\hfill
\begin{subfigure}[b]{0.3\linewidth}
    \begin{tikzpicture}
    \begin{semilogxaxis}[%
    clip=false,   
    width=5.8cm,
    xlabel near ticks,
    xlabel={run time},
    enlargelimits=false,
    ymin=0,ymax=1,    
    every axis plot/.style={very thick,no marks}]   
      \addplot table[x=runtime,y=freq]
      {data/eajar.papadimitriou-steiglitz-20-ecdf.dat};
      \addplot[dashed,red] table[x=runtime,y=freq] {data/ea.papadimitriou-steiglitz-20-ecdf.dat};
\end{semilogxaxis}
\end{tikzpicture}
\caption{\label{fig:ps-largest}$\ell=20,n=64,m=462$}
  \end{subfigure}
\caption{\label{fig:ps-ecdf}Empirical cumulative distribution functions for running time
  of the (1+1)~EA and the \ea{k} on Papadimitriou-Steiglitz
  instances. The $y$-axes denote the proportion of instances solved by
a particular run time.}
\end{figure}

A more compelling contrast between the (1+1)~EA and the \ea{k} is
possible in graphs with more detailed structure where the arrangement
of local optima is more complex.  In the same paper, Oliveto, He and
Yao presented a class of graphs that are difficult for the (1+1)~EA to
find an approximation within a factor of $2(1-\epsilon) - o(1)$. These
graphs~\cite[Section 7]{OlivetoHe2009vc}, which we will refer to as
Oliveto-He-Yao graphs, are constructed as a chain of complete
bipartite graphs with unbalanced partitions. In particular, for a
given $n \in \N$ and constant $0 < \epsilon < 1/2$, we take $\sqrt{n}$
copies of the complete bipartite graph
$K_{(1-\epsilon)(\sqrt{n}-1),1+\epsilon(\sqrt{n}-1)}$ and connect each
``block'' by making a vertex in the small partition of one block
adjacent to a vertex in the small partition of the next block (see
Figure~\ref{fig:ohygraph}. This $\epsilon$ parameter controls how
unbalanced the partitions are in each complete bipartite graph block.
This class of graphs is more difficult for the (1+1)~EA to optimize
because it must simultaneously synchronize all the blocks to the
correct local cover.

\begin{figure}
  \centering
  \begin{tikzpicture}[node distance=5mm]

    \newcommand{\drawbipartite}[1]{
      \node[smallvertex,fill=black] (v#1-1) {};
      \node[smallvertex,fill=black,right=of v#1-1] (v#1-2) {};
      \node[smallvertex,fill=black,right=of v#1-2] (v#1-3) {};
      \node[smallvertex,fill=black,right=of v#1-3] (v#1-4) {};

      \node[smallvertex,above left=1cm and 2.5mm of v#1-1] (u#1-1) {};
      \node[smallvertex,right=of u#1-1] (u#1-2) {};
      \node[smallvertex,right=of u#1-2] (u#1-3) {};
      \node[smallvertex,right=of u#1-3] (u#1-4) {};
      \node[smallvertex,right=of u#1-4] (u#1-5) {};

      \foreach \i in {1,2,3,4}{
        \foreach \j in {1,2,3,4,5}{
          \draw (v#1-\i) edge (u#1-\j);          
        }
      }      
    }
    \drawbipartite{1}    
    
    \begin{scope}[xshift=4.5cm]
      \drawbipartite{2}    
    \end{scope}

    \begin{scope}[xshift=11.25cm]
      \drawbipartite{3}    
    \end{scope}

    \draw (v1-4) edge (v2-1);
    \draw (v2-4) edge ($(v2-4) + (1.5,0)$);
    \draw (v3-1) edge ($(v3-1) - (1.5,0)$);
    \path (v2-4) -- (v3-1) node[midway] {$\cdots$};
  \end{tikzpicture}
  \caption[]{\label{fig:ohygraph} Oliveto-He-Yao graph. Optimal vertex
  cover is colored in black.}
\end{figure}

We repeated the comparison on these graphs, varying both $n$ and
$\epsilon$. In particular, for $\sqrt{n} \in \{5,6,7,8,9,10\}$ and
$\epsilon \in \{1/10,1/4,1/3\}$ we constructed an Oliveto-He-Yao graph
for $n$ and $\epsilon$. In the case that $\epsilon(\sqrt{n}-1)$ is not
an integer, each bipartite block is taken to be $K_{a,b}$ where $a$
and $b$ are the nearest integers to $(1-\epsilon)(\sqrt{n}-1)$ and
$1+\epsilon(\sqrt{n}-1)$, respectively. For each such graph, we ran
the (1+1)~EA and the \ea{k} 100 times and collected the number of
fitness evaluations required until the optimal vertex cover was
found. A budget of 96 hours was allocated to each run, and if the
optimum vertex cover was not found within this time budget, the
process was marked as a failure. The resulting empirical cumulative
distribution functions are plotted in
Figures~\ref{fig:ohy-ecdf-eps0.1}, \ref{fig:ohy-ecdf-eps0.25} and
\ref{fig:ohy-ecdf-eps0.3333}.

Smaller $\epsilon$ values correspond to graphs that are comparatively
easier to solve, since vertices belonging to the optimal cover are
incident to significantly more edges than those that are not.  For
example, when $n=25$ and $\epsilon=1/10$, the instance is a chain of 5
copies of the star graph $K_{1,4}$ connected by their central vertices
(see Figure~\ref{fig:ohy-easy}).
Clearly, on this graph, the (1+1)~EA can quickly make progress without
much difficulty. For example, once a feasible cover has been found,
there is always a reasonable chance to make a strictly improving
mutation if the central vertex of a suboptimal block is already in the
cover (i.e., the black vertices of Figure~\ref{fig:ohy-easy}),
otherwise it can easily add the central block vertex and remove any
vertex from a doubly-covered edge resulting in a cover of equal
fitness that can be improved with a single mutation as above. We leave
a more precise analysis of the (1+1)~EA on this graph as an exercise.
In any case, the performance of the (1+1)~EA clearly dominates the
\ea{k} for this particular graph as can be seen in Figure~\ref{fig:ohy-ecdf-n25-eps0.1}.

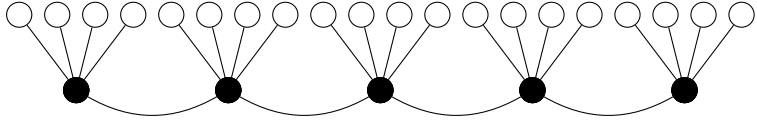
\begin{figure}
  \centering
  \begin{tikzpicture}[node distance=5mm,x=2cm]
    \def \noderadius {1.5cm}
    \def \starangle {30}
      \foreach \i in {1,2,3,4,5}
      {
        \foreach[count=\j] \x in {-0.375,-0.125,0.125,0.375}{
          \node[smallvertex,fill=black] (v\i0) at (\i,0) {};
          \node[smallvertex] (v\i\j) at ($(\i,1)+(\x,0)$) {};
          \draw (v\i0) edge (v\i\j);
        }
      }            
      \draw (v10) edge[bend right] (v20);
      \draw (v20) edge[bend right] (v30);
      \draw (v30) edge[bend right] (v40);
      \draw (v40) edge[bend right] (v50);      
    \end{tikzpicture}
\caption{\label{fig:ohy-easy}Easy Oliveto-He-Yao graph: $n=25$,
  $\epsilon=1/10$.}
\end{figure}

Another slight aberration is the Oliveto-He-Yao graph with $n=36$ and
$\epsilon=1/3$. In this instance, the values of
$(1-\epsilon)(\sqrt{n}-1)$ and $1+\epsilon\sqrt{n}$ rounded to the
nearest integer are both 3, thus this instance consists of a chain of
6 copies of $K_{3,3}$, and so therefore has multiple global
optima. This is reflected in the slightly improved behavior of the
(1+1)~EA seen in Figure~\ref{fig:ohy-ecdf-n36-eps0.333}. This is the only instance with this property.

On larger graphs, runs that failed to find an optimal solution during
the allotted budget are reflected in the plots where the rightmost
limit of the CDF falls short of one. We point out that this instance
class is not particularly easy from an FPT perspective, since the
optimal cover size is scaling linearly with the problem size: there
are $\sqrt{n}$ blocks of size $1+\epsilon\left(\sqrt{n}-1\right)$, meaning the
size of the optimal vertex cover is, up to rounding factors,
$\epsilon n + (1-\epsilon)\sqrt{n} = \Theta(n)$. Nevertheless, the
instance class provides a descriptive comparison between the
(1+1)~EA and the \ea{k}.

\begin{figure}
  \begin{subfigure}[b]{0.3\linewidth}
    \begin{tikzpicture}
    \begin{semilogxaxis}[%
      clip=false,
      xlabel={run time},
      xlabel near ticks,      
      legend pos=north west,
      legend cell align={left},
      legend style={draw=none,font=\scriptsize},
    enlargelimits=false,
    width=5.8cm,    
    ymin=0,ymax=1,    
    every axis plot/.style={very thick,no marks}]   
      \addplot table[x=runtime,y=freq]
      {data/eajar.oliveto-he-yao-5-0.1-ecdf.dat};
      \addplot[dashed,red] table[x=runtime,y=freq]
      {data/ea.oliveto-he-yao-5-0.1-ecdf.dat};
      \legend{\ea{k},(1+1)~EA}
\end{semilogxaxis}
\end{tikzpicture}
\caption{\label{fig:ohy-ecdf-n25-eps0.1}$n=25$, $m=24$, $k^{*}=5$}
  \end{subfigure}\hfill
\begin{subfigure}[b]{0.3\linewidth}
    \begin{tikzpicture}
    \begin{semilogxaxis}[%
      clip=false,
      xlabel={run time},
      xlabel near ticks,
    width=5.8cm,
    enlargelimits=false,
    ymin=0,ymax=1,    
    every axis plot/.style={very thick,no marks}]   
      \addplot table[x=runtime,y=freq]
      {data/eajar.oliveto-he-yao-6-0.1-ecdf.dat};
      \addplot[dashed,red] table[x=runtime,y=freq] {data/ea.oliveto-he-yao-6-0.1-ecdf.dat};
\end{semilogxaxis}
\end{tikzpicture}
\caption{$n=36, m=65, k^{*}=12$}
  \end{subfigure}\hfill
\begin{subfigure}[b]{0.3\linewidth}
    \begin{tikzpicture}
    \begin{semilogxaxis}[%
    clip=false,
    width=5.8cm,
    xlabel={run time},
    xlabel near ticks,
    enlargelimits=false,
    ymin=0,ymax=1,    
    every axis plot/.style={very thick,no marks}]   
      \addplot table[x=runtime,y=freq]
      {data/eajar.oliveto-he-yao-7-0.1-ecdf.dat};
      \addplot[dashed,red] table[x=runtime,y=freq] {data/ea.oliveto-he-yao-7-0.1-ecdf.dat};
\end{semilogxaxis}
\end{tikzpicture}
\caption{$n=49, m=76, k^{*}=14$}
  \end{subfigure}
\begin{subfigure}[b]{0.3\linewidth}
    \begin{tikzpicture}
    \begin{semilogxaxis}[%
      clip=false,
      xlabel={run time},
      xlabel near ticks,
    enlargelimits=false,
    width=5.8cm,    
    ymin=0,ymax=1,    
    every axis plot/.style={very thick,no marks}]   
      \addplot table[x=runtime,y=freq]
      {data/eajar.oliveto-he-yao-8-0.1-ecdf.dat};
      \addplot[dashed,red] table[x=runtime,y=freq]
      {data/ea.oliveto-he-yao-8-0.1-ecdf.dat};     
\end{semilogxaxis}
\end{tikzpicture}
\caption{$n=64, m=103, k^{*}=16$}
  \end{subfigure}\hfill
\begin{subfigure}[b]{0.3\linewidth}
    \begin{tikzpicture}
    \begin{semilogxaxis}[%
      clip=false,
      xlabel={run time},
      xlabel near ticks,
    width=5.8cm,
    enlargelimits=false,
    ymin=0,ymax=1,    
    every axis plot/.style={very thick,no marks}]   
      \addplot table[x=runtime,y=freq]
      {data/eajar.oliveto-he-yao-9-0.1-ecdf.dat};
      \addplot[dashed,red] table[x=runtime,y=freq] {data/ea.oliveto-he-yao-9-0.1-ecdf.dat};
\end{semilogxaxis}
\end{tikzpicture}
\caption{$n=81, m=134, k^{*}=18$}
  \end{subfigure}\hfill
\begin{subfigure}[b]{0.3\linewidth}
    \begin{tikzpicture}
    \begin{semilogxaxis}[%
    clip=false,
    width=5.8cm,
    xlabel={run time},
    xlabel near ticks,
    enlargelimits=false,
    ymin=0,ymax=1,    
    every axis plot/.style={very thick,no marks}]   
      \addplot table[x=runtime,y=freq]
      {data/eajar.oliveto-he-yao-10-0.1-ecdf.dat};
      \addplot[dashed,red] table[x=runtime,y=freq] {data/ea.oliveto-he-yao-10-0.1-ecdf.dat};
\end{semilogxaxis}
\end{tikzpicture}
\caption{$n=100, m=169, k^{*}=20$}
  \end{subfigure}

  \caption{\label{fig:ohy-ecdf-eps0.1}Empirical cumulative distribution functions for running time
  of the (1+1)~EA and the \ea{k} on Oliveto-He-Yao instances
  ($\epsilon=1/10$). The optimal cover size is denoted by $k^{*}$. The $y$-axes denote the proportion of instances solved by
a particular run time.}
\end{figure}

\begin{figure}
  \begin{subfigure}[b]{0.3\linewidth}
    \begin{tikzpicture}
    \begin{semilogxaxis}[%
      clip=false,
      xlabel={run time},
      xlabel near ticks,      
      legend pos=north west,
      legend cell align={left},
      legend style={draw=none,fill=none,font=\scriptsize},
    enlargelimits=false,
    width=5.8cm,    
    ymin=0,ymax=1,    
    every axis plot/.style={very thick,no marks},
    ]%
      \addplot table[x=runtime,y=freq]
      {data/eajar.oliveto-he-yao-5-0.25-ecdf.dat};
      \addplot[dashed,red] table[x=runtime,y=freq]
      {data/ea.oliveto-he-yao-5-0.25-ecdf.dat};
      \legend{\ea{k},(1+1)~EA}
\end{semilogxaxis}
\end{tikzpicture}
\caption{$n=25$, $m=34$, $k^{*}=10$}
  \end{subfigure}\hfill
\begin{subfigure}[b]{0.3\linewidth}
    \begin{tikzpicture}
    \begin{semilogxaxis}[%
      clip=false,
      xlabel={run time},
      xlabel near ticks,
    width=5.8cm,
    enlargelimits=false,
    ymin=0,ymax=1,    
    every axis plot/.style={very thick,no marks},
    ]%
      \addplot table[x=runtime,y=freq]
      {data/eajar.oliveto-he-yao-6-0.25-ecdf.dat};
      \addplot[dashed,red] table[x=runtime,y=freq] {data/ea.oliveto-he-yao-6-0.25-ecdf.dat};
\end{semilogxaxis}
\end{tikzpicture}
\caption{$n=36, m=53, k^{*}=12$}
  \end{subfigure}\hfill
\begin{subfigure}[b]{0.3\linewidth}
    \begin{tikzpicture}
    \begin{semilogxaxis}[%
    clip=false,
    width=5.8cm,
    xlabel={run time},
    xlabel near ticks,
    enlargelimits=false,
    ymin=0,ymax=1,    
    every axis plot/.style={very thick,no marks}]   
      \addplot table[x=runtime,y=freq]
      {data/eajar.oliveto-he-yao-7-0.25-ecdf.dat};
      \addplot[dashed,red] table[x=runtime,y=freq] {data/ea.oliveto-he-yao-7-0.25-ecdf.dat};
\end{semilogxaxis}
\end{tikzpicture}
\caption{$n=49, m=111, k^{*}=21$}
  \end{subfigure}
\begin{subfigure}[b]{0.3\linewidth}
    \begin{tikzpicture}
    \begin{semilogxaxis}[%
      clip=false,
      xlabel={run time},
      xlabel near ticks,
    enlargelimits=false,
    width=5.8cm,    
    ymin=0,ymax=1,    
    every axis plot/.style={very thick,no marks}]   
      \addplot table[x=runtime,y=freq]
      {data/eajar.oliveto-he-yao-8-0.25-ecdf.dat};
      \addplot[dashed,red] table[x=runtime,y=freq]
      {data/ea.oliveto-he-yao-8-0.25-ecdf.dat};     
\end{semilogxaxis}
\end{tikzpicture}
\caption{$n=64, m=127, k^{*}=24$}
  \end{subfigure}\hfill
\begin{subfigure}[b]{0.3\linewidth}
    \begin{tikzpicture}
    \begin{semilogxaxis}[%
      clip=false,
      xlabel={run time},
      xlabel near ticks,
    width=5.8cm,
    enlargelimits=false,
    ymin=0,ymax=1,    
    every axis plot/.style={very thick,no marks}]   
      \addplot table[x=runtime,y=freq]
      {data/eajar.oliveto-he-yao-9-0.25-ecdf.dat};
      \addplot[dashed,red] table[x=runtime,y=freq] {data/ea.oliveto-he-yao-9-0.25-ecdf.dat};
\end{semilogxaxis}
\end{tikzpicture}
\caption{$n=81, m=170, k^{*}=27$}
  \end{subfigure}\hfill
\begin{subfigure}[b]{0.3\linewidth}
    \begin{tikzpicture}
    \begin{semilogxaxis}[%
    clip=false,
    width=5.8cm,
    xlabel={run time},
    xlabel near ticks,
    enlargelimits=false,
    ymin=0,ymax=1,    
    every axis plot/.style={very thick,no marks}]   
      \addplot table[x=runtime,y=freq]
      {data/eajar.oliveto-he-yao-10-0.25-ecdf.dat};
      \addplot[dashed,red] table[x=runtime,y=freq] {data/ea.oliveto-he-yao-10-0.25-ecdf.dat};
\end{semilogxaxis}
\end{tikzpicture}
\caption{$n=100, m=219, k^{*}=30$}
  \end{subfigure}

  \caption{\label{fig:ohy-ecdf-eps0.25}Empirical cumulative distribution functions for running time
  of the (1+1)~EA and the \ea{k} on Oliveto-He-Yao instances
  ($\epsilon=1/4$). The optimal cover size is denoted by $k^{*}$. The $y$-axes denote the proportion of instances solved by
a particular run time.}
\end{figure}

\begin{figure}
\begin{subfigure}[b]{0.3\linewidth}
    \begin{tikzpicture}
    \begin{semilogxaxis}[%
      clip=false,
      xlabel={run time},
      xlabel near ticks,
      legend pos=north west,
      legend cell align={left},
      legend style={draw=none,fill=none,font=\scriptsize},
    width=5.8cm,
    enlargelimits=false,
    ymin=0,ymax=1,    
    every axis plot/.style={very thick,no marks}]   
      \addplot table[x=runtime,y=freq]
      {data/eajar.oliveto-he-yao-5-0.3333-ecdf.dat};
      \addplot[dashed,red] table[x=runtime,y=freq]
      {data/ea.oliveto-he-yao-5-0.3333-ecdf.dat};
      \legend{\ea{k},(1+1)~EA}
\end{semilogxaxis}
\end{tikzpicture}
\caption{$n=25, m=34, k^{*}=10$}
  \end{subfigure}\hfill
\begin{subfigure}[b]{0.3\linewidth}
    \begin{tikzpicture}
    \begin{semilogxaxis}[%
    clip=false,
    width=5.8cm,
    xlabel={run time},
    xlabel near ticks,
    enlargelimits=false,
    ymin=0,ymax=1,    
    every axis plot/.style={very thick,no marks}]   
      \addplot table[x=runtime,y=freq]
      {data/eajar.oliveto-he-yao-6-0.3333-ecdf.dat};
      \addplot[dashed,red] table[x=runtime,y=freq] {data/ea.oliveto-he-yao-6-0.3333-ecdf.dat};
\end{semilogxaxis}
\end{tikzpicture}
\caption{\label{fig:ohy-ecdf-n36-eps0.333}$n=36, m=59, k^{*}=18$}
  \end{subfigure}\hfill
\begin{subfigure}[b]{0.3\linewidth}
    \begin{tikzpicture}
    \begin{semilogxaxis}[%
      clip=false,
      xlabel={run time},
      xlabel near ticks,
    enlargelimits=false,
    width=5.8cm,    
    ymin=0,ymax=1,    
    every axis plot/.style={very thick,no marks}]   
      \addplot table[x=runtime,y=freq]
      {data/eajar.oliveto-he-yao-7-0.3333-ecdf.dat};
      \addplot[dashed,red] table[x=runtime,y=freq]
      {data/ea.oliveto-he-yao-7-0.3333-ecdf.dat};     
\end{semilogxaxis}
\end{tikzpicture}
\caption{$n=49, m=90, k^{*}=21$}
  \end{subfigure}
\begin{subfigure}[b]{0.3\linewidth}
    \begin{tikzpicture}
    \begin{semilogxaxis}[%
      clip=false,
      xlabel={run time},
      xlabel near ticks,
    width=5.8cm,
    enlargelimits=false,
    ymin=0,ymax=1,    
    every axis plot/.style={very thick,no marks}]   
      \addplot table[x=runtime,y=freq]
      {data/eajar.oliveto-he-yao-8-0.3333-ecdf.dat};
      \addplot[dashed,red] table[x=runtime,y=freq] {data/ea.oliveto-he-yao-8-0.3333-ecdf.dat};
\end{semilogxaxis}
\end{tikzpicture}
\caption{$n=64, m=127, k^{*}=24$}
  \end{subfigure}\hfill
\begin{subfigure}[b]{0.3\linewidth}
    \begin{tikzpicture}
    \begin{semilogxaxis}[%
    clip=false,
    width=5.8cm,
    xlabel={run time},
    xlabel near ticks,
    enlargelimits=false,
    ymin=0,ymax=1,    
    every axis plot/.style={very thick,no marks}]   
      \addplot table[x=runtime,y=freq]
      {data/eajar.oliveto-he-yao-9-0.3333-ecdf.dat};
      \addplot[dashed,red] table[x=runtime,y=freq] {data/ea.oliveto-he-yao-9-0.3333-ecdf.dat};
\end{semilogxaxis}
\end{tikzpicture}
\caption{$n=81, m=188, k^{*}=36$}
  \end{subfigure}\hfill
\begin{subfigure}[b]{0.3\linewidth}
    \begin{tikzpicture}
    \begin{semilogxaxis}[%
    clip=false,
    width=5.8cm,
    xlabel={run time},
    xlabel near ticks,
    enlargelimits=false,
    ymin=0,ymax=1,    
    every axis plot/.style={very thick,no marks}]   
      \addplot table[x=runtime,y=freq]
      {data/eajar.oliveto-he-yao-10-0.3333-ecdf.dat};
      \addplot[dashed,red] table[x=runtime,y=freq] {data/ea.oliveto-he-yao-10-0.3333-ecdf.dat};
\end{semilogxaxis}
\end{tikzpicture}
\caption{$n=100, m=249, k^{*}=40$}
  \end{subfigure}

  \caption{\label{fig:ohy-ecdf-eps0.3333}Empirical cumulative distribution functions for running time
  of the (1+1)~EA and the \ea{k} on Oliveto-He-Yao instances
  ($\epsilon=1/3$). The optimal cover size is denoted by $k^{*}$. The $y$-axes denote the proportion of instances solved by
a particular run time.}
\end{figure}

\section{Conclusion}
\label{sec:conclusion}

In this paper, we have presented a variant of the (1+1)~EA that employs a focused jump-and-repair operation for solving parameterized variants of graph problems that are closed under induced subgraphs in which we must find a feasible vertex set of size $k$. Rather than searching the space of vertex sets, we search in parallel for both a vertex set along with an induced subgraph for which that vertex set is feasible. Offspring with vertex sets that violate the cardinality constraint have the opportunity to be probabilistically repaired by the focused jump-and-repair step. This step can be successful when the result happens to be both cardinality feasible and solution feasible for the induced subgraph. We prove that this approach, the \ea{k}, is an FPT Monte-Carlo algorithm for $k$-\textsc{VertexCover} the $k$-\textsc{FVST}, and $k$-\textsc{OddCycleTranversal}. Moreover, we show that a simple restarting framework for the \ea{k} solves the minimum \textsc{VertexCover} problem in time $O(2^{OPT}n^2 \log n)$ where $OPT$ is the size of the optimal vertex cover. Ignoring polynomial factors, this upper bound is smaller by an exponential factor in $OPT$ than the best-known bound for FPT evolutionary algorithms on minimum \textsc{VertexCover}. 

One drawback to our proposed approach is that the jump-and-repair operation must be problem-tailored. In the case of the $k$-\textsc{VertexCover} problem, this repair operation is rather natural, as we are simply ensuring that the edges uncovered by removing elements from the current vertex cover would be covered again by neighborhood of the deleted elements. For $k$-\textsc{FVST}, and $k$-\textsc{OddCycleTransversal} the repair operation is somewhat more involved, as it requires a subroutine for solving the \textsc{LongestCommonSubsequence} problem or computing a maximum flow. However, we point out that the repair operation is the only problem-specific component needed by the \ea{k} apart from the fitness function, and we are confident that the general framework we have presented will make it easier for designing repair operators on other parameterized problems.

Several directions for future work remain open. So far, lower bounds
are missing on FPT evolutionary algorithms for
\textsc{VertexCover}. This makes it difficult to directly compare
algorithms, and it is crucial to establish lower bounds in order to
understand the complete picture. Further experiments that compare
empirical run times for both traditional EAs and Global SEMO would also be valuable.
Furthermore, benchmark instances that are harder to solve than the ostensibly easy random planted class would be valuable.
Since the proposed approach only requires the construction of a procedure that repairs infeasible solutions, it could be extended to other problems for which an effective repair operator can be designed. Finally, it may be relevant to determine to what extent a crossover operator could effectively simulate the jump operation.

\section*{Acknowledgements}
The authors acknowledge the Minnesota Supercomputing Institute at the University of Minnesota for providing resources that contributed to the research results reported within this paper. \url{http://www.msi.umn.edu}

\bibliographystyle{plain}
\bibliography{references}

\end{document}